\def\eqref#1{equation~\ref{#1}}
\def\1{\bm{1}}
\def\vb{{\bm{b}}}
\def\vc{{\bm{c}}}
\def\vd{{\bm{d}}}
\def\ve{{\bm{e}}}
\def\vf{{\bm{f}}}
\def\vg{{\bm{g}}}
\def\vh{{\bm{h}}}
\def\vi{{\bm{i}}}
\def\vo{{\bm{o}}}
\def\vr{{\bm{r}}}
\def\vs{{\bm{s}}}
\def\vx{{\bm{x}}}
\def\vz{{\bm{z}}}
\def\mA{{\bm{A}}}
\def\mB{{\bm{B}}}
\def\mC{{\bm{C}}}
\def\mD{{\bm{D}}}
\def\mI{{\bm{I}}}
\def\mJ{{\bm{J}}}
\def\mK{{\bm{K}}}
\def\mL{{\bm{L}}}
\def\mO{{\bm{O}}}
\def\mP{{\bm{P}}}
\def\mS{{\bm{S}}}
\def\mU{{\bm{U}}}
\def\mV{{\bm{V}}}
\def\mW{{\bm{W}}}
\DeclareMathAlphabet{\mathsfit}{\encodingdefault}{\sfdefault}{m}{sl}
\SetMathAlphabet{\mathsfit}{bold}{\encodingdefault}{\sfdefault}{bx}{n}
\def\gN{{\mathcal{N}}}
\def\sR{{\mathbb{R}}}
\newcommand\norm[1]{\left\lVert#1\right\rVert}
\theoremstyle{plain}
\newcommand*{\tran}{^{\mkern-1.5mu\mathsf{T}}}
\renewcommand{\eqref}[1]{~(\ref{#1})}
\newtheorem{theorem}{Theorem}%[section]
\newtheorem{corollary}{Corollary}     % are counted according to   %
\newtheorem{remark}{Remark}           % appear.                    %
\newtheorem{proposition}{Proposition}
\title{On the difficulty of learning chaotic dynamics with RNNs}
\author[1,2,*]{\textbf{Jonas M. Mikhaeil}}
\author[1,4*]{\textbf{Zahra Monfared}}
\author[1,2,3]{\textbf{Daniel Durstewitz}}
\affil[ ]{\texttt{j.mikhaeil@columbia.edu, $\{$zahra.monfared, daniel.durstewitz$\}$@zi-mannheim.de }}
\affil[1]{\footnotesize Department of Theoretical Neuroscience, Central Institute of Mental Health, Medical Faculty Mannheim, Heidelberg University, Mannheim, Germany}
\affil[2]{\footnotesize Faculty of Physics and Astronomy, Heidelberg University, Heidelberg, Germany}
\affil[3]{\footnotesize
Interdisciplinary Center for Scientific Computing, Heidelberg University 
%Bernstein Center for Computational Neuroscience Heidelberg-Mannheim
}
\affil[4]{\footnotesize
Department of Mathematics \& Informatics and Cluster of Excellence STRUCTURES, Heidelberg University, Heidelberg, Germany}
\affil[*]{\footnotesize
These authors contributed equally}
\begin{document}

\maketitle

\begin{abstract}
 Recurrent neural networks (RNNs) are wide-spread machine learning tools for modeling sequential and time series data. They are notoriously hard to train because their loss gradients backpropagated in time tend to saturate or diverge during training. This is known as the exploding and vanishing gradient problem. Previous solutions to this issue either built on rather complicated, purpose-engineered architectures with gated memory buffers, or - more recently - imposed constraints that ensure convergence to a fixed point or restrict (the eigenspectrum of) the recurrence matrix. Such constraints, however, convey severe limitations on the expressivity of the RNN. Essential intrinsic dynamics such as multistability or chaos are disabled. This is inherently at disaccord with the chaotic nature of many, if not most, time series encountered in nature and society. It is particularly problematic in scientific applications where one aims to reconstruct the underlying dynamical system. 
Here we offer a comprehensive theoretical treatment of this problem by relating the loss gradients during RNN training to the Lyapunov spectrum of RNN-generated orbits. We mathematically prove that RNNs producing stable equilibrium or cyclic behavior have bounded gradients, whereas the gradients of RNNs with chaotic dynamics always diverge. 
Based on these analyses and insights we suggest ways of how to optimize the training process on chaotic data according to the system's Lyapunov spectrum, regardless of the employed RNN architecture. \end{abstract}

\section{Introduction}
%--------------------------------------
%\begin{figure}[h]%
%\centering
%\includegraphics[scale=0.32]{iclr2022/Plots/IntroFigure/Global_stability-notebook 3.pdf}
%[width=\textwidth]
%\caption{Preliminary summary plot of the most important results of our paper}
%\end{figure}
%---------------------------------------
%%% Uses of RNNs
Recurrent neural networks (RNNs) are widely used across various fields in engineering and science for learning sequential tasks or modeling and predicting time series \citep{lipton_critical_2015}. Yet, they struggle when long-term temporal dependencies, very slow, or hugely varying time scales are involved \citep{hochreiter_thesis_1991,bengio1994learning,schmidt_identifying_2021,li_indrnn_2018,rusch_coupled_2021}. Time series or sequential data with such properties are, however, very common in fields like climate physics \citep{thomson_time_1990}, neuroscience \citep{fusi_neural_2007,russo_cell_2017}, ecology \citep{turchin_complex_1992}, or language processing \citep{cho_learning_2014}. %% Vanishing and exploding gradient problem
Training RNNs on such data is hard because the loss gradients backpropagated in time easily saturate or diverge in this process. This is commonly referred to as the exploding and vanishing gradient problem (EVGP) \citep{hochreiter_thesis_1991,bengio1994learning,pascanu_difficulty_2013}. %This makes learning complex temporal structure and long-term dependencies very challenging. 
%In fact, as discussed below, the chaotic nature of many time series data induces a \textit{principle} problem that cannot easily be overcome.

%%% Previous solutions: Architectures
One solution to the EVGP is based on specifically designed RNN architectures with gating mechanisms, such as long short-term memory (LSTM) \citep{hochreiter_long_1997} or gated recurrent units (GRU) \citep{cho-etal-2014-properties}. These architectures allow states at earlier time steps to more easily influence activity much later through a kind of protected memory buffer, thus alleviating the EVGP by structural design. In practice, such models need to be backed up by further techniques like gradient clipping to keep the gradients in check \citep{pascanu_difficulty_2013}. The relatively complex architectural design of these networks impedes their mathematical analysis and requires reverse engineering after training \citep{maheswaranathan_reverse_2019,monfared_existence_2020,monfared_transformation_2020,schmidt_identifying_2021}. Partly to forego these complications, a variety of other solutions has been proposed recently, imposing restrictions on the recurrence matrix to bound the gradients \citep{Arjovsky_2016,chang_antisymmetricrnn_2019}, or enforcing global stability by design or regularization \citep{erichson_lipschitz_2020,kolter_learning_2019}. Often these procedures dramatically curtail
%% Importance of chaos
the expressivity of the RNN \citep{kerg_non-normal_2019,orhan_improved_2020,schmidt_identifying_2021}; in particular, they rule out chaotic dynamics (for reasons discussed further below).
 
%From a dynamical systems perspective, however, simpler models, i.e. vanilla RNNs, are advantageous as they can be more easily analyzed and interpreted \citep{}. Simple RNN architectures, such as piecewise linear RNNs (PLRNN) \citep{Koppe_nonlin_2019,schmidt_identifying_2020}, can also alliviate the need for complicated reverse-engineering of RNNs after training .\\
%%% Previous solutions: continued
This is at odds with the plethora of chaotic phenomena in nature, engineering, and society. Chaotic dynamics are commonplace, almost default in any complex physical or biological system. This includes scientific areas as diverse as neuroscience \citep{durstewitz_dynamical_2007,vreeswijk_chaos_1996}, physiology
%\citep{zbilut_physiological_1995}
\citep{kesmia_control_2020}, geophysics
%\citep{abarbanel_analysis_1996}
\citep{sivakumar_chaos_2004}, climate systems \citep{tziperman_controlling_1997}, astrophysics
%\citep{sussman_numerical_1988}
\citep{laskar_chaotic_1993}, ecology \citep{duarte_quantifying_2010}, chemical reactions \citep{chaos_chemistry_1993}, cell \citep{olsen_chaos_1977} or population \citep{Wishart_chaos_87} biology. Chaotic phenomena are also crucial for the understanding of societal and epidemiological processes, such as the spread of diseases
%\citep{olsen_oscillations_1988}
\citep{mangiarotti_chaos_2020}, or in economics \citep{faggini_chaotic_2014}. They are further relevant in purely technical contexts such as electrical engineering
%\citep{de_feo_bifurcations_2003}
\citep{tchitnga_novel_2019,kamdjeu_kengne_dynamics_2021}
%\citep{de_feo_self-emergence_2003,feo_pwl_2005})
or laser optics
%\citep{haken_analogy_1975}
\citep{kantz_nonlinear_1993}. They have even been suggested to play an up to now largely neglected, but potentially very significant role in speech recognition \citep{sabanal_fractal_1996} and natural language processing \citep{inoue_transient_2021}.
Hence, in almost any practical setting, chaotic phenomena abound. They cannot, in general, be ignored when devising RNN training algorithms.

%% Our theoretical contribution
Here we offer a comprehensive theoretical treatment of the relation between RNN dynamics and the behavior of the loss gradients during training. We find a close connection between an RNN's loss gradients and the largest Lyapunov exponent of its freely generated orbits. %freely generated by the RNN. 
We mathematically prove that RNNs producing stable fixed point or cyclic behavior have bounded gradients. Crucially, however, the loss gradients of RNNs producing chaotic dynamics always diverge. 
Hence, the chaotic nature of many time series data induces a \textit{principle} problem, and, %Thus, 
despite significant efforts in the past to solve the EVGP, 
training RNNs on such data remains an open issue.
%it is still an open issue how to train RNNs on chaotic data. 
%While our main contribution in this work is of theoretical nature, 
We illustrate the implications of our theory for RNN training on several simulated and empirical chaotic time series, and adapt the  %old idea of \textit{sparsely forced Back-Propagation Through Time (BPTT)} 
idea of \textit{sparsely forced} Back-Propagation Through Time (BPTT) 
%idea of \textit{sparsely forced Back-Propagation Through Time (BPTT)} 
as a simple yet effective remedy that enables to learn the underlying dynamics despite exploding gradients. 
%While our main contribution in this work is of theoretical nature, we propose an easy, yet effective training technique, \textit{sparsely forced BPTT}, for chaotic data which is able to learn the underlying dynamics despite exploding gradients. 
%Our method is similar in spirit to teacher-forcing \citep{williams_learning_1989,pearlmutter_1990} and mutliple-shooting approaches \citep{voss_nonlinear_2004} in that it forces the trajectory of the RNN onto true observations while training. Instead of forcing the system every time step, we only supply the teacher signal every $\tau$ time steps, which we call the \textit{learning interval}. If $\tau$ is chosen correctly, this ensures that a mean-squared error loss remains informative despite exponential divergence of close by trajectories. At the same time, gradients are truncated in a principled fashion. Empirically, we find that an initial estimate for the optimal learning interval $\tau$ can be chosen on the basis of the predictability time $\ln(2)/\lambda_{\max}$ based on the maximal Lyapunov exponent. The general ideas in this paper are illustrated in Fig. 1***.
%---------------------------------
\vspace{-.2cm}
\section{Related works}
\textit{Exploding and vanishing gradients.}
%To successfully train RNNs on time series with long temporal dependencies one often has to deal with exploding and vanishing gradients \citep{ hochreiter_thesis_1991,bengio1994learning,pascanu_difficulty_2013}.
While `classical' remedies of the EVGP \citep{ hochreiter_thesis_1991,bengio1994learning,pascanu_difficulty_2013} rest on purpose-tailored architectures with gating mechanisms, which safeguard information flow across longer temporal distances \citep{hochreiter_long_1997,cho-etal-2014-properties}, %These architectures, such as long short-term memory (LSTM) \citep{hochreiter_long_1997} or gated recurrent units (GRU)  \citep{cho-etal-2014-properties}, allow past states to influence later activity more directly with an additional memory mechanism, thus alleviating vanishing gradients by design. 
%As exploding gradients still pose a problem, further techniques, such as gradient clipping \citep{pascanu_difficulty_2013}, need to be employed to solve the EVGP.  
the focus has recently shifted to simpler RNNs that address the EVGP by restricting the recurrence matrix to be orthogonal \citep{henaff_recurrent_2016,helfrich_orthogonal_2018,jing_gated_2019}, unitary \citep{Arjovsky_2016}, or antisymmetric \citep{chang_antisymmetricrnn_2019}, or by ensuring globally stable fixed point solutions \citep{Kag2020RNNs,kag_time_2021}, for example through co-trained Lyapunov functions \citep{kolter_learning_2019}. However, all these approaches impose strong limitations on the dynamical repertoire of the RNN, enforcing global convergence to fixed points or simple cycles.\footnote{We make this point more formal %in Proposition \ref{thm:ortho-rnn} 
in Appx. \ref{thm_ortho_rnn}.}
%\ref{app} (subsection \ref{thm_ortho_rnn}).} %restrict the asymptotic behavior of the model to be stable and 
In doing so, they drastically reduce the expressiveness of these models  \citep{kerg_non-normal_2019,orhan_improved_2020}. To address this problem, \citet{erichson_lipschitz_2020} somewhat relaxed the constraints on the recurrence matrix by introducing a skew-symmetric decomposition combined with a Lipschitz condition on the activation function. Another recent approach discretizes oscillator ordinary differential equations (ODEs) to arrive at a stable system of coupled \citep{rusch_coupled_2021} or independent \citep{rusch_unicornn_2021} oscillators which increase the RNN's expressiveness while bounding its gradients. By design (and as acknowledged by the authors), neither of these architectures is capable of producing chaotic dynamics, however, as the underlying ODEs do not allow for exponential divergence of close-by trajectories (a prerequisite for chaos). Given these often principle limitations of parametrically or dynamically strongly constrained models, a fruitful direction may be to modify the training process itself, e.g. through modified or auxiliary loss functions \citep{trinh2018learning,schmidt_identifying_2021}, or special procedures for parameter updating \citep{pmlr-v139-kag21a} or loss truncation \citep{williams_learning_1989,menick2021practical}. Our empirical evaluation will follow up on such ideas, but also highlight that simple loss truncation, windowing, or architectural solutions like LSTMs are not sufficient. 
%Hence, as it currently stands, only gated architectures (LSTM, GRU) or regularized PLRNNs \citep{schmidt_identifying_2021} are in principle able to produce chaotic dynamics, but do not fully alleviate the exploding gradient problem yet (as will be shown below).
%{\color{red} Besides architectural solutions, there has been a recent effort to avoid the EGVP by modifying the training by introducing unsupervised auxiliary losses in a supervised setting \citep{trinh2018learning} or  by  sequentially updating parameters forward through time \citep{pmlr-v139-kag21a}. Other training techniques truncate the loss gradients \citep{williams_learning_1989,menick2021practical} to arrive at more computationally efficient algorithms but do not investigate the impact of this in relation to the EGVP.  }

\textit{Learning dynamical systems.}
%\textit{Similar training techniques}
Surprisingly disconnected from the work on the EVGP and learning long-term dependencies, a huge and long-standing literature deals with training RNNs on nonlinear dynamical systems (DS) \citep{pearlmutter_1990,trischler_synthesis_2016,vlachas_learning_2020}, including chaotic systems like the famous Lorenz equations \citep{lorenz_deterministic_1963} or chaotic turbulence in fluid dynamics \citep{li_fourier_2021,Pathak18}. %\textcolor{blue}{Of those, methods based on reservoir computing \citep{Pathak18} might potentially be a little less troubled by the EVGP if they start with a (static) reservoir that already exhibits chaotic dynamics to begin with (this comes with other caveats, however, see \citep{brenner22}.} 
%\textcolor{blue}{
Of those, methods based on reservoir computing \citep{Pathak18} are special in that they start with a large complex dynamical repertoire to begin with for which an optimal mapping onto the observations is learned, together with a feedback to the reservoir (see also \citep{brenner22} for issues associated with this strategy in the context of DS reconstruction). 
Teacher forcing (TF; \citep{williams_learning_1989,pearlmutter_1990,doya_bifurcations_1992}, see also \citep{Goodfellow-et-al-2016}) is one of the earliest techniques introduced to keep RNN trajectories on track while training. %It was mainly motivated by the observation that learned orbits generated by some dynamical system tended to be unstable after training via back-propagation through time (BPTT) or real time recurrent learning (RTRL) . 
The idea behind TF is to simply replace RNN states by observations when available, thereby also effectively cutting off the gradients. %{\color{red} TF is thus an efficient way of training RNNs which have connections from their outputs leading back to the model \citep{Goodfellow-et-al-2016}. }
TF essentially derives from ideas in dynamical control theory, and adaptive schemes that increasingly hand over control to the RNN throughout training have been devised \citep{abarbanel_predicting_2013,abarbanel_machine_2018,bengio_scheduled_2015}.
%%-----------------------------------
%{\color{red} To improve the generative dynamics of RNNs a scheduled version of TF was proposed \citep{bengio_scheduled_2015}, that in training gradually shifts away from a fully guided scheme. \textit{Sparsely forced BPTT}, the method we propose here, similarly focuses on burdening the RNNs generative capabilities as much as possible by only sparsely supplying the teacher signal, but generalizes the core idea of TF to RNNs with hidden-to-hidden connections by learning a mapping from observations to the RNNs hidden state. }
A related technique from the control theory literature is ``multiple shooting'' \citep{voss_nonlinear_2004}: Here the whole observed time series is chopped into chunks, and for each chunk of trajectory a new initial condition is estimated. Explicit constraints ensure continuity between the separate trajectory bits during optimization. State space models and the Expectation-Maximization algorithm became popular particularly in the 90es for uncovering the latent dynamics underlying a set of time series observations \citep{ghahramani_learning_1998}, and remain an important tool until today \citep{durstewitz_advanced_2017,Koppe_nonlin_2019}. Most recently, approaches based on variational inference and the reparameterization trick \citep{kingma_auto-encoding_2014}, like sequential variational autoencoders (SVAE), gained in popularity for DS approximation \citep{hernandez_nonlinear_2020,bommer2021identifying}. %BommerKramer***. 
``Deterministic'' RNNs (i.e., with latent states not treated as random variables), like conventional LSTMs \citep{vlachas_data-driven_2018}, remain top choices for DS reconstruction, however.  

Although connections between DS ideas and loss gradients have been drawn early on \citep{bengio1994learning}, so far only particular scenarios (like fixed point attractors) have been considered. Closest to our work is recent work by \citet{schmidt_identifying_2021}, where non-divergence of loss gradients is established when RNNs converge to fixed points or cycles. However, this was done only for the particular class of piecewise-linear RNNs (PLRNNs), more restrictive conditions for cycles were imposed than assumed here, and - most importantly - the chaotic case on which we focus here was not considered. %Another recent study
Recent studies
\citep{engelken_lyapunov_2020,vogt_2022} also point out the general connections between Lyapunov exponents and loss gradients that we develop in sect. \ref{prelim}, 
%but does not provide any in-depth theoretical treatment, proofs, 
but do not provide any in-depth theoretical treatment, proofs,
%, incl. specific proofs w.r.t. the behavior of the loss gradients, 
or empirical evaluation of methods to alleviate exploding gradients in training, as we do here. Thus, a systematic theoretical framework that relates RNN dynamics more generally, and across a range of different RNN architectures, to the behavior of its training gradients, is still lacking so far. %to our knowledge.
%######################################
\section{Theoretical analysis: Relation between RNN dynamics and loss gradients}\label{sec:3}
In our analysis, we will cover all major types of %system 
asymptotic dynamics (fixed points, cycles, chaos, and quasi-periodicity), and mathematically investigate their implications for the loss gradients. We will do this for all major classes of RNNs, including standard RNNs with largely arbitrary activation function, LSTMs, GRUs, and PLRNNs. 
%For this we will consider a generic form of RNNs with largely arbitrary activation function, and will remark on more specific cases like LSTMs or PLRNNs subsequently. 
The next section will first develop and illustrate the basic intuition behind the relations between RNN dynamics and loss gradients. 
%------------------------------
\subsection{Preliminaries: RNN dynamics and loss gradients}\label{prelim}
%Before we dive into the specifics of our theoretical treatment and mathematical results, let us first develop some intuition about why and how RNN dynamics and the behavior of loss gradients during training are coupled. 
Formally, all popular RNN architectures, including LSTMs, GRUs, or PLRNNs, are discrete time %(usually non-autonomous) 
DS, defined by a (first-order-Markovian) recursive prescription for the temporal evolution of the latent states $\vz_t\in \mathbb{R}^M$ of the general form 
%-------------------------------
\begin{align}\label{eq-rnn}
\vz_t = F_{\boldsymbol\theta}(\vz_{t-1},\vs_{t}), \, 
%=: \, f\big( \mW \vz_{t-1} \,+\, g( \vx_t ) \big),
\end{align}
%-------------------------------
where $\vs_t \in  \mathbb{R}^N$ is the input at time $t$ and $\boldsymbol\theta$ are RNN parameters. Map $F_{\boldsymbol\theta}$ may be instantiated by 
%take on the specific form of 
any of the common RNN architectures: For instance, for standard RNNs we have $F_{\boldsymbol\theta}(\vz_{t-1},\vs_{t}) = f\big( \mW \vz_{t-1} \,+\, \mB \vs_t \, + \,\vh \big)$, where $f$ is an element-wise activation function like $\tanh$ or a rectified linear unit (ReLU), $\mW$ a connection matrix, matrix $\mB$ weighs the inputs, and $\vh$ is the usual bias term (see sect. \ref{sec:plrnn}--\ref{thm_ortho_rnn} for the definition of other RNN models explored here). %For LSTMs, $F_{\boldsymbol\theta}(\vz_{t-1},\vs_{t})$ looks considerably more complicated, and the latent states $\vz_t = (\vh_t,\vc_t)$ are given by a concatenation of two different types of dynamical variables (namely, hidden and cell states; see Appx. \ref{lstm}).
%$\vz_t=F_{\boldsymbol\theta}(\vz_{t-1},\vs_{t})$ (where $\boldsymbol\theta$ are the RNN parameters), that is discrete time (usually non-autonomous) dynamical systems. For the purpose of illustration, let us consider vanilla RNNs defined by the following recursive prescription for the temporal evolution of the latent (hidden) states: 
%where $\vz_t\in \mathbb{R}^M$ is the latent state at time $t$, $\vs_t \in  \mathbb{R}^N$ is the input at time $t$, $\vh \in \mathbb{R}^M $ is a bias term and $\, f \,$ is an element-wise  activation function like $\tanh$ or a rectified linear unit (ReLU). For LSTMs, $F_{\boldsymbol\theta}(\vz_{t-1},\vs_{t})$ looks considerably more complicated, but the general results we will develop in this and the following sections hold for all major classes of RNNs.

Assuming we start at some initial value $\vz_1 \in \mathbb{R}^M$, and given a sequence of external inputs $\mS=\{\vs_t\}$, we can recursively rewrite eq. \eqref{eq-rnn} as
%--------------------------------
\begin{align}\label{}
%\vz_T = \bigg(\bigodot_{i=1}^{T-1} \, f\big( \mW \cdot \,+\, \mB \vx_{i+1} \, + \,\vh \big)\bigg)(\vz_1).    
\vz_T = F_{\boldsymbol\theta}(F_{\boldsymbol\theta}(F_{\boldsymbol\theta}(...F_{\boldsymbol\theta}(\vz_{1},\vs_{2})...)))\, =: \, F_{\boldsymbol\theta}^{T-1}(\vz_{1},\{\vs_{t}\}).% =
%F_{\boldsymbol\theta}^{T-1}(\vz_{1},\vs_{2}).% =\bigg(\bigcircop_{i=1}^{T-1} \, f\big( \mW \cdot \,+\, \mB \vs_{i+1} \, + \,\vh \big)\bigg)(\vz_1). 
\end{align}
%-----------------------------------
In DS theory, we characterize the long-term behavior of such sequences by its spectrum of Lyapunov exponents. The Lyapunov exponents estimate the exponential growth rates in different local directions of the system's state space, and the largest Lyapunov exponent gives the dominant exponential behavior. Let us denote the system's Jacobian at time $t$ by 
%The Jacobian of \eqref{eq-rnn} at time $t$ is given by 
%----------------------
\begin{align}\label{eq:jacobian}
\mJ_t \, := \, \frac{\partial F_{\boldsymbol\theta}(\vz_{t-1},\vs_{t})}{\partial \vz_{t-1}} = \,  \frac{\partial \vz_t}{\partial \vz_{t-1}}.% \,= \, \mW \, diag \big( f^{\prime}\big( \mW \vz_{t-1} \,+\,  \mB \vs_t \, + \,\vh\big) \big),
\end{align}
%-----------------------
%For instance, for standard RNNs we would have $\mJ_t = \mW \, diag \big( f^{\prime}\big( \mW \vz_{t-1} \,+\,  \mB \vs_t \, + \,\vh\big) \big)$, where $diag$ %$diag \big( f^{\prime}\big( \mW \vz_{t-1} \,+\,  \mB \vs_t \, + \,\vh\big) \big)$ 
%denotes a diagonal matrix for which the $i$-th diagonal entry is the derivative of $f$  w.r.t. $\vz_{i,t-1}$. 
Then, the maximum Lyapunov exponent along an RNN trajectory $\{\vz_1,\vz_2,\cdots,\vz_T, \cdots\}$ is defined as
%----------------------------------
\begin{align}\label{eq:lyap}
\lambda_{max} \,:= \, %\lim_{T \rightarrow \infty} \log %\left( \norm{\prod_{r=0}^{T-2}  \mJ_{T-r} } \right)^{1/T} =
%\norm{\prod_{r=0}^{T-2}  \mJ_{T-r} }^{1/T} = 
\lim_{T \rightarrow \infty} \frac{1}{T} 
\log   \norm{\prod_{r=0}^{T-2}  \mJ_{T-r} },
\end{align}
%------------------------------------
where ${\rVert \cdot \lVert}$ denotes the spectral norm (or any subordinate norm) of a matrix. If $\lambda_{max}<0$ nearby trajectories will ultimately converge to a fixed point or cycle, while for $\lambda_{max}>0$ (a necessary condition for chaos) initially nearby trajectories will exponentially separate, i.e. we will have divergence along one (or more) directions in state space. This accounts for the sensitive dependence on initial conditions in chaotic systems.
Now let $\mathcal{L}(\mW,\mB,\vh)$ %$L(\theta,***)$ 
be some loss function employed for RNN training that decomposes in time as $\mathcal{L}\,= \,\sum_{t=1}^T \mathcal{L}_t$.
%---
Suppose we fancy BPTT as our training algorithm (similar derivations could be performed for Real Time Recurrent Learning [RTRL]), we recursively develop the loss gradients w.r.t. some RNN parameter $\theta$ 
%$\boldsymbol\theta$ 
in time (i.e., across layers of the RNN unrolled in time) as %\footnote{Here we just consider the weight parameters $\mW$.}
%---------------------------------------
\begin{align}\label{eq:loss_grad}
%
%\frac{\partial \mathcal{L}}{\partial w_{ij}} \,=\, \sum_{t=1}^T %\frac{\partial \mathcal{L}_t}{\partial w_{ij}},
\frac{\partial \mathcal{L}}{\partial \theta} \,=\, \sum_{t=1}^T \frac{\partial \mathcal{L}_t}{\partial\theta} \quad \textrm{with} \quad
\frac{\partial \mathcal{L}_t}{\partial \theta}
\, = \, \sum_{r=1}^t \frac{\partial \mathcal{L}_t}{\partial \vz_t}\, \frac{\partial \vz_t}{\partial \vz_r} \, \frac{\partial^{+} \vz_r}{\partial \theta}, 
\end{align}
%--------------------------
and
\vspace{-.3cm}
%------------------------
\begin{align}\nonumber
 \frac{\partial \vz_t}{\partial \vz_r} &\, = \,  \frac{\partial \vz_t}{\partial \vz_{t-1}}\, \frac{\partial \vz_{t-1}}{\partial \vz_{t-2}}\, \cdots \, \frac{\partial \vz_{r+1}}{\partial \vz_{r}} 
\\[1ex]\label{eq:prod_jacob}
& \, = \, \prod_{k=0}^{t-r-1}  \frac{\partial \vz_{t-k}}{\partial \vz_{t-k-1}} \, = \, \prod_{k=0}^{t-r-1} \mJ_{t-k},
 %\\[1ex]\label{}
% %
%& \, = \, \prod_{r=0}^{T-k-1} \mW \, diag \big( f^{\prime}\big( \mW \vz_{T-r-1} \,+\,  g( \vx_{T-r}) \big) \big),
\end{align}
%-----------------------------------
where $\partial^{+}$ denotes the immediate derivative. Now observe that the behavior of the loss gradients crucially depends on the \textit{product series} of Jacobians in eqn.\eqref{eq:prod_jacob}: %\citep{bengio1994learning}
If the maximum absolute eigenvalues of the Jacobians $\mJ_{t}$ will, in the geometric mean, be larger than $1$ (i.e., $\norm{\prod_{r=0}^{T-2} \mJ_{T-r} }^{1/T}>1$), gradients will explode as $T \rightarrow \infty$, while they will saturate if $\norm{\prod_{r=0}^{T-2} \mJ_{T-r} }^{1/T}<1$. Thus, the key point to note is that the same terms that occur in the definition of the Lyapunov spectrum, eqn.\eqref{eq:lyap}, resurface in the loss gradients, eqn. \eqref{eq:loss_grad} \& \eqref{eq:prod_jacob}. This accounts for the tight links between system dynamics and gradients.% %The detailed mathematical picture is both more complicated and more interesting, however, as we will work out in the following sections.
\subsection{Fixed points and cyclic dynamics}
%
%---------------------------------
Let us start by considering the simplest types of limit dynamics that can occur in RNNs (or any discrete-time DS): fixed points and cycles. In fact, by far most of the literature on global stability in RNNs and on loss gradients focused on just fixed points \citep{chang_antisymmetricrnn_2019,kolter_learning_2019, erichson_lipschitz_2020}, with only few authors who recently started to also connect cyclic behavior to loss gradients \citep{schmidt_identifying_2021,rusch_coupled_2021}. Recall that a fixed point of a recursive map $\vz_t=F(\vz_{t-1})$ is defined as a point $\vz^*$ for which we have $\vz^*=F(\vz^*)$.\footnote{From here on we will suppress the explicit dependence on external inputs $\vs_t$ notation-wise, see Remark \ref{rm-forcing}.} Likewise, a $k$-cycle ($k>1$) is a set of temporally consecutive periodic points $P_k:=\{\vz_{t_1}, \vz_{t_2}, \dots, \vz_{t_k}\}=\{\vz_{t_1}, F(\vz_{t_1}), \dots, F^{k-1}(\vz_{t_1})\}$ that we obtain from recursive application of the map such that each of the cyclic points $\vz_{t_r} \in P_k$ is a fixed point of the $k$ times iterated map $F^{k}$ (with $k$ being the smallest positive integer for which this holds). 
%Likewise, a $k$-cycle ($k>1$) is a set of temporally consecutive periodic points $P_k:=\{\vz_{t_1}, \vz_{t_2}, \dots, \vz_{t_k}\}=\{\vz_{t_1}, F(\vz_{t_1}), \dots, F^{k-1}(\vz_{t_1})\}$, where $\vz \in P_k$ are the fixed points of the $k$ times iterated map $F^{k}$ (with $k$ being the smallest positive integer for which this holds)
To simplify the subsequent treatment, we will collectively refer to fixed points and cycles as $k$-cycles ($k \geq 1$). Further recall that a fixed point or $k$-cycle is called \textit{stable} if the maximum absolute eigenvalue of the Jacobian evaluated at that point is smaller than $1$, \textit{neutrally stable} if exactly $1$, and \textit{unstable} otherwise. Although the results we develop in this and the following sections will hold more widely, we will restrict our attention to recursive maps $F_{\boldsymbol\theta}$ from the class of RNNs $\mathcal{R}=\{\texttt{standardRNN}, \,\texttt{LSTM},\,\texttt{GRU},\, \texttt{PLRNN}\}$ (see Appx. \ref{pre-all} for details).

%---------------------------------------------------------
Based on the observations made in the previous sections we can state the following theorem that links RNN dynamics and loss gradients:
%-----------------------------------------
\begin{theorem}\label{thm-k-2}
Consider an %autonomous 
RNN $F_{\boldsymbol\theta} \in \mathcal{R}$ parameterized by $\boldsymbol\theta$, %given by \eqref{eq-rnn}, 
and assume that it converges to a stable fixed point or $k$-cycle $\Gamma_k$ $(k \geq 1)$ with $\mathcal{B}_{\Gamma_{k}}$ as its basin of attraction. %\footnote{The basin of attraction is defined as the set of all %initial conditions 
%points from which orbits converge to the resp. attractor.}. 
Then for every $\vz_1 \in \mathcal{B}_{\Gamma_{k}}$ (i) the Jacobian $\frac{\partial \vz_T}{\partial \vz_1}$ exponentially vanishes as $T \to \infty$; (ii) for $\Gamma_{k}$ the tangent vectors $\frac{\partial \vz_T}{\partial \theta}$
and thus the gradient of the loss function, $\frac{\partial \mathcal{L}_T}{\partial \boldsymbol{\theta}}$, 
will be bounded from above, i.e. will not diverge for $T \to \infty$; and (iii) for the PLRNN \eqref{eq-plrnn} both $\norm{\frac{\partial \vz_T}{\partial \theta}}$ and  $\norm{\frac{\partial \mathcal{L}_T}{\partial \boldsymbol{\theta}}}$ will remain bounded for every $\vz_1 \in \mathcal{B}_{\Gamma_{k}}$ as $T \to \infty$.
\end{theorem}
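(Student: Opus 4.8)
The plan is to trace all three claims back to a single quantitative estimate: along any orbit with $\vz_1\in\mathcal{B}_{\Gamma_k}$, the products of Jacobians $\prod_{i=0}^{t-r-1}\mJ_{t-i}$ appearing in \eqref{eq:prod_jacob} contract geometrically in $t-r$, uniformly once the orbit has entered a fixed neighbourhood of $\Gamma_k$. Claims (i)--(iii) then drop out by summing geometric series.

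For part~(i) I would work with the monodromy matrix of the cycle. Writing $\vz^{(1)},\dots,\vz^{(k)}$ for the cyclic points of $\Gamma_k$ and $\mM:=DF_{\boldsymbol\theta}(\vz^{(k)})\cdots DF_{\boldsymbol\theta}(\vz^{(1)})$ for the one-period (monodromy) matrix, stability of $\Gamma_k$ means $\rho(\mM)<1$. Fix $q$ with $\rho(\mM)<q<1$ and, by the adapted-norm lemma, a vector norm on $\mathbb{R}^M$ whose induced matrix norm $\norm{\cdot}_{*}$ satisfies $\norm{\mM}_{*}<q$. Since $\vz_t\to\Gamma_k$ and $DF_{\boldsymbol\theta}$ is continuous along the eventual orbit — for the PLRNN it is locally constant once the orbit has entered the linear regions containing the cyclic points, which we may assume are interior points — the aligned $k$-blocks $\mJ_{t}\cdots\mJ_{t-k+1}=DF^{k}_{\boldsymbol\theta}(\vz_{t-k})$, for $t$ in a fixed residue class $\bmod\,k$, all converge to one matrix conjugate to $\mM$; hence there is $T_0$ after which every such block has $\norm{\cdot}_{*}<q$. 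Splitting $\frac{\partial \vz_T}{\partial \vz_1}=\prod_{i=0}^{T-2}\mJ_{T-i}$ into a bounded initial chunk of at most $T_0$ factors (each $\mJ_t$ is bounded on the compact orbit), roughly $(T-T_0)/k$ contracting blocks, and a bounded remainder gives $\norm{\frac{\partial \vz_T}{\partial \vz_1}}_{*}\le C\,q^{\lfloor (T-T_0)/k\rfloor}\to 0$, and equivalence of matrix norms transfers this to the spectral norm, which is claim~(i).

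For part~(ii) I would expand the tangent vector as $\frac{\partial \vz_T}{\partial\theta}=\sum_{r=1}^{T}\big(\prod_{i=0}^{T-r-1}\mJ_{T-i}\big)\frac{\partial^{+}\vz_r}{\partial\theta}$. Confinement of the orbit to a compact set (the closure of $\{\vz_t\}\cup\Gamma_k$) together with the structure of $F_{\boldsymbol\theta}\in\mathcal{R}$ makes $\frac{\partial^{+}\vz_r}{\partial\theta}$ uniformly bounded, $\norm{\frac{\partial^{+}\vz_r}{\partial\theta}}\le C_1$. Applying the estimate of~(i) with base point $r$ in place of $1$ yields $\norm{\prod_{i=0}^{T-r-1}\mJ_{T-i}}\le C\,q^{\lfloor (T-r-T_0)/k\rfloor}$ for $r\ge T_0$ and $\le C'$ for $r<T_0$, so $\norm{\frac{\partial \vz_T}{\partial\theta}}\le T_0C'C_1 + CC_1\sum_{j\ge0}q^{\lfloor j/k\rfloor}<\infty$, uniformly in $T$; since $\mathcal{L}_T$ is evaluated at the bounded states $\vz_T$ (e.g. a quadratic reconstruction loss against bounded observations), $\norm{\frac{\partial\mathcal{L}_T}{\partial\vz_T}}$ is bounded and hence $\norm{\frac{\partial\mathcal{L}_T}{\partial\boldsymbol\theta}}\le\norm{\frac{\partial\mathcal{L}_T}{\partial\vz_T}}\norm{\frac{\partial \vz_T}{\partial\theta}}$ is bounded, with the argument specialising to $T_0=0$ when $\vz_1\in\Gamma_k$. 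For part~(iii) I would exploit the piecewise-affine form of the PLRNN: there are only finitely many Jacobians $\mA+\mW\mathcal{D}$ with $\mathcal{D}\in\{0,1\}^{M\times M}$ diagonal, the immediate derivatives are linear in $(\vz_{r-1},\phi(\vz_{r-1}))$, and — because the orbit must eventually enter the neighbourhood of $\Gamma_k$ on which $F^{k}_{\boldsymbol\theta}$ is genuinely affine and then converge at the fixed rate $q$ — the constants $C,C',C_1$ and the entry time $T_0$ can all be bounded using only an a priori bound on orbits in $\mathcal{B}_{\Gamma_k}$, independently of $\vz_1$; feeding these uniform constants into the estimate of~(ii) upgrades ``does not diverge as $T\to\infty$'' to ``bounded for all $T$ and all $\vz_1\in\mathcal{B}_{\Gamma_k}$''.

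The main obstacle I anticipate is the transient phase: before the orbit settles near $\Gamma_k$ individual Jacobians may have norm exceeding $1$, so no factor-by-factor bound is available, and one must (a) localise using $\vz_t\to\Gamma_k$, (b) replace the adapted norm of single-step Jacobians by that of the period-$k$ return map, taking care that the aligned blocks all converge to the \emph{same} conjugate of $\mM$ so the $q$-contractions compose cleanly across many periods, and (c) for the PLRNN, dispose of the non-differentiability on the switching manifolds, which I would handle via the (generic) assumption that $\Gamma_k$ lies in the interior of the linear regions so the orbit is eventually governed by a fixed affine contraction. The remaining work — verifying uniform boundedness of the immediate derivatives for each architecture in $\mathcal{R}$ — is routine given the standing assumptions of Appx.~\ref{pre-all}.
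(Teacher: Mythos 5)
Your proposal is correct and follows essentially the same route as the paper's: stability of the period-$k$ monodromy matrix yields geometric decay of aligned $k$-blocks of Jacobians (the paper extracts the rate explicitly via the Jordan normal form where you invoke the adapted-norm lemma, and for part (i) it argues via equality of the Lyapunov exponents of the orbit and the cycle where you build the contraction directly from convergence plus continuity of $DF_{\boldsymbol\theta}$), the tangent vector is then controlled by a convergent geometric series against uniformly bounded immediate derivatives, and for the PLRNN the orbit eventually enters the same linear sub-regions as the cyclic points so the cycle's Jacobians apply verbatim — exactly the paper's part (iii) argument. The one quibble is your assertion in (iii) that the entry time $T_0$ can be bounded independently of $\vz_1$, which fails for initial conditions near the basin boundary; this is harmless, however, since the theorem only asserts boundedness pointwise in $\vz_1\in\mathcal{B}_{\Gamma_k}$, which your estimate delivers.
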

%-------------------------------
%\vspace{-.3cm}
\begin{proof}
$(i)\,$ Assume that $\Gamma_k$ is a stable $k$-cycle $(k\geq 1)$ denoted by
%----------------------------
\begin{align}\nonumber
%\Gamma_k  =  \{\vz_1^{*},\vz_2^{*}, \cdots,  \vz_T^{*}, \cdots \}  =  \{\vz_{t^{*k}},\vz_{t^{*k}-1}, \cdots, \vz_{t^{*k}-(k-1)}, \vz_{t^{*k}},\vz_{t^{*k}-1}, \cdots, \vz_{t^{*k}-(k-1)}, \cdots \}. 
\Gamma_k &\, = \, \{\vz_1,\vz_2, \cdots,  \vz_T, \cdots \} \, = \,  \{\vz_{t^{*k}},\vz_{t^{*k}-1}, \cdots,
\\[1ex]\label{gam}
& \hspace{.9cm} \vz_{t^{*k}-(k-1)}, \vz_{t^{*k}},\vz_{t^{*k}-1}, \cdots, \vz_{t^{*k}-(k-1)}, \cdots \}.
\end{align}
%----------------------------
Then, the largest Lyapunov exponent of $\Gamma_k$ is given by
%--------------------
\begin{align}\nonumber
 \lambda_{\Gamma_k} &\, = \, \lim_{t\rightarrow\infty} \frac{1}{t} \, \ln \norm{ J^{*}_{t}\, J^{*}_{t-1}\, \cdots \,  J^{*}_{2}} 
\\[1ex]\label{LE-2}
 &\,= \, 
\lim_{j\rightarrow\infty} \frac{1}{jk} \, \ln \norm{ \bigg(\prod_{s=0}^{k-1} J_{t^{*k}-s}\bigg)^j}.
\end{align}
%-------------------
By assumption of stability of $\Gamma_k$ we have $\lambda_{\Gamma_k} \, < \, 0$ and also $\rho \big(\prod_{s=0}^{k-1} J_{t^{*k}-s} \big)<1$ %\textcolor{blue}{
(the spectral radius), which implies
%By assumption we have $\lambda_{\Gamma_k} \, < \, 0$ and also $\rho \big(\prod_{s=0}^{k-1} J_{t^{*k}-s} \big)<1$ \textcolor{blue}{(the spectral radius)}, which implies
%------------------------
\begin{align}\label{eq-sefr}
\lim_{t\rightarrow\infty}  J^{*}_{t}\, J^{*}_{t-1}\, \cdots \,  J^{*}_{2} 
 \, = \, 
 \lim_{j\rightarrow\infty} \bigg(\prod_{s=0}^{k-1} J_{t^{*k}-s}\bigg)^j \, = \, 0.
\end{align}
%------------------------
Now suppose that %$\mathcal{O}_{\vz_{1}}\,= \,\{\vz_{1}, \vz_{2}, \ldots \vz_T, \cdots \}$ 
$\mathcal{O}_{\vz_{1}}$ is an orbit of %\textcolor{blue}{
the map eqn. \eqref{eq-rnn} converging to $\Gamma_{k}$, i.e. $\vz_1 \in \mathcal{B}_{\Gamma_{k}}$. Since $\mathcal{O}_{\vz_{1}}$ and $\Gamma_{k}$ have the same largest Lyapunov exponent, we have
%------------------------------------
\begin{align}\label{LE-o_z1}
 \lambda_{\mathcal{O}_{\vz_{1}}} \, = \, \lim_{T\rightarrow\infty} \frac{1}{T} \,  \ln \norm{ J_{T}\, J_{T-1}\, \cdots \,  J_{2}} \,= \, \lambda_{\Gamma_k} \, < \, 0,
\end{align}
%--------------------------------
and hence for $\vz_1 \in \mathcal{B}_{\Gamma_{k}}$
%------------------------
\begin{align}\label{}
\lim_{T\rightarrow\infty} \norm{\frac{\partial \vz_T}{\partial \vz_1}} \, = \, \lim_{T\rightarrow\infty}  \norm{J_{T}\, J_{T-1}\, \cdots \,  J_{2}} %
 \, = \, 0.
\end{align}
%------------------------
~\\
%%%%%%%%%%%%%%%%%%%%%%%%%%%%%%
$(ii)\,  \& \, (iii) \,$ See Appx. \ref{p-thm-k-2}.
\end{proof}
%---------------------------------
%-----------------------------
\begin{remark}\label{rm-genmaps}
The result of Theorem \ref{thm-k-2} part $(i)$ will be generally true for any first-order-Markovian recursive map \eqref{eq-rnn},  %$F_{\boldsymbol\theta}(\vz_{t-1},\vs_{t})$, 
but the conclusions in part $(ii)$ may hinge on its specific definition.
\end{remark}
%----------------------------------
\begin{remark}\label{rm-forcing}
None of the results above and throughout sect. \ref{sec:3} require the dynamics to be autonomous, the theory applies whether there is external input or not. In fact, mathematically, non-autonomous (externally forced) systems can always be rewritten as autonomous dynamical systems \citep{alligood1996,perko2001,zhang2009}, see Appx. \ref{auto_to_nonauto} for details.
%As mentioned in \citep{zhang2009}, let us consider the discrete-time DS 
%--------------------------
%\begin{align}\label{eq:non-aut}
%\vx_{t+1} \,=\, F(\vx_t,t),  \hspace{.5cm} \vx \in \mathbb{R}^{n}.      
%\end{align}
%-------------------------------
%Defining $\,\vz_t \,=\, (\vx_t,t)\tran\,$ and $\,G(\vz_t) \,= \,(F(\vx_t,t),t+1)\tran\,$, the system \eqref{eq:non-aut} can be transformed to the autonomous system 
%-----------------
%\begin{align}
% \vz_{t+1} = G(\vz_t), \hspace{.5cm} \vz\in \mathbb{R}^{n+1}.
%\end{align}
%------------------
%Hence, it is sufficient to consider autonomous systems.
\end{remark}
%---------------------------------
The results above ensure that loss gradients will not diverge (explode) as $T \rightarrow \infty$ in RNNs that are ``well-behaved'' in the sense that they converge to a fixed point or cycle (see Fig. \ref{fig:EVGPillu}a). This is a generalization of the results given in Theorem 1 in \citet{schmidt_identifying_2021}, where this was shown only a) for the specific class of PLRNNs and b) for specific constraints imposed on the eigenvalue spectrum of the RNN's Jacobians which were relaxed in our theorem above. 

While our treatment above is centered on the ``exploding-gradients''
case, various architectural modifications or regularization techniques can ensure that gradients do not vanish either, i.e. remain bounded from below as well. This was established, for instance, in \citet{schmidt_identifying_2021} for PLRNNs using 'manifold attractor regularization'. In Appx. \ref{p-thm-k-2} we show that the results from Theorem 2 from \citet{schmidt_identifying_2021} on \textit{doubly} bounded (from below and above) loss gradients can indeed be extended to the more general case covered by Theorem \ref{thm-k-2} above.
%########################################################
\subsection{Chaotic dynamics}
\begin{figure}[h]  %scale=0.37
\centering    
\subfigure{\label{fig:a:1}\includegraphics[width=0.4\linewidth]{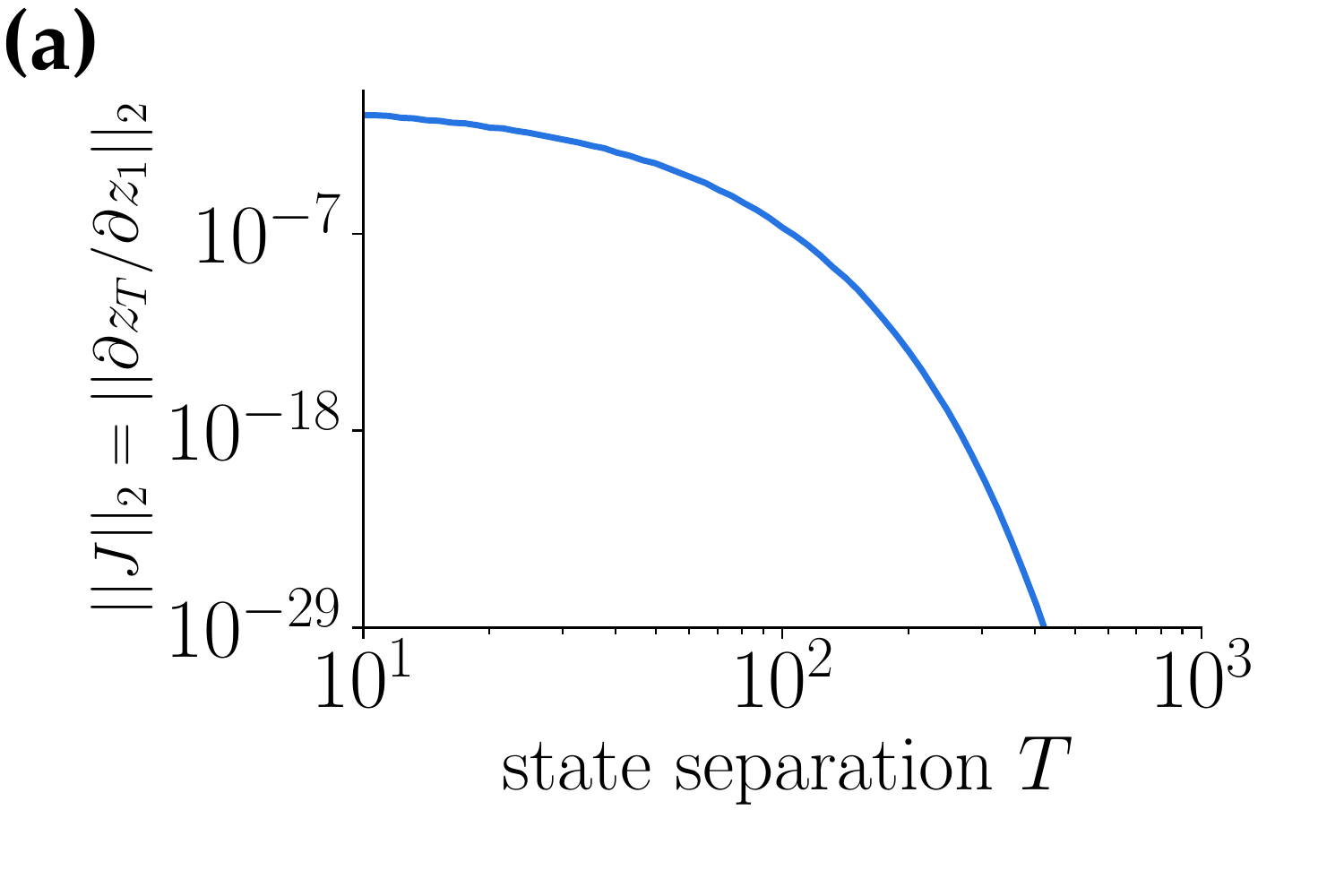}}
\subfigure{\label{fig:b:1}\includegraphics[width=0.4\linewidth]{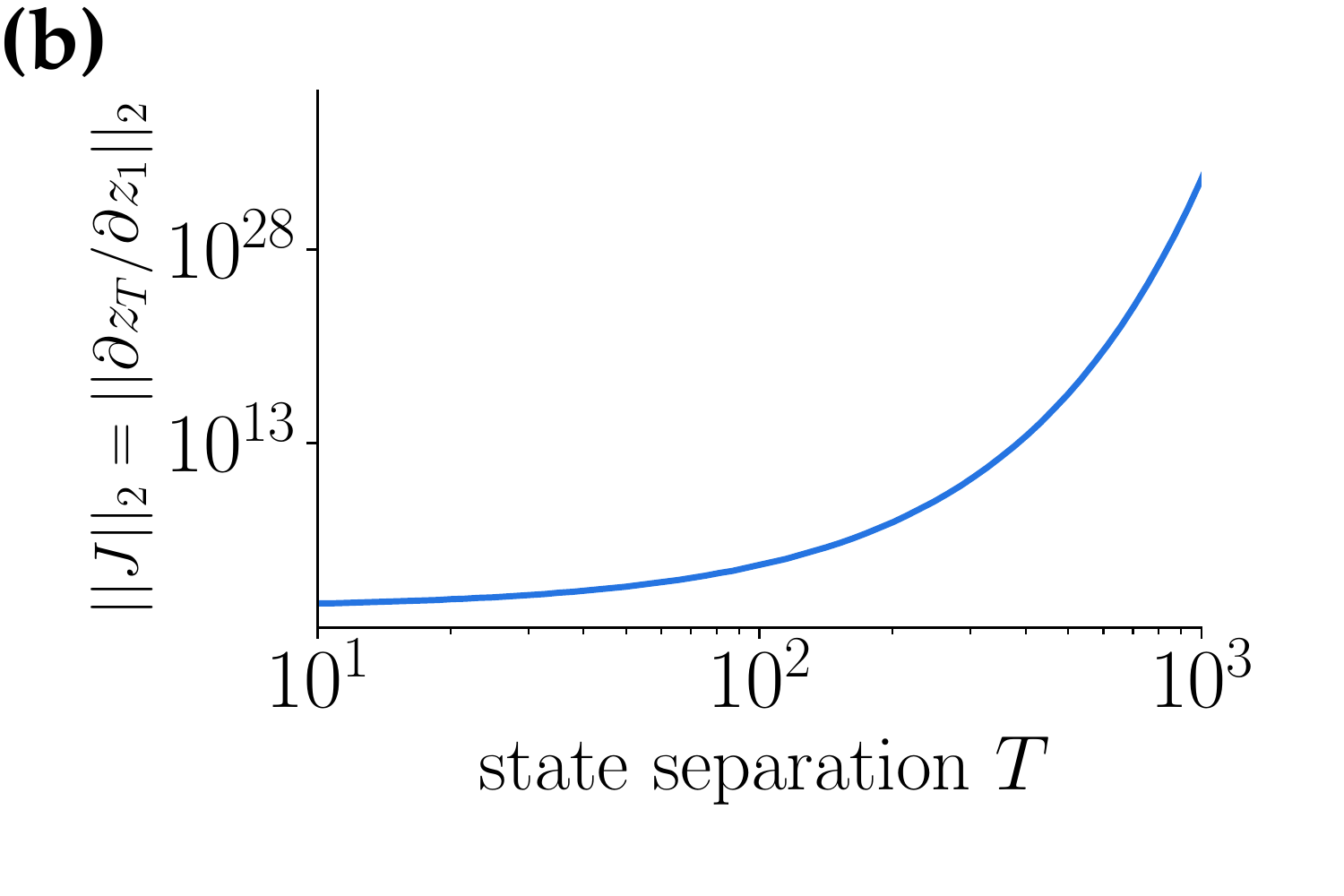}}
\vspace{-0.3cm}
\caption{\small 
%\textcolor{blue}{
Illustration of the %vanishing and 
exploding gradient problem when training RNNs on dynamical systems. %time series from chaotic systems. 
%Loss gradients 
Jacobians (a) decay away across time separation $T$ for an RNN trained on a simple cycle (cf. Thm. \ref{thm-k-2}), but (b) quickly shoot through the roof when training was performed on chaotic time series (Lorenz system; cf. Thm. \ref{thm-3}). Note the doubly-logarithmic scale of these graphs.
%(a) %Loss gradients 
%Jacobians decay away across epochs for training on simple cyclic dynamics for standard BPTT (blue) while they remain somewhat higher and decay slower for sparse TF (orange), correlated with better performance in this case. The right-hand side plot illustrates that once the cyclic dynamics are learned, the Jacobians exponentially vanish (see Thm. \ref{thm-k-2}). (b) Loss gradients quickly shoot through the roof (note the logarithmic y-axis scale!) for training on chaotic time series (Lorenz system) for standard BPTT (blue), while sparse TF keeps the gradients efficiently in check The right-hand side shows the Jacobians exponentially exploding (see Thm. \ref{thm-3}).
%}
}\label{fig:EVGPillu}
%-----
\normalsize
%------
\end{figure}

We will now consider the all-important chaotic case. %, which is the major focus of this work. For this case, we will develop the proof in detail in the main text. 
Let $F$ be a recursive map and $\mathcal{O}_{\vz_{1}}\, = \, \{\vz_1, \vz_2, \vz_3, \cdots \}$  be an %\textit{bounded} 
orbit of $F$. The orbit is chaotic if (i) it is not asymptotically periodic and (ii) has at least one positive Lyapunov exponent \citep{gle,mei}. %In a chaotic system there has to be stretching in some direction and thus exponential separation of nearby trajectories that means sensitive dependence on initial condition. In the case of bounded invariant sets, a positive Lyapunov exponent is considered as a standard signature of chaos \citep{gle,mei}. 
If the system's invariant set is \textit{bounded}, condition (ii) is considered a standard signature of chaos, as in this case 
%When there is at least one positive Lyapunov exponent and the system is bounded, 
two nearby orbits %within a definite boundary 
separate exponentially fast, but at the same time their mutual separation cannot go to infinity so that there are also folds. %Note that, in general, when the largest Lyapunov exponent is positive, the system can be either chaotic or unstable. That is unstable systems could also have positive Lyapunov exponent which means the system goes to infinity and is unbounded.
The following theorem states the sufficient condition for exploding gradients:
 %\\
%-------------------------------------------------
%----------------------------------
\begin{theorem}\label{thm-3}
Suppose that %the generic RNN \eqref{eq-rnn} 
an %autonomous 
RNN $F_{\boldsymbol\theta} \in \mathcal{R}$ (parameterized by $\boldsymbol\theta$) has a chaotic attractor $\Gamma^{*}$ with $\mathcal{B}_{\Gamma^{*}}$ as its basin of attraction. 
%%%
Then, for almost every orbit 
with $\vz_1 \in \mathcal{B}_{\Gamma^{*}}$, (i) the Jacobians connecting temporally distal states $\vz_T$ and $\vz_t$ ($T \gg t$), $\frac{\partial \vz_T}{\partial \vz_t}$, will exponentially explode for $T\rightarrow\infty$, and (ii) the tangent vector $\frac{\partial \vz_T}{\partial \theta}$ and so the gradients of the loss function, $\frac{\partial \mathcal{L}_T}{\partial \theta}$, will diverge as $T\rightarrow\infty$.
%-----------------------------
\end{theorem}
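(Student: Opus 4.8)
The plan is to derive both parts from the single structural fact that a chaotic attractor forces $\lambda_{max}(\Gamma^{*})>0$, and then to push this positivity through the two identities already established in Section~\ref{prelim}: $\frac{\partial \vz_T}{\partial \vz_t}=\mJ_T\mJ_{T-1}\cdots\mJ_{t+1}$ and $\frac{\partial \vz_T}{\partial \theta}=\sum_{r}\frac{\partial \vz_T}{\partial \vz_r}\,\frac{\partial^{+}\vz_r}{\partial\theta}$. Throughout I would use that every map in $\mathcal{R}$ is $C^{1}$ (or piecewise-$C^{1}$ with an a.e.-defined, uniformly bounded Jacobian, as for the PLRNN) and that $\Gamma^{*}$ is bounded, so that along any orbit staying near $\Gamma^{*}$ the one-step Jacobians $\mJ_t$ and the immediate derivatives $\vb_r:=\frac{\partial^{+}\vz_r}{\partial\theta}$ are uniformly bounded, $\norm{\vb_r}\le B$; and that all subordinate matrix norms are equivalent up to $M$-dependent constants, so the choice of norm is immaterial.

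For part~(i): an orbit $\mathcal{O}_{\vz_1}$ with $\vz_1\in\mathcal{B}_{\Gamma^{*}}$ is forward-asymptotic to $\Gamma^{*}$ and inherits its Lyapunov spectrum (the reasoning of the proof of Theorem~\ref{thm-k-2}(i), now run with the inequality reversed); applying the Oseledets multiplicative ergodic theorem to the natural (physical / SRB) ergodic measure on $\Gamma^{*}$ — whose Lyapunov spectrum is therefore seen by Lebesgue-a.e. initial condition in the basin — gives, for almost every $\vz_1\in\mathcal{B}_{\Gamma^{*}}$ and every fixed $t$, $\tfrac{1}{T-t}\log\norm{\frac{\partial\vz_T}{\partial\vz_t}}\to\lambda_{max}(\Gamma^{*})>0$ as $T\to\infty$. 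Hence $\norm{\frac{\partial\vz_T}{\partial\vz_t}}=e^{(\lambda_{max}+o(1))(T-t)}\to\infty$, i.e. the connecting Jacobians explode exponentially. This is exactly the ``stretch'' content of condition~(ii) in the definition of a chaotic orbit, and it is the same term that resurfaces inside the loss gradient.

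For part~(ii) I would take $\vz_1$ independent of $\theta$, so $\frac{\partial^{+}\vz_1}{\partial\theta}=0$ and $\frac{\partial\vz_T}{\partial\theta}=\sum_{r=2}^{T}\big(\mJ_T\cdots\mJ_{r+1}\big)\vb_r$, and split the sum at $r=\lceil\epsilon T\rceil$. The tail $r\ge\epsilon T$ is bounded by $B\sum_{r\ge\epsilon T}\norm{\mJ_T\cdots\mJ_{r+1}}=O\!\big(T\,e^{(\lambda_{max}+o(1))(1-\epsilon)T}\big)=o\!\big(e^{\lambda_{max}T}\big)$. For the head, the finitely many early terms can each be rewritten as $\mJ_T\cdots\mJ_{r+1}\vb_r=\frac{\partial\vz_T}{\partial\vz_1}\big(\frac{\partial\vz_r}{\partial\vz_1}\big)^{-1}\vb_r=\frac{\partial\vz_T}{\partial\vz_1}\vw_r$ with $\vw_r$ a \emph{fixed} vector (invertibility holds for $\tanh$-RNNs, LSTMs and GRUs; for the PLRNN one restricts to the full-measure set of times where the clipping Jacobians are nonsingular, or passes to a subsequence). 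By Oseledets, $\norm{\frac{\partial\vz_T}{\partial\vz_1}\vw}=e^{(\lambda_{max}+o(1))T}$ for any $\vw$ outside the non-dominant subspace (the vectors whose growth rate under $\frac{\partial\vz_T}{\partial\vz_1}$ is strictly below $\lambda_{max}$), which is a proper subspace of $\R^{M}$; for a.e. orbit (and, if needed, generic $\theta$) the combination of early $\vb_r$'s does not lie in it, so the head grows like $e^{\lambda_{max}T}$ and dominates the tail. Hence $\norm{\frac{\partial\vz_T}{\partial\theta}}\ge e^{(\lambda_{max}-\epsilon')T}-o(e^{\lambda_{max}T})\to\infty$; finally $\frac{\partial\mathcal{L}_T}{\partial\theta}=\frac{\partial\mathcal{L}_T}{\partial\vz_T}\frac{\partial\vz_T}{\partial\theta}$ with $\frac{\partial\mathcal{L}_T}{\partial\vz_T}$ bounded (bounded state, smooth loss) and, for a.e. orbit, not annihilating the expanding component, so the loss gradient diverges as well.

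The main obstacle is precisely the cancellation question in part~(ii): every individual BPTT summand $\frac{\partial\vz_T}{\partial\vz_r}\vb_r$ blows up, but a priori the sum could telescope to something bounded, so divergence of the total requires ruling out a fine-tuned conspiracy between the Jacobian cocycle and the forcing $\vb_r=\frac{\partial^{+}\vz_r}{\partial\theta}$; this is exactly what forces the measure-theoretic language (the SRB measure in part~(i), the proper-subspace exclusion in part~(ii)) and why the statement is only ``for almost every orbit''. A secondary technical point is the non-smoothness and possible non-invertibility of the PLRNN Jacobians on the clipping boundaries, which I would handle by working on the full-measure set of regular times or along a subsequence — neither affects the conclusion that the gradients fail to remain bounded as $T\to\infty$.
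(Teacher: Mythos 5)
Part (i) of your proposal is the paper's own argument: chaoticity gives $\lambda_{max}(\Gamma^{*})>0$, hence $\norm{J_{T}\cdots J_{2}}\to\infty$, and Oseledets' multiplicative ergodic theorem transfers the exponent to almost every initial condition in the basin (the SRB-measure framing you add is left implicit in the paper). No issues there.

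Part (ii) is where you depart from the paper, and where your argument has a gap. The paper's proof (Appx.~A.2.2) does not decompose the BPTT sum at all: it picks one time $N$ with $\frac{\partial^{+}\vz^{*}_{N-1}}{\partial\theta}\neq 0$, uses the fact that for the architectures in $\mathcal{R}$ this immediate derivative has a \emph{single} nonzero component (so that $\bigl(\prod_{r=0}^{T-N}\mJ^{*}_{T-r}\bigr)\frac{\partial^{+}\vz^{*}_{N-1}}{\partial\theta}$ is one column of the exploding Jacobian product times a nonzero scalar), and concludes divergence of $\frac{\partial \vz_T}{\partial\theta}$ from the divergence of that single summand along a subsequence. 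Your head/tail split at $r=\lceil\epsilon T\rceil$ is a genuinely different and in one respect better-aimed strategy, because you explicitly confront the possibility that the exploding summands cancel --- an issue the paper's proof silently steps over. But you do not close it: the claim that the relevant combination of immediate derivatives avoids the non-dominant Oseledets subspace is asserted ``for a.e.\ orbit and, if needed, generic $\theta$'', and (a) the genericity-in-$\theta$ caveat is not available, since the theorem's ``almost every'' ranges over initial conditions only; (b) the vectors $\vb_r=\frac{\partial^{+}\vz_r}{\partial\theta}$ are determined by the orbit and by $\theta$ rather than sampled independently of the cocycle, so ``a proper subspace has measure zero'' does not apply without further work. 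There are also two smaller wobbles: the head $\{r\le\epsilon T\}$ contains $\epsilon T$ terms, not ``finitely many'', so the $\vw_r$ do not form a fixed finite family; and the rewriting $\mJ_T\cdots\mJ_{r+1}\vb_r=\frac{\partial\vz_T}{\partial\vz_1}\vw_r$ needs invertibility of partial Jacobian products, which genuinely fails (not merely on a null set) for LSTMs with saturated gates and for PLRNNs with inactive units. In short: (i) matches the paper; for (ii) you chose a more ambitious route whose central non-cancellation step remains unproved --- though you should be aware that the paper's own proof of (ii) is vulnerable to the very cancellation objection you raise.
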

%----------------------------------------
\vspace{-.3cm}
\begin{proof}
Let the RNN $F_{\boldsymbol\theta} \in \mathcal{R}$ have a chaotic orbit denoted by $\Gamma^{*} = \{\vz_1^{*},\vz_2^{*}, \cdots, \vz^{*}_{T}, \cdots \}$. 
%--------------------
Then, denoting by $\,J^{*}_{T}\,$ the Jacobian of \eqref{eq-rnn} at $\vz^{*}_{T} \in \Gamma^{*}$, the largest Lyapunov exponent of $\Gamma^{*}$ is given by 
%--------------------
\begin{align}\label{LE}
 \lambda\, = \, \lim_{T\rightarrow\infty} \frac{1}{T} \, \ln \norm{ J^{*}_{T}\, J^{*}_{T-1}\, \cdots \,  J^{*}_{2}}.    
\end{align}
%----------------------
%where
%--------------------
%\begin{align}
% J^{*}_{T}\, J^{*}_{T-1}\, \cdots \,  J^{*}_{2}  \, = \, \prod_{1 < t \leq T} \mW \, diag \big( f^{\prime}\big( \mW \vz^{*}_{t-1} \,+\,  g( \vx_t ) \big) \big).    
%\end{align}
%----------------------
Since $\Gamma^{*}$ is chaotic, so $\lambda>0$. Hence, from \eqref{LE}, it is concluded that
%-----------------------
\begin{align}\label{yaein}
 \lim_{T\rightarrow\infty}  \norm{ J^{*}_{T}\, J^{*}_{T-1}\, \cdots \,  J^{*}_{2}}\, = \, \lim_{T\rightarrow\infty}  \norm{
\frac{\partial \vz^{*}_T}{\partial \vz^{*}_t}}
 = \infty, \, \, \, \, T \gg t. 
\end{align}
%------------------------
Now, according to Oseledec's multiplicative ergodic Theorem, almost all the points in the basin of attraction of $\Gamma^{*}$ have the same largest Lyapunov exponent $\lambda$. Thus, \eqref{yaein} holds for almost every $\vz_1 \in \mathcal{B}_{\Gamma^{*}}$.
 
%%%
$(ii)\,$ See Appx. \ref{p-thm-3}.
\end{proof}
%-----------------------------
%--------------------------------
\begin{remark}
The first part of Theorem \ref{thm-3} holds for all first-order-Markovian recursive maps \eqref{eq-rnn}. Note that for LSTMs, $\,\frac{\partial \vz_T}{\partial \vz_t}$ ($\vz:= (\vh, \vc)\tran$) denotes the full Jacobian of both hidden and cell states. 
\end{remark}
%--------------------------------
We collect some further mathematical results and remarks related to Theorem \ref{thm-3} in Appx. \ref{rem-cor-thm-chaos}.

%----------------------------------
%%%%%%%%%%%%%%%%%%%%%%%%%%%%%%%%%%%%%%
Hence, the essential result is that for all popular RNNs $\mathcal{R}$ and activation functions, loss gradients will inevitably diverge if the RNN latent states converge to a chaotic attractor (as illustrated in Fig. \ref{fig:EVGPillu}b). 
%--------------------------
%#####################################
%%%%%5
%%
\subsection{Quasi-periodicity}
%
%%----------------------------
Quasi-periodicity is a long-term behavior which occurs on a torus and, superficially, bears some similarity to chaos in the sense that, strictly speaking, orbits are also \textit{aperiodic}. That is, as $T\rightarrow\infty$, trajectories will never close up with themselves. %, i.e. the time-domain solution will never repeat itself. 
Moreover, every trajectory becomes arbitrarily close to any point on the torus, that is, it is dense. One important difference between quasi-periodic and chaotic systems is, however, that in a quasi-periodic system, as time passes, two close initial conditions are \textit{linearly} diverging, while in a chaotic system the divergence is exponential. 
%The following theorem establishes that quasi-periodicity can sometimes also result in the explosion of gradients of the loss function.
%
%----------------------------------
\begin{theorem}\label{thm-quasi}
Assume that an %autonomous 
RNN $F_{\boldsymbol\theta} \in \mathcal{R}$ (parameterized by $\boldsymbol\theta$) has a quasi-periodic attractor $\Gamma$ with $\mathcal{B}_{\Gamma}$ as its basin of attraction. Then, for every $\vz_1 \in \mathcal{B}_{\Gamma}$ 
%----------------------------
\begin{align}\nonumber
& \forall\, 0<\epsilon<1 \, \, \, \, \exists \,T_0>1 \, \, \, s.t.\, \, \, \forall \,T \geq T_0 \, \, \Longrightarrow \, \,
\\[1ex]\label{re-quasi}
& (1-\epsilon)^{T-1}  \, < \, \norm{\frac{\partial \vz_T}{\partial \vz_1}} \, < \, (1+\epsilon)^{T-1}.    
\end{align}
%-----------------------------
\end{theorem}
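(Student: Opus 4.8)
The plan is to reduce the whole statement to a single dynamical fact — that a quasi-periodic attractor has largest Lyapunov exponent exactly $0$ — and then unwind the definition of the Lyapunov exponent into the claimed two-sided exponential envelope by elementary estimates on logarithms.

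First I would establish $\lambda_{max}=0$. Since $\Gamma$ is quasi-periodic, it is a smooth invariant torus $\mathbb{T}^{d}$ on which $F_{\boldsymbol\theta}$ is conjugate to an irrational rigid rotation; as noted in the text preceding the statement, on such a torus nearby orbits separate only linearly, not exponentially. Concretely, in angular coordinates a rigid rotation has Jacobian equal to the identity, so the Jacobian cocycle restricted to the tangent bundle of $\Gamma$ is bounded, and transferring back to ambient $\mathbb{R}^{M}$ coordinates through the $C^{1}$ conjugacy (whose domain is the compact set $\Gamma$) changes the matrix product only by a factor bounded uniformly in $T$. Hence the Lyapunov exponents in directions tangent to $\Gamma$ vanish, and because $\Gamma$ is an attractor the transverse exponents are $\le 0$, giving $\lambda_{\Gamma}=\lambda_{max}=0$.

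Next I would transfer this to orbits in the basin exactly as in the proof of Theorem~\ref{thm-3}: by Oseledec's multiplicative ergodic Theorem almost every $\vz_{1}\in\mathcal{B}_{\Gamma}$ has the same top exponent as $\Gamma$, and since $\mathcal{O}_{\vz_{1}}$ is forward-asymptotic to $\Gamma$, the two orbits in fact share their Lyapunov spectrum for \emph{every} $\vz_{1}\in\mathcal{B}_{\Gamma}$ (cf. \eqref{LE-o_z1}). Because $\frac{\partial \vz_{T}}{\partial \vz_{1}}=J_{T}J_{T-1}\cdots J_{2}$, this is precisely the statement
\[
\lim_{T\to\infty}\tfrac{1}{T}\,\ln\norm{\tfrac{\partial \vz_{T}}{\partial \vz_{1}}}\, = \,0 .
\]
Finally, the $\epsilon$-bookkeeping: fix $0<\epsilon<1$ and put $\delta:=\tfrac12\ln(1+\epsilon)>0$. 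The limit above yields a $T_{1}$ with $e^{-\delta T}<\norm{\partial\vz_{T}/\partial\vz_{1}}<e^{\delta T}$ for all $T\ge T_{1}$, so it suffices to dominate $e^{\delta T}\le(1+\epsilon)^{T-1}$ and $e^{-\delta T}\ge(1-\epsilon)^{T-1}$ for all large $T$. The first is $\delta T\le(T-1)\ln(1+\epsilon)$, i.e. $T\ge2$; the second is $\delta T\le(T-1)\ln\tfrac{1}{1-\epsilon}$, which holds for all sufficiently large $T$ since $\ln(1+\epsilon)<\ln\tfrac{1}{1-\epsilon}$ (equivalently $1-\epsilon^{2}<1$). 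Taking $T_{0}$ to be the maximum of $T_{1}$ and this threshold gives \eqref{re-quasi}.

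The main obstacle is the first step — pinning $\lambda_{max}$ to $0$ rather than merely bounding it on one side. The upper bound $\lambda_{max}\le 0$ follows from the attractor property together with the sub-exponential (linear) separation characteristic of torus dynamics; the matching lower bound $\lambda_{max}\ge 0$ is what really requires the conjugacy-to-a-rotation picture, and here one must be careful about the regularity of that conjugacy and about how $\mathbb{T}^{d}$ is embedded in the $M$-dimensional state space — this is precisely where compactness of $\Gamma$ (and, in the background, normal hyperbolicity of the invariant torus) enters, to keep the change-of-coordinates cocycle bounded in $T$. Once Step~1 is secured, Steps~2 and~3 are routine.
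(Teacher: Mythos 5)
Your proposal is correct and follows essentially the same route as the paper's proof: both reduce the claim to the fact that a quasi-periodic attractor has $\lambda_{max}=0$, unwind the resulting limit $\tfrac{1}{T}\ln\norm{\partial\vz_T/\partial\vz_1}\to 0$ into the two-sided exponential envelope, and extend to the basin via Oseledec's theorem. The only differences are cosmetic — you actually justify $\lambda_{max}=0$ (the paper simply asserts it in its eqn.~\eqref{LE-quasi}) and carry out the $\epsilon$-bookkeeping directly, whereas the paper argues by contradiction.
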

%--------------------
\vspace{-.3cm}
%-----------------------------
\begin{proof}
See Appx. \ref{p-thm-quasi}.
\end{proof}
%---------------------------
\vspace{-.3cm}
According to Theorem \ref{thm-quasi}, for every orbit converging to a quasi-periodic attractor, the Jacobians $\frac{\partial \vz_T}{\partial \vz_t}$ may diverge or vanish as $T\rightarrow\infty$, but this will \textit{not} occur exponentially fast as $T\rightarrow\infty$. %But, they cannot {\color{blue} exponentially} explode or vanish for $T\rightarrow\infty$.  
Thus, even for bounded non-chaotic RNNs we may sometimes stumble into the problem of diverging gradients. Although this may be a less common scenario, we point out it may occur if we train RNNs on real data from oscillatory systems with incommensurate frequencies, as for instance encountered in electronic engineering.
%############################

In Appx. \ref{sec-other} we have collected further mathematical results on the connection between RNN dynamics and loss gradients that hold regardless of the RNN's limiting behavior.
\section{Empirical evaluation}\label{EmpSim}
%
%In the following section we describe a simple, yet effective way to train RNNs on chaotic data and show empirical results an multiple chaotic benchmark systems.
Our theoretical results imply that chaotic time series pose a principle challenge for RNN training that cannot easily be circumvented through specifically designed architectures, constraints, or regularization criteria. If the underlying DS we aim to capture is chaotic, loss gradients propagated back in time will inevitably explode. Hence we need to curtail gradients in an ideal way. %The issue arises especially in scientific ML where time series from chaotic systems are ubiquitous, important, and comparatively well characterized, so our exposition will focus on this area.
The issue arises especially in scientific ML where time series from chaotic systems are ubiquitous and the aim is to \textit{reconstruct} the generating DS with its limiting behavior. Thus, our exposition will focus on this area.
%
%Here we will work out some implications for RNN training and potential remedies empirically.
%Hence we need to curtail gradients in an ideal way, as illustrated in the next sections. 
%%%
%\subsection{Training technique: Sparsely forced BPTT}
%\subsection{Training on systems with exploding gradients through ideal loss truncation}
%%%%%%%%%%%%%%%%%%%%%%%%%
\subsection{Training on systems with exploding gradients by sparse teacher forcing}
\label{Sec:Training}
To illustrate the connections between theory and RNN training, we revive the old idea of TF \citep{williams_learning_1989} as a mechanism for truncating error gradients and keeping model-generated trajectories on track while training. 
%Here we revive the old idea of TF \citep{williams_learning_1989} as a mechanism for effectively truncating error gradients while training. 
However, we would like to do this such that important information about the system dynamics does not get lost, for which Lyapunov theory offers some guidance. Specifically, we should not force the system back onto the true trajectory all or most of the time (as in ``classical TF''), but should effectively ``re-calibrate'' it only at certain time points chosen wisely according to the system's local divergence rates. This procedure will be referred to as \textit{sparsely forced BPTT} in the following.
%For this we suggest \textit{sparsely forced BPTT}, which - in contrast to conventional TF - does not force the system back onto the true trajectory all or most of the time, but effectively ``re-calibrates'' it only at certain time points chosen wisely according to the system's local divergence rates. 
%As demonstrated above, {\color{red} locally} diverging trajectories and exploding gradients are unavoidable when training RNNs on chaotic data. Hence, when training RNNs through BPTT, we need a mechanism which curtails the divergence of both trajectories and gradients and effectively "re-calibrates" the procedure at certain time points. 
%The mechanism we propose is . 
Assume we want to train an RNN with hidden states $\vz_t  \in \mathbb{R}^M$ and linear (or affine) output layer on a
%(potentially) 
time-series $\{\vx_1,\vx_2,\cdots,\vx_T\}$ generated by a chaotic system.\footnote{Note that in DS reconstruction one usually considers the data as observations (\textit{unsupervised} problem).} %; according to Remark \ref{rm-forcing}, however, it is mainly a matter of the scientific question addressed whether we include certain variables as explicit inputs or as observations.} 
The linear output layer $\hat \vx_t = \mB \vz_t$, $\mB \in \mathbb{R}^{N\times M}$, maps the RNN hidden states into the observation space. %This allows us to construct a teacher-forcing (or control) series $\{\tilde\vz_1,\tilde\vz_2,\cdots,\tilde\vz_T\}$ from the observations by ``inverting'' the linear output mapping
This allows us to modify the original TF procedure by constructing a control series $\{\tilde\vz_1,\tilde\vz_2,\cdots,\tilde\vz_T\}$ from the observations by ``inverting'' the linear output mapping
\footnote{To ensure invertibility, one could add a regularizer $\lambda \mathbf{I}$ to $\mB\tran\mB$ in eqn. \eqref{eq:mod_inv}, as in ridge regression, but we did not find this necessary in any of our examples.}
%\footnote{\tiny{Recall that this least-squares estimate for the teacher-states at time $t$ is in fact the maximum likelihood estimate under implicit Gaussian assumptions. We could also easily accommodate a bias term in eqn. \eqref{eq:mod_inv}, or add a regularizer $\lambda \mathbb{I}$ to ensure invertibility.}}
%which is not necessary if data are standardized.}
%\vspace{-.3cm}
%----------------------
\begin{align}\label{eq:mod_inv}
\tilde\vz_t = (\mB\tran\mB)^{-1}\mB\tran \vx_t.
\end{align}
%-----------
The idea is to supply this control signal only sparsely, separated by the learning interval $\tau$ between consecutive forcings. Hence, defining $\mathcal{T}=\{n\tau +1 \}_{n \in \mathbb{N}_0}$ as the set of all time points at which we force the RNN onto the `true' values, the RNN updates can be written as
%----------------------
\begin{align}\label{eq:forcing}
  \vz_{t+1} =
  \begin{cases}
       RNN(\tilde\vz_{t}) & \text{if $t\in \mathcal{T}$} \\
       RNN(\vz_{t}) & \text{else} 
  \end{cases}.
\end{align}
%---------------------------------
This forcing is applied \textit{after} calculation of the loss, such that $\mathcal{L}_{t}= \norm{\vx_t - \mB\vz_t}_2^2$ irrespective of whether $t$ is in $\mathcal{T}$ or not (and of course it is applied only during training, not at test time!).
%%% still needs to be written!
%This method has multiple advantages. Similarly to the original teacher-forcing approach \citep{williams_learning_1989,pearlmutter_1990} the trajectory is forced onto the observations during training, but the teacher-signal is only supplied every $\tau$ time-steps. If this learning interval $\tau$ is chosen correctly, this ensures that the mean-squared error loss remains informative despite the exponential divergence of close by trajectories, which characterises chaotic dynamics.
Replacing hidden states $\vz_t$ with their teacher-forced signals $\tilde \vz_t$ simply breaks divergence between true and predicted trajectories at time points $t\in \mathcal{T}$, and also cuts off the Jacobians by breaking the temporal contingency (for details see Appx. \ref{supp:loss_trunc}). 
%also stops gradients from exploding further since the terms 
%\Jonas{I think we have to rephrase this, because $\tilde L_{t}$ is not introduced (and we cannot consider $L_{t}$ as this is calculated previous to forcing). Maybe we should focus on $\partial \vz_t / \partial \vz_1$, as this is the most relevant for our theoretical observations... }
%$\frac{ \partial \tilde L_{t}}{\partial w_{ij}}$ evaluate to $0$ at these points.     %% 
%Supplying the teacher signal $\tilde \vz_t$ also alleviates the exploding gradient problem by truncating the gradients. Because $\frac{ \partial \tilde \vz_{t}}{\partial \vz_{t-1}}  = 0$, the exploding tangent vector eq. \eqref{eq:tangent-vec} is truncated \begin{align}
%
% \frac{d \vz_T}{d w_{mk}} &
 %
%     \,= \,\frac{\partial \vz_T}{\partial w_{mk}} \, + \, \sum_{\tilde t}^{T-2} \bigg( \prod_{r=0}^{\tilde t-1} \mJ_{T-r} \bigg)\frac{\partial \vz_{T-\tilde t}}{\partial w_{mk}} < \infty,
%
%\end{align}
%where $\tilde t = \max \{t \,  | \,  t \in \mathcal{T} \wedge t \leq T \}$ is the largest element in $\mathcal{T}$ that is smaller than $T$. 
%Thus if the learning interval $\tau<\infty$ is finite, so are the relevant tangent-vectors $\frac{d \vz_T}{d w_{mk}}$. 
The learning interval $\tau$ hence controls how many time steps are included in the gradient calculation and has to be chosen with care such as to balance the effects of exploding gradients vs. those of losing relevant time scales and long-term dependencies. While it is general wisdom that an optimal batch size will facilitate training,\footnote{It is also reminiscent of truncated BPTT, but with the all-important differences that we suggest 1) a theoretically informed choice of the optimal ‘truncation length’ (forcing interval) and 2) a specific %systematic
procedure %based on observation model inversion
for replacing current latent states by control values. %obtained by inversion of the observation model. 
As shown in sect. \ref{Sec:Example1} \& \ref{Sec:Example2}, both these aspects are indeed crucial to avoid diverging gradients and trajectories %and keep model-generated trajectories on track 
whilst not loosing relevant longer time scales.} the point here is thus much more specific: Ideally $\tau$ should be chosen in accordance with the system's Lyapunov spectrum, for instance based on the \textit{predictability time} \citep{bezruchko_extracting_2010}
%Here we suggest to choose $\tau$ in accordance with the system's Lyapunov spectrum or, more specifically, the \textit{predictability time} \citep{bezruchko_extracting_2010}
%
%-----------------------------------
\begin{align}\label{eq:pred_time}
\tau_{\text{pred}}\, =\, \frac{\ln 2}{\lambda_{\max}}.
\end{align} 
%-------------------------------%
% NEW ---:
%\textcolor{blue}{
Various open-source packages exist for calculating the maximal Lyapunov exponent $\lambda_{\max}$ from empirical time series data (e.g., Julia: DynamicalSystems.jl \citep{Datseris2018}, C++: TISEAN \citep{hegger_practical_1999}), potentially after delay-embedding the data (see Appx. \ref{supp:emp-time-series} for more details). Note that this needs to be done on the \textit{empirical} data and only once \textit{before} RNN training, as $\lambda_{\max}$ is an invariant characteristic of the empirical system that we are aiming to reconstruct by our RNN (i.e., after successful training the RNN should have the same invariant properties as the underlying DS).
%} 
% -------
%We emphasize that such a simple recipe for addressing the exploding gradient
We also emphasize that such a simple recipe for addressing the exploding gradient problem is based on modifying the training routine, and is thus in principle applicable to any model architecture.%\footnote{All code produced here is available at [placeholder].}
%We emphasize that this simple recipe for addressing the exploding gradient problem is based on modifying the training routine, and is thus in principle applicable to any model architecture.\footnote{All code produced here is available at [placeholder].}
%-----------------------------
%\subsection{Example 1: Lorenz and Rössler systems in chaotic regime}
\subsection{Example 1: Lorenz system and externally forced Duffing oscillator in chaotic regime}
\label{Sec:Example1}
\begin{figure*} %scale=0.37
%0.24\linewidth
\centering    
\subfigure{\label{fig:a:1}\includegraphics[width=0.40\linewidth]{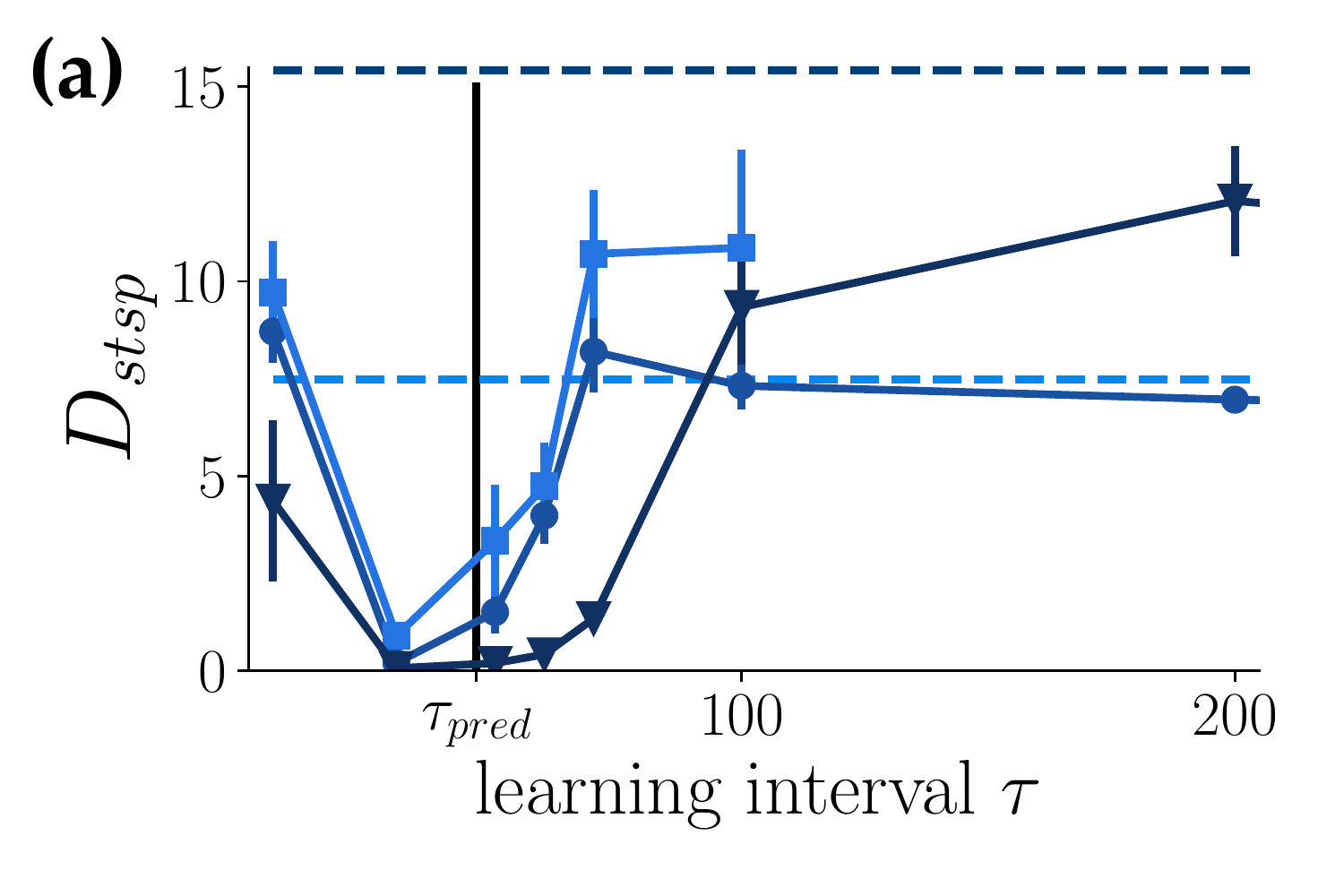}}
\hspace{1cm}
%\vspace{-0.79cm}
\subfigure{\label{fig:a}\includegraphics[width=0.40\linewidth]{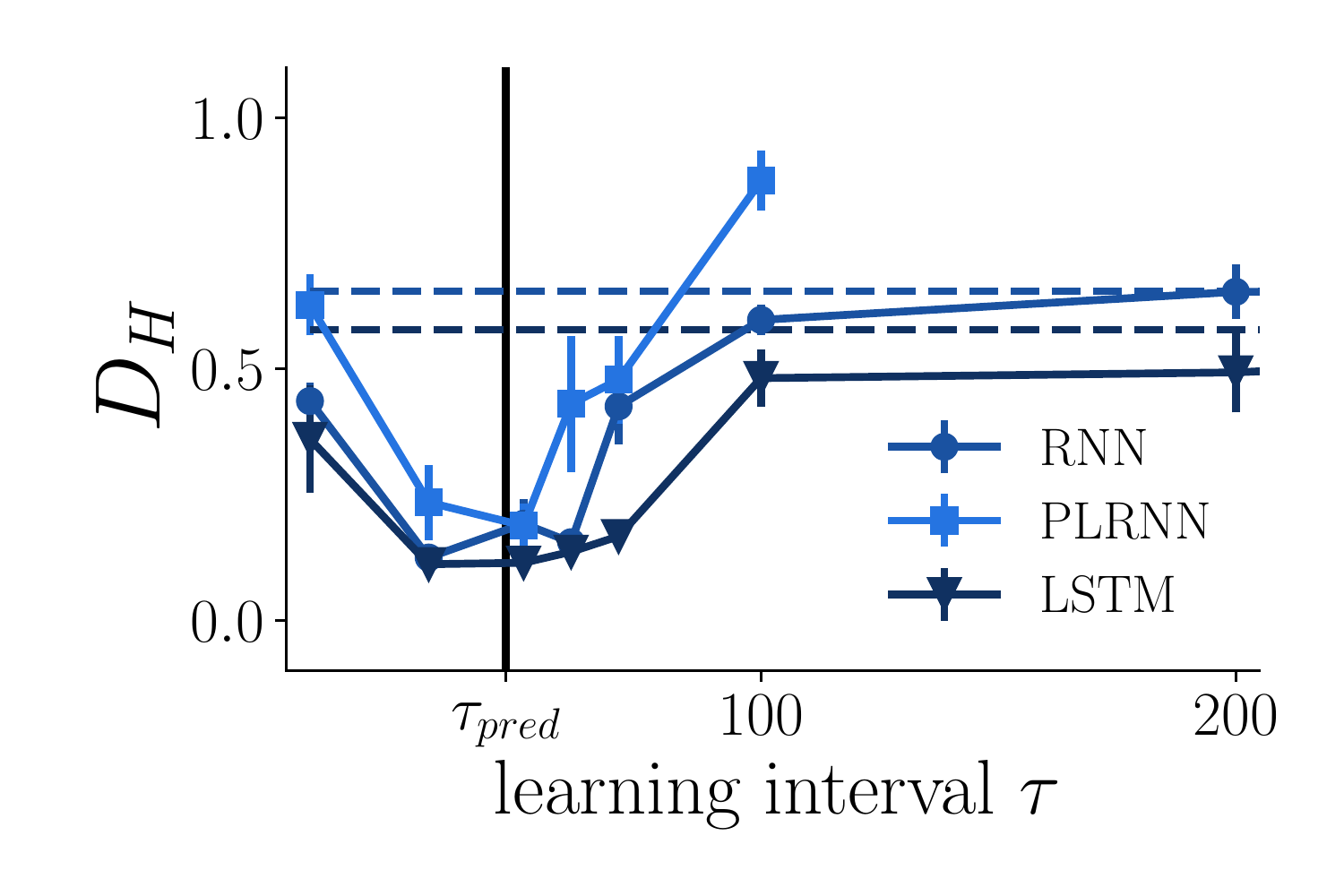}}
\\
\vspace{-0.4cm}
\subfigure{\label{fig:b:1}\includegraphics[width=0.40\linewidth]{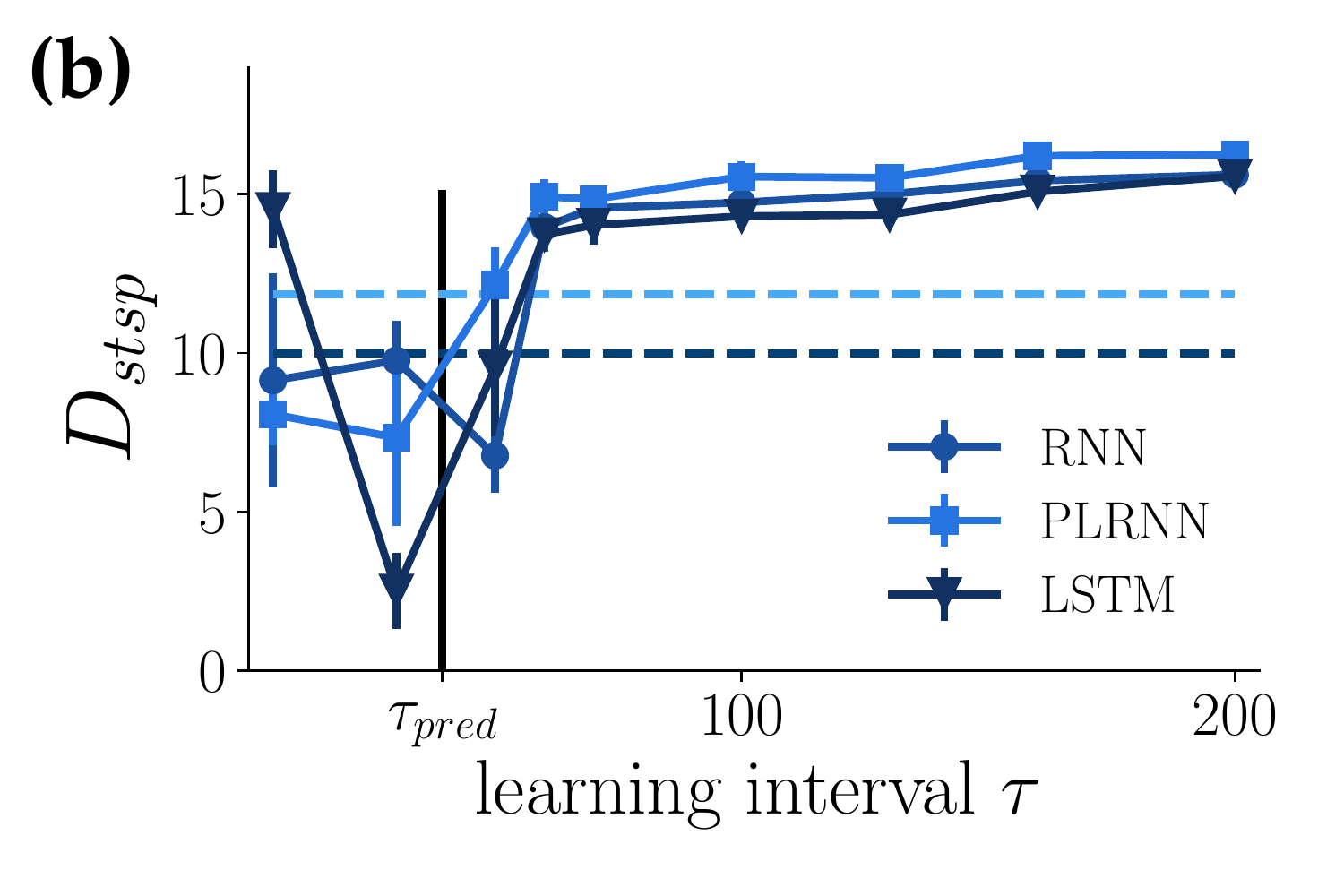}}
\hspace{1cm}
\subfigure{\label{fig:b}\includegraphics[width=0.40\linewidth]{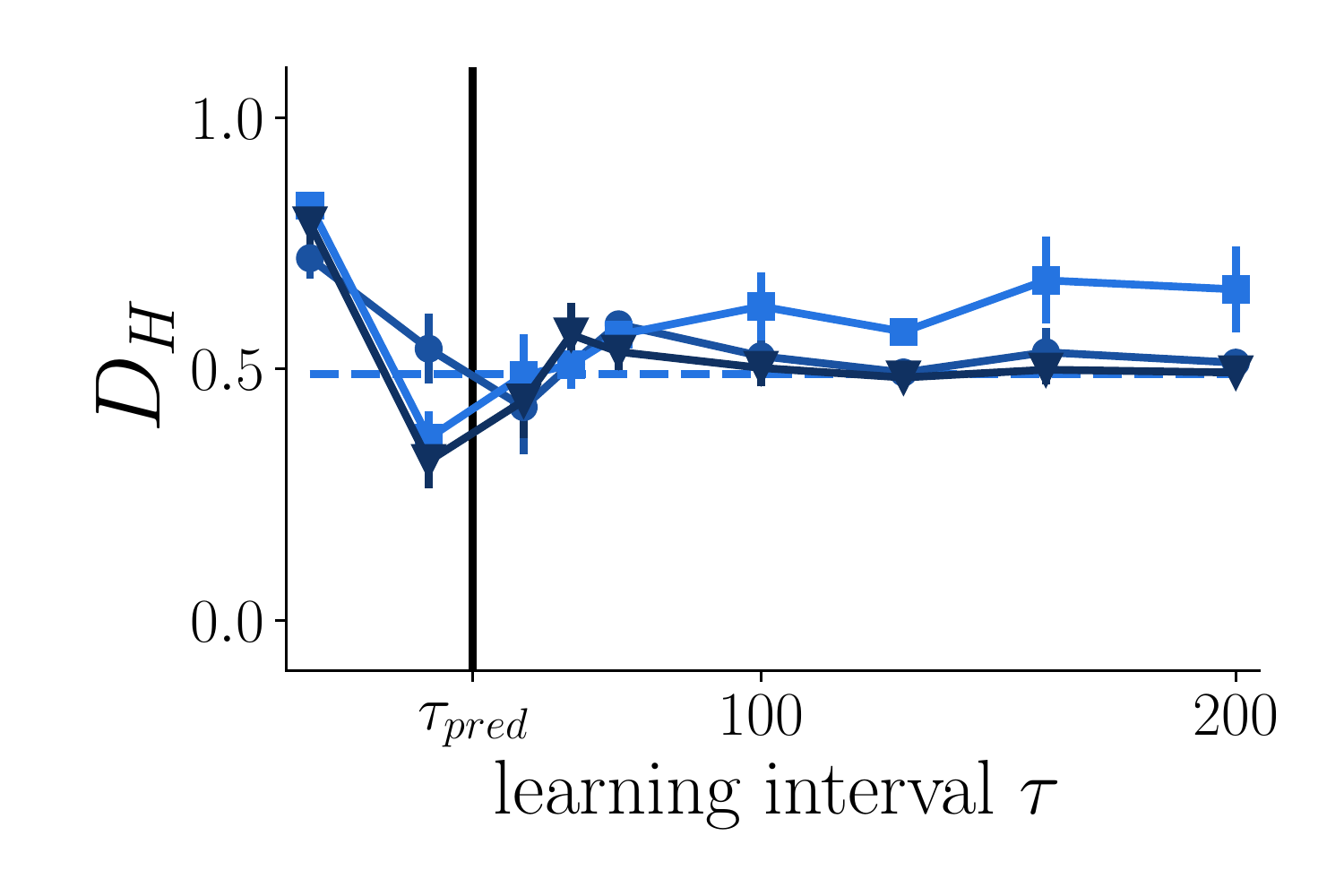}}
\vspace{-0.1cm}
\caption{\small Overlap in attractor geometry ($D_{stsp}$, lower = better) and dimension-wise comparison of power-spectra  ($D_H$, lower = better) against learning interval $\tau$ for (a) the Lorenz and (b) the chaotically forced Duffing oscillator. Continuous lines = %show performance for 
sparsely forced BPTT. %For both systems, Lorenz and Rössler, a optimal region coinciding with the prediction time is observed.  
Dashed lines = %indicate performance for training with 
classical BPTT with gradient clipping. Prediction time indicated vertically in black.}\label{fig:tau-sweep2}
%-----
\normalsize
%------
\end{figure*}
%
%Results on chaotic dynamical systems benchmarks}
 %Let us illustrate these ideas on two classical textbook examples of chaotic DS, the chaotic Lorenz attractor and the Rössler system (see Appx. \ref{supp:lorenz} for details). Trajectories of length $T=1000$ were repeatedly drawn from these systems, on which we trained a PLRNN, a vanilla RNN with $\tanh$ activation function, and a LSTM %(for the latter two the PyTorch implementation was used) 
 Let us illustrate these ideas on two classical textbook examples of chaotic DS, the chaotic Lorenz attractor as an autonomous system, and the chaotically forced Duffing oscillator as an example with explicit external input (see Appx. \ref{supp:lorenz} for details). Trajectories %of length $T=1000$ (Lorenz) and $T=200$ (Duffing) 
 were repeatedly drawn from these systems, on which we trained a PLRNN, a vanilla RNN with $\tanh$ activation function, and a LSTM %(for the latter two the PyTorch implementation was used) 
 by stochastic gradient descent (SGD) to minimize the MSE loss between predicted and actual observations. As optimizer we used Adam \citep{kingma_adam_2017} from PyTorch \citep{Paszke2017AutomaticDI} with a learning rate of $0.001$. For all models, training proceeded solely by \textit{sparsely forced BPTT} and did not employ gradient clipping or any other technique that may interfere with optimal loss truncation.

In nonlinear DS reconstruction, we are mainly interested in reproducing \textit{invariant} properties of the underlying system such as the attractor geometry (or topology \cite{takens_strange_1982,sauer_embedology_1991}) or the frequency composition (i.e., time-independent properties), 
%(i.e., time-averaged properties), 
while measures like ahead-prediction errors are less meaningful especially on chaotic time series \citep{wood_statistical_2010,Koppe_nonlin_2019}. 
%Note that models with low ahead-prediction error do not necessarily reproduce these invariant properties \citep{wood_statistical_2010,Koppe_nonlin_2019}. 
Thus, in evaluating training performance, here we follow \citet{Koppe_nonlin_2019} in using a Kullback-Leibler divergence $D_{stsp}$ to quantify the agreement between observed and generated probability distributions across state-space to asses the overlap in attractor geometry (Appx. \ref{supp-klx}). %Moreover, we employ a dimension-wise frequency correlation measure (PSC) to quantify the agreement of power-spectra of the observed and generated time-series (Appx. \ref{supp-psc}).
Moreover, we calculate the dimension-wise
Hellinger distance $D_H$ between power spectra to quantify the temporal agreement of the observed and generated time-series (Appx. \ref{supp-psc}).

Fig. \ref{fig:tau-sweep2} shows the dependence of the reconstruction quality on the learning interval $\tau$ for all RNN architectures on (a) the Lorenz and (b) the externally forced Duffing system. Fig. \ref{fig:lorenz-reconstruction} provides particular examples of reconstructions for $\tau$ chosen too small, too large, or about right.
\begin{figure}[!htb]
\centering 
%\vspace{-.2cm}
\subfigure{\label{fig:a:1}\includegraphics[width=0.20\linewidth]{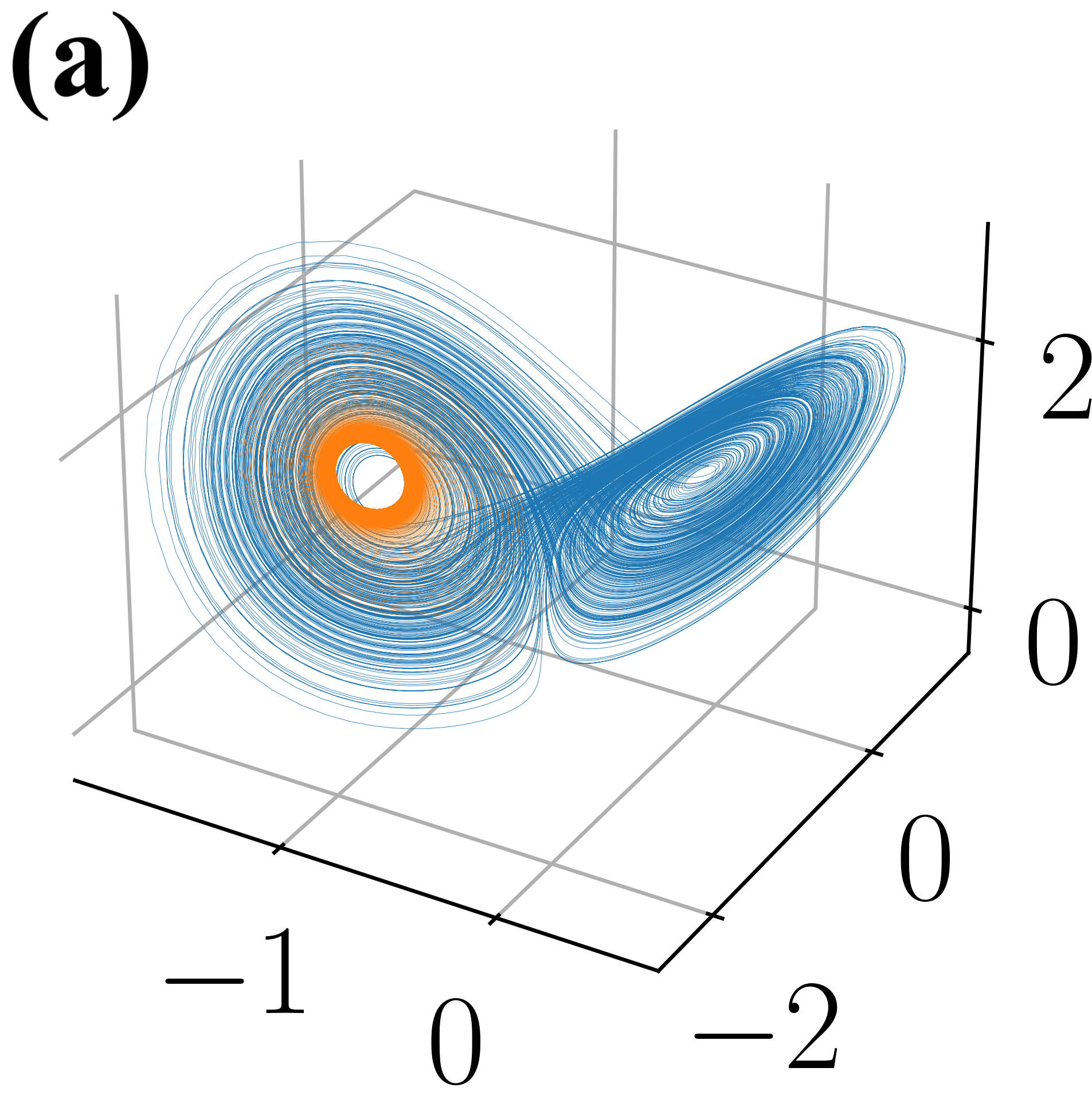}}
%--
\hspace{1.2cm}
%---
\subfigure{\includegraphics[width=0.20\linewidth]{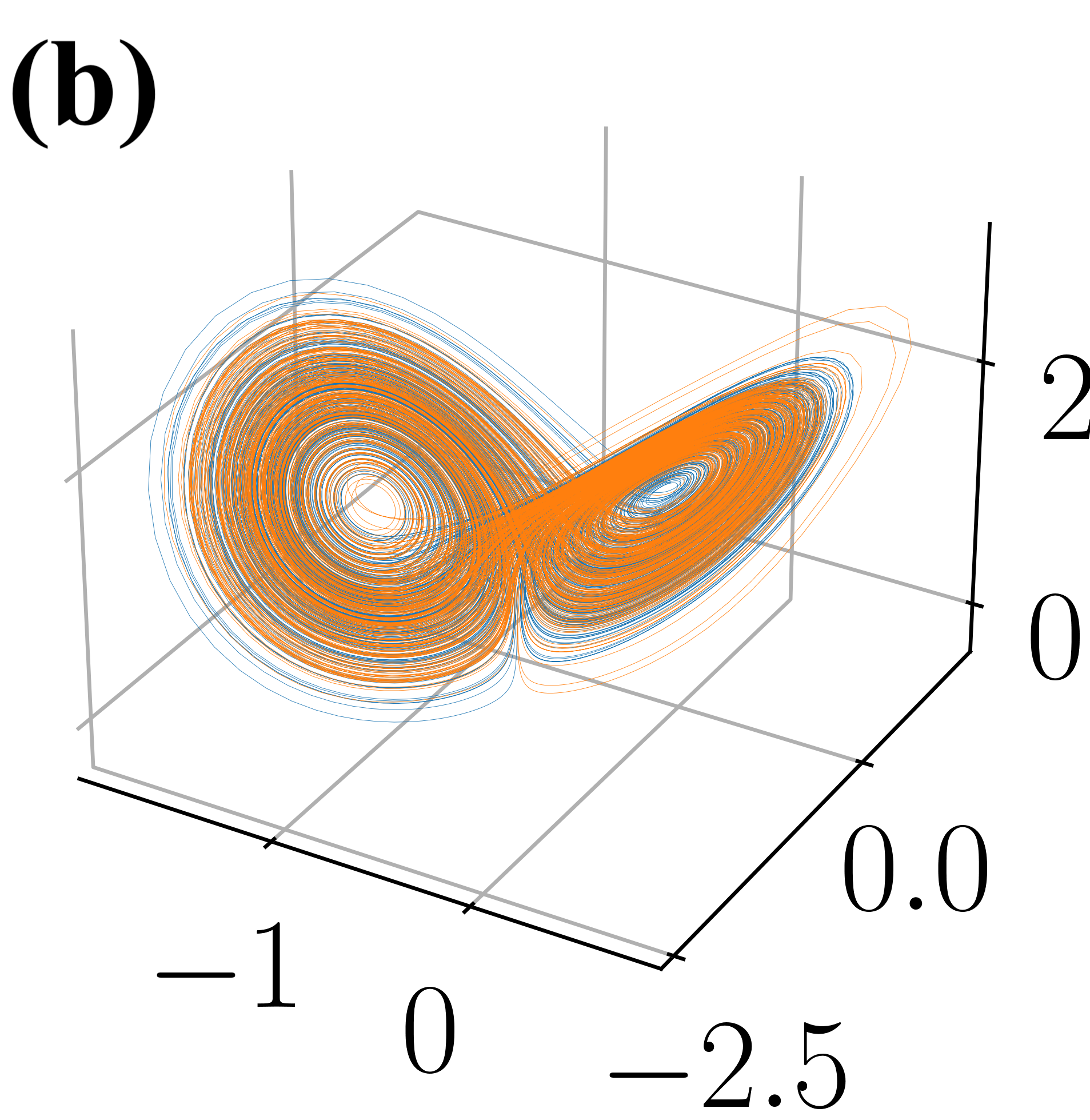}}
%--
\hspace{1.2cm}
%---
\subfigure{\includegraphics[width=0.20\linewidth]{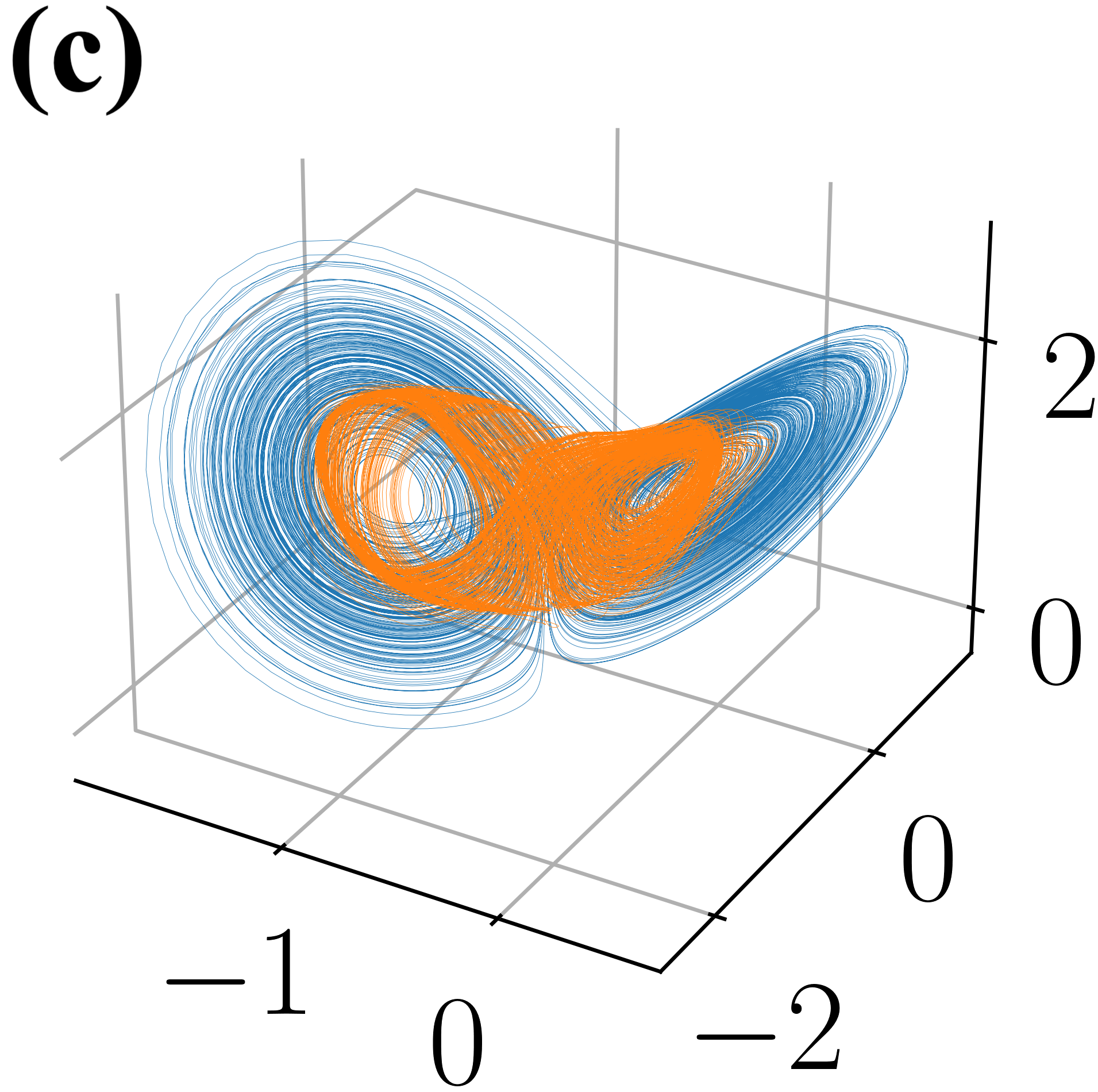}}
%[width=68mm]
%\vspace{-0.1cm}
\caption{\small Lorenz attractor (blue) and example reconstructions by an LSTM (orange) trained with a learning interval (a) chosen too small ($\tau = 5$), (b) chosen optimally ($\tau = 30$), and (c) chosen too large ($\tau=200$). See Fig. \ref{fig:RNN-reconstruction} for a vanilla RNN example.}\label{fig:lorenz-reconstruction}
%-----
\normalsize
%-----
\end{figure}%(see  Fig. \ref{fig:roessler-reconstruction} for the Rössler system).
%-------------------------------
~\\

For all models we find a system-dependent range for the optimal learning interval that agrees well with the predictability time defined in eqn.\eqref{eq:pred_time}, where estimates for the maximal Lyapunov exponent were taken from the literature \citep{rosenstein_practical_1993, gilpin2021chaos}. As a reference, dashed lines represent the reconstruction performance for all architectures when trained with classical BPTT and gradient clipping. The training procedure was the same as for sparsely forced BPTT, except that instead of supplying a control-signal, gradients were normalized to $1$ prior to each parameter update %\textcolor{blue}{
(see Fig. \ref{fig:gradClipping} for a more systematic evaluation of different clipping procedures and thresholds). As evidenced by the much worse performance, gradient clipping does not effectively address the EVGP, \textit{even for LSTMs}. As further shown in Fig. \ref{fig:windowing} in Appx. \ref{sec:windowing}, using the optimal %\textcolor{blue}{
(or any other) window length $\tau$ but resetting the initial condition for each chunk to %\textcolor{blue}{
either zero or the last forward-iterated state $F_{\boldsymbol\theta}(\vz_{t-1})$ 
(instead of its control value $\tilde\vz_{t}$) equally destroys performance. This suggests that neither mere gradient normalization nor simple windowing are sufficient, but will wipe out essential information about the dynamics.
% \\
 
In Appx. \ref{sec-supp-emp} we collect further results on the chaotic Rössler attractor (Fig. \ref{fig:tau-sweep-roessler} \& \ref{fig:roessler-reconstruction}), high-dimensional Mackey-Glass equations (Fig. \ref{fig:tau-sweep-MackeyGlass}), and the Lorenz attractor with partial observations (Fig. \ref{fig:tau-sweep-partObs}).
%We found that a principled way of choosing $\tau$ can be based on the maximal Lyapunov exponent of the system. Specifically, if we choose $\tau$ in accordance with the predictability time \citep{bezruchko_extracting_2010}
%-----------------------------------
%\begin{align}
%\tau_{\text{pred}}\, =\, \frac{\ln2}{\lambda_{\max}}
%\end{align} gives a reasonable estimate of the optimal value for $\tau$. 
% {\color{red} The maximal Lyapunov exponent for the dynamical systems benchmarks was taken from the literature \citep{rosenstein_practical_1993}. 
%---------------------------------
% width = 97pt
%-----------------------
%-----------------------------------
%-----------------------------------
%\subsection{Empirical results}
%
%As Figure \ref{fig:tau-sweep} and \ref{fig:tau-sweep-emp-data} show, gradient clipping is not able to satisfyingly solve the exploding gradient problem as the relevant temporal information contained in the Jacobians $\partial \vz_T / \partial \vz_t$ is lost when normalizing the gradients {\color{red}(with floating point precision ... is this explanation relevant?).}
%On the other hand, sparsely forced BPTT with optimally chosen learning interval $\tau$ allows all studied architectures to satisfyingly reconstruct the chaotic systems, significantly outperforming the same architectures trained with BPTT and gradient clipping.
%%%%%%%%%%%%%%%%%%%%%%%%%%%%%%%%%%%%%
%\vspace{-.2cm}
\subsection{Example 2: Chaotic weather data}
\label{Sec:Example2}
\begin{figure*}[!h]
\centering
\subfigure{\label{fig:a:1}\includegraphics[width=0.328\linewidth]{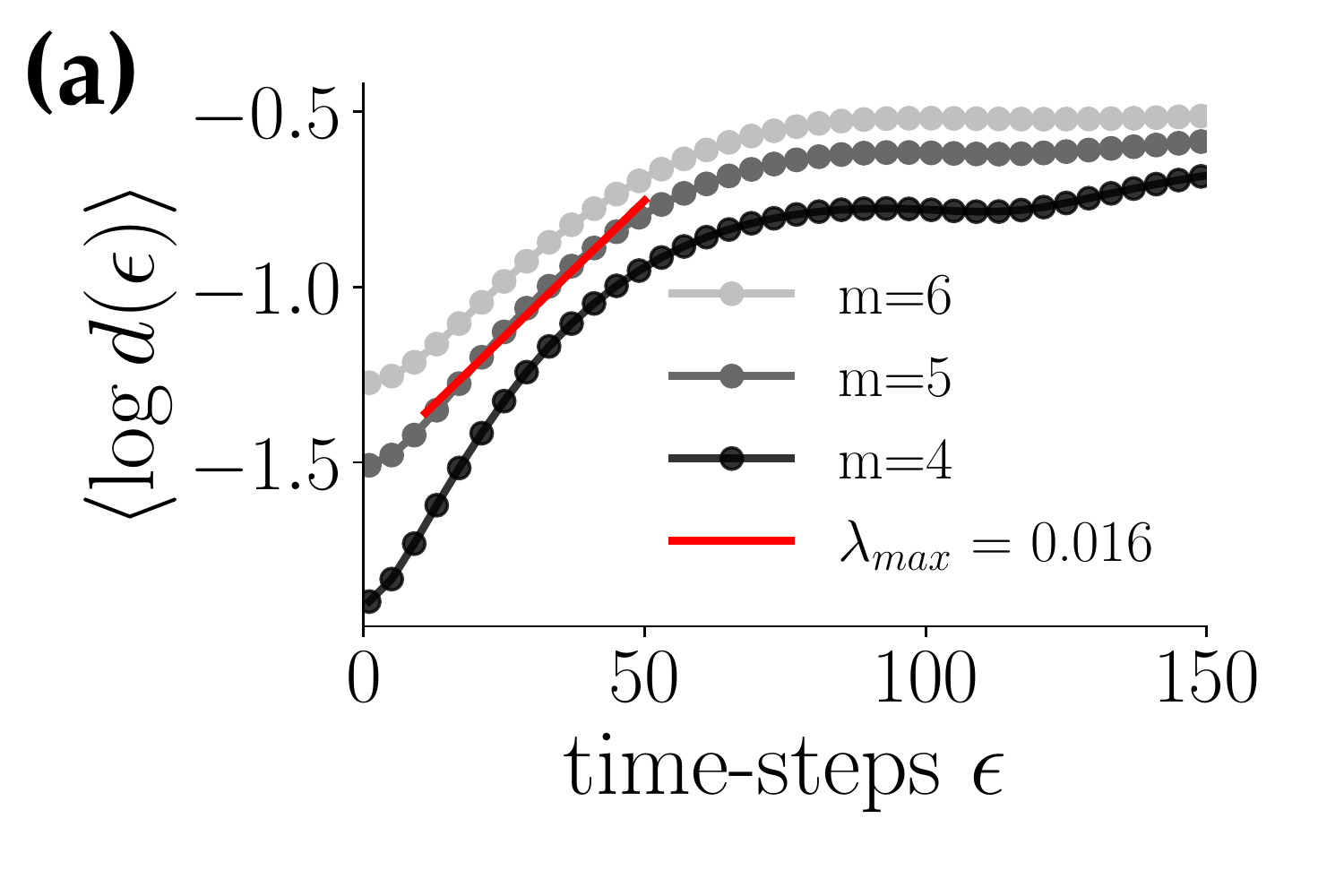}}
\subfigure{\label{fig:b:1}\includegraphics[width=0.328\linewidth]{%iclr2022/
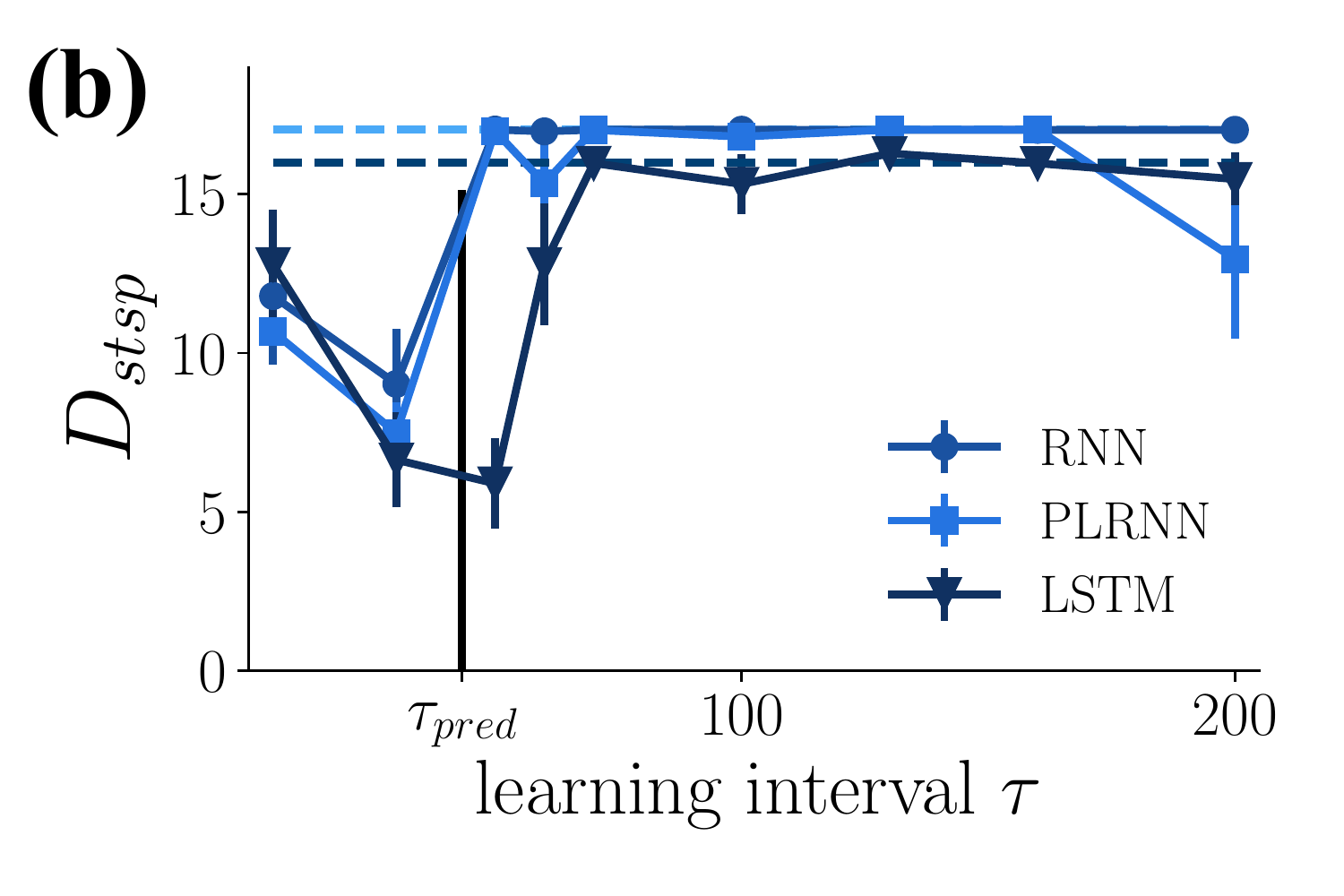}}
\subfigure{\label{fig:a}\includegraphics[width=0.328\linewidth]{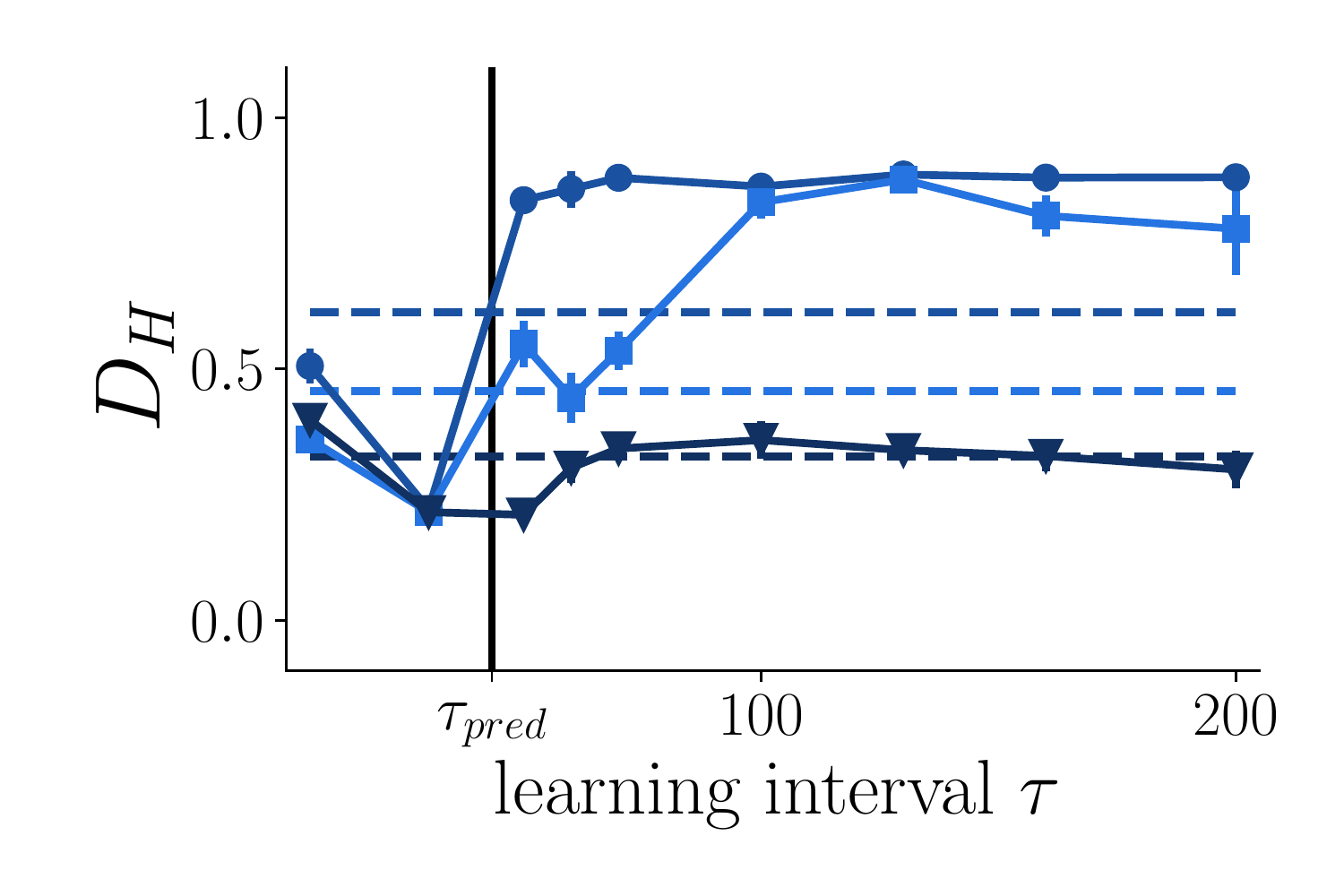}}
%[width=68mm]
\caption{\small (a) The maximal Lyapunov exponent was determined as the slope of the average log-divergence of nearest neighbors in embedding space ($m=$ embedding dimension). (b) Reconstruction quality assessed by attractor overlap (lower = better) and dimension-wise comparison of power-spectra  ($D_H$, lower = better). Black vertical lines = $\tau_{\text{pred}}$. }\label{fig:tau-sweep-emp-data}
%-------------------------------------
\normalsize
\end{figure*}
~\\
As for an empirical example, we trained all RNNs (vanilla RNN, PLRNN, LSTM) on a temperature time series recorded at the \href{https://www.bgc-jena.mpg.de/wetter/}{Weather Station at the Max Planck Institute for Biogeochemistry in Jena, Germany}. %, spanning the time period between 2009 and 2016. %\footnote{The particular dataset used here was reassembled by François Chollet for the book \textit{Deep Learning with Python} and can be accessed at \href{https://www.kaggle.com/pankrzysiu/weather-archive-jena}{https://www.kaggle.com/pankrzysiu/weather-archive-jena}.} 
To expose the chaotic behavior and obtain a robust estimate of the maximal Lyapunov exponent, trends and yearly cycles were removed, and nonlinear noise-reduction was performed (\citep{kantz_nonlinear_1993}; Appx. \ref{supp:emp-time-series}). The maximal Lyapunov exponent was determined with the \textit{TISEAN} package \citep{hegger_practical_1999}, as shown in Figure \ref{fig:tau-sweep-emp-data} (a). The value obtained is in close agreement with the literature %\textcolor{blue}{
($\lambda_{max} \approx 0.017$ 
%\approx 0.0174$
\citep{millan_nonlinear_2010}).

Figure \ref{fig:tau-sweep-emp-data} shows that also for these empirical data the optimal training interval $\tau$ agrees well with the predictability time, eqn.\eqref{eq:pred_time}, for all trained RNNs. Furthermore, as was the case for the DS benchmarks, gradient clipping was not able to satisfactorily tackle the EVGP, even when paired with architectures like LSTMs %or regularized PLRNNs 
explicitly designed for alleviating this problem. Similar results are reported for another real-world dataset,  electroencephalogram (EEG) recordings, in Appx. \ref{Sec:EEG-analysis}.%\ref{fig:tau-sweep-emp-data-EEG}.
%----------------------------------
%\begin{figure*}[!h]
%\centering
%\subfigure{\label{fig:a:1}\includegraphics[width=0.32\linewidth]{}}
%\subfigure{\label{fig:b:1}\includegraphics[width=0.3\linewidth]{%iclr2022/
%}}
%\subfigure{\label{fig:a}\includegraphics[width=0.32\linewidth]{}}
%[width=68mm]
%\vspace{-0.1cm}
%\caption{\small (a) The maximal Lyapunov exponent was determined as the slope of the average log-divergence of nearest neighbors in embedding space ($m=$ embedding dimension). (b) Reconstruction quality assessed by attractor overlap (lower = better) and power-spectrum correlation (higher = better). Black vertical lines = $\tau_{\text{pred}}$. }\label{fig:tau-sweep-emp-data}
%------
%\normalsize
%\end{figure*}
%---------------------------------------
%%%%%%%%%%%%%%%%%%%%%%%%%%
\section{Discussion and conclusions} 
\label{Sec:Disc}
    In this paper we proved that RNN dynamics and loss gradients are intimately related for all major types of RNNs and activation functions. If the RNN is ``well behaved'' in the sense that its dynamics converges to a fixed point or cycle, loss gradients will remain bounded, and established remedies
    \citep{hochreiter_long_1997,schmidt_identifying_2021} can be used to refrain them from vanishing. However, if the dynamics are chaotic, gradients will always explode. This constitutes a \textit{principle} problem in RNN training that cannot easily be mastered through architectural design or gradient clipping. This is because to avoid exploding gradients while training on time series from chaotic systems, one either needs to constrain the RNN so much that chaotic behavior is completely disabled to begin with (i.e., ultimately by forcing all Lyapunov exponents to be smaller or equal to zero), implying a very poor fit to such data. Or one needs to be a bit more lenient and thereby allow for the possibility of exploding gradients (as LSTMs or PLRNNs in fact do). This problem is furthermore practically highly relevant, as most time series we encounter in nature, and many from man-made systems as well, are inherently chaotic. 
    %It is furthermore a practically highly relevant one, as most time series we encounter in nature, and many from man-made systems as well, are inherently chaotic. 

    While we do not offer a full solution to this problem here, we suggest it might be tackled in training by taking a system's local divergence rates as measured through the Lyapunov spectrum into account. Hence, rather than conquering the EVGP by structural design or specific constraints or regularization terms, we recommend to put the focus more on the training process itself. 
    We illustrated this point empirically using \textit{sparsely forced BPTT}, a training technique that pulls trajectories back on track at times determined by the maximal Lyapunov exponent. Doing so leads to optimal reconstruction results for a variety of simulated and real-world benchmarks, regardless of the specific RNN architecture employed in training.     
    %As a step into this direction, we proposed \textit{sparsely forced BPTT}, a training technique that pulls trajectories back on track at times determined by the maximal Lyapunov exponent. We empirically demonstrated that this leads to optimal reconstruction results for two prominent DS benchmarks and one empirical time series, regardless of the specific RNN architecture employed in training.

    As noted in sect. \ref{Sec:Training}, fairly standard packages are available for computing maximal Lyapunov exponents from data. Some background knowledge, as provided in classical textbooks (e.g. Ch. 5 in \citep{kantz_schreiber_2003}), may be required for properly reading the output from these packages: Essentially, one would be looking be for a linear scaling region as in Figs. \ref{fig:tau-sweep-emp-data}a \& \ref{fig:tau-sweep-emp-data-EEG}a, ignoring both the initial noise transient as well as the plateau caused by reaching the full attractor extent. If unsure about the exact value, a moderate amount of jittering around the estimated mean value may help (see Appx. Fig. \ref{fig:random-sparseTF}). A further interesting direction for improvement might be to regulate the forcing interval through an annealing procedure \citep{abarbanel_predicting_2013,abarbanel_machine_2018}, for instance starting at $\tau=1$ and ramping up to $\tau=\tau_{\text{pred}}$ throughout training, similar as in adaptive schemes \citep{bengio_scheduled_2015}.\footnote{We thank one of the referees for pointing this out.} 
    The idea here would be to first get the short-term behavior right, and then challenge the system more and more for longer time spans until the predictability time is reached.

    We stress that our goal here above all was to provide a mathematically grounded perspective on the problem, with the empirical section focused on elucidating the practical implications of the theoretical results. 
    We believe that a more thorough theoretical understanding is important and needed for guiding future research into more powerful training procedures that avoid exploding gradients \textit{without} compromising expressiveness. %when it comes to DS reconstruction. 
    %Emphasizing ``global stability'' too much in RNN design, for instance, is likely to interfere with an RNN's ability to capture chaotic data. 
    In our application examples, we developed the case from the perspective of scientific machine learning, which by now is a broad area in its own right with huge societal relevance (e.g., climate or epidemiological time series), and where the reconstruction of geometrical or topological (invariant) properties is important, beyond mere prediction. Nevertheless, we believe that our theoretical results will also have implications for other domains, like NLP \citep{inoue_transient_2021}. While scientific time series problems traditionally have been extensively considered from a DS perspective (e.g., \citep{kantz_schreiber_2003}), much more groundwork is needed, however, in areas like NLP, where, for instance, it may not even be immediately clear how to best define a Lyapunov spectrum.
    %Empirically, for instance, precise Lyapunov exponents may sometimes be hard to obtain, although our empirical examples confirm that estimates based on log-divergence plots may work sufficiently well.
    
    %For instance, Lyapunov exponents are empirically often hard to determine, in case of which $\tau$ may be regarded more as a hyper-parameter that needs tuning. %Situations where the RNN is strongly guided by external inputs (unlike the examples in Sect. \ref{EmpSim}) also need further consideration, as external forcing can profoundly change the dynamics.

All code from this paper is available at \url{https://github.com/DurstewitzLab/ChaosRNN}.

\section*{Acknowledgements}
%\begin{acknowledgements}
This work was funded by the German Research Foundation (DFG) under Germany's Excellence Strategy – EXC-2181 – 390900948 (STRUCTURES), and through grant Du 354/10-1 to DD.
%\end{acknowledgements}
%%%%%%%%%%%%%%%%%%%%%%%%%%%%%%%%%%%%%%%%%%%%%%%%%%
%%%###################################################################################################
%\section*{Accessibility}
%Authors are kindly asked to make their submissions as accessible as possible for everyone including people with disabilities and sensory or neurological differences.
%Tips of how to achieve this and what to pay attention to will be provided on the conference website \url{http://icml.cc/}.

%\bibliography{example_paper}
\newpage
\printbibliography{}
\newpage
\begin{enumerate}
\item For all authors...
\begin{enumerate}
  \item Do the main claims made in the abstract and introduction accurately reflect the paper's contributions and scope?
    \answerYes{}
  \item Did you describe the limitations of your work?
    \answerYes{See Sec. \ref{Sec:Disc}. }
   \item Did you discuss any potential negative societal impacts of your work?
    \answerNA{}
  \item Have you read the ethics review guidelines and ensured that your paper conforms to them?
    \answerYes{}
\end{enumerate}

\item If you are including theoretical results...
\begin{enumerate}
  \item Did you state the full set of assumptions of all theoretical results?
    \answerYes{We provide comprehensive proofs of all presented theorems in the Appendix. }
        \item Did you include complete proofs of all theoretical results?
    \answerYes{}
\end{enumerate}

\item If you ran experiments...
\begin{enumerate}
  \item Did you include the code, data, and instructions needed to reproduce the main experimental results (either in the supplemental material or as a URL)?
    \answerYes{}{The code will be made public after publication.}
  \item Did you specify all the training details (e.g., data splits, hyperparameters, how they were chosen)?
    \answerYes{}{}
        \item Did you report error bars (e.g., with respect to the random seed after running experiments multiple times)?
    \answerYes{}
        \item Did you include the total amount of compute and the type of resources used (e.g., type of GPUs, internal cluster, or cloud provider)?
    \answerNA{}
\end{enumerate}

\item If you are using existing assets (e.g., code, data, models) or curating/releasing new assets...
\begin{enumerate}
  \item If your work uses existing assets, did you cite the creators?
    \answerYes{}
  \item Did you mention the license of the assets?
    \answerYes{}
  \item Did you include any new assets either in the supplemental material or as a URL?
    \answerNo{}
  \item Did you discuss whether and how consent was obtained from people whose data you're using/curating?
    \answerNA{}
  \item Did you discuss whether the data you are using/curating contains personally identifiable information or offensive content?
    \answerNA{}
\end{enumerate}

\item If you used crowdsourcing or conducted research with human subjects...
\begin{enumerate}
  \item Did you include the full text of instructions given to participants and screenshots, if applicable?
    \answerNA{}
  \item Did you describe any potential participant risks, with links to Institutional Review Board (IRB) approvals, if applicable?
    \answerNA{}
  \item Did you include the estimated hourly wage paid to participants and the total amount spent on participant compensation?
    \answerNA{}
\end{enumerate}

\end{enumerate}

%%%%%%%%%%%%%%%%%%%%%%%%%%%%%%%%%%%%%%%%%%%%%%%%%%%%%%%%%%%%

\newpage
\appendix

\section{Appendix}\label{app}
%###########################
\subsection{Theorems: Preliminaries}\label{pre-all}
%##################
\subsubsection{Transforming non-autonomous into autonomous discrete-time DS}\label{auto_to_nonauto} 
Following \citep{zhang2009}, and based on similar reasoning as for continuous time (ODE-based) DS \citep{alligood1996,perko2001}, let us consider the non-autonomous discrete-time DS 
%--------------------------
\begin{align}\label{eq:non-aut}
\vx_{t+1} \,=\, F(\vx_t,t),  \hspace{.5cm} \vx \in \mathbb{R}^{n}.      
\end{align}
%-------------------------------
Defining $\,\vz_t \,=\, (\vx_t,t)\tran\,$ and $\,G(\vz_t) \,= \,(F(\vx_t,t),t+1)\tran\,$, system \eqref{eq:non-aut} can be rewritten as the autonomous system 
%-----------------
\begin{align}\label{eq:aut}
 \vz_{t+1} = G(\vz_t), \hspace{.5cm} \vz\in \mathbb{R}^{n+1}.
\end{align}
%------------------
Hence, in all our theoretical treatment we can confine our attention to systems of the form \eqref{eq:aut}. %eqn. \eqref{eq:non-aut}.
%######################
\subsubsection{RNN derivatives}\label{def_pre_rnn} 
Considering the loss function $\mathcal{L}\,= \,\sum_{t=1}^T \mathcal{L}_t\,$ of %the generic RNN \eqref{eq-rnn}
an RNN $F_{\boldsymbol\theta} \in \mathcal{R}$ parameterized by $\boldsymbol\theta$, we have
%---------------------------------------
\begin{align}\label{eq:lo}
\frac{\partial \mathcal{L}}{\partial \theta} \,=\, \sum_{t=1}^T \frac{\partial \mathcal{L}_t}{\partial \theta},
\end{align}
%-----------------------------
where
%-------------------
\begin{align}\label{eq-s-gr}
\frac{\partial \mathcal{L}_t}{\partial \theta} \, = \, \frac{\partial \mathcal{L}_t}{\partial \vz_t} \, \frac{\partial \vz_t}{\partial \theta}.
\end{align}
%------------------------------------
~\\
The tangent vector $\frac{\partial \vz_T}{\partial \theta}$ has the form
%--------------------------------
\begin{align}\label{eq:tangent-vec}
 \frac{\partial \vz_T}{\partial \theta}
 %\, = \, \frac{\partial \vz_T}{\partial \theta} 
  %\frac{\partial^{+} \vz_t}{\partial \theta}\, + \, \sum_{t=1}^{T-2} \bigg( \prod_{r=0}^{t-1} \frac{\partial \vz_{T-r}}{\partial \vz_{T-r-1}} \bigg) \frac{\partial \vz_{T-t}}{\partial \theta}
     \,= \,
     \frac{\partial^{+} \vz_T}{\partial \theta}\, + \, \sum_{t=1}^{T-2} \bigg( \prod_{r=0}^{t-1} \mJ_{T-r} \bigg)\frac{\partial^{+} \vz_{T-t}}{\partial\theta},
\end{align}
%--------------------------------
where $\partial^{+}$ denotes the immediate partial derivative. Since for an RNN $F_{\boldsymbol\theta} \in \mathcal{R}$ the activation function is element-wise, with $\theta$ the $m$-th element of a parameter vector $\boldsymbol{\theta}$ (or belonging to the $m$-th row of a parameter matrix $\boldsymbol{\theta}$), we have
%------------------------------------
\begin{align}
\frac{\partial^{+} \vz_T}{\partial \theta} \, = \, \begin{pmatrix}
0 & \cdots & 0 &  \frac{\partial^{+} z_{m,T}}{\partial \theta} & 0 & \cdots & 0
\end{pmatrix}\tran.   
\end{align}
%---------------------------------------
~\\
For instance, let $\boldsymbol\theta \, = \, \mW$ be a weight matrix, then\\
%---------------------------
\begin{align}\label{mat-gra}
\frac{\partial \mathcal{L}}{\partial \mW}
\, = \, 
\begin{pmatrix}
\frac{\partial \mathcal{L}}{\partial w_{11}} & \frac{\partial \mathcal{L}}{\partial w_{12}} & \cdots & \frac{\partial \mathcal{L}}{\partial w_{1M}} \\[1ex]
\frac{\partial \mathcal{L}}{\partial w_{21}} & \frac{\partial \mathcal{L}}{\partial w_{22}} & \cdots & \frac{\partial \mathcal{L}}{\partial w_{2M}}
\\[1ex]
\vdots
\\[1ex]
\frac{\partial \mathcal{L}}{\partial w_{M1}} & \frac{\partial \mathcal{L}}{\partial w_{M2}} & \cdots & \frac{\partial \mathcal{L}}{\partial w_{MM}}
\end{pmatrix}.
%--
\end{align}
%----------------------------------------
~\\
In this case, for the standard RNN we have
%---------------------------
\begin{align}\label{}
  \frac{\partial^{+} \vz_{T}}{\partial w_{mk}} 
  %& \, = \,  \begin{pmatrix}
%0 & \cdots & 0 &  \frac{\partial^{+} z_{m,T}}{\partial \theta} & 0 & \cdots & 0
%\end{pmatrix}\tran 
\, = \, 
\begin{pmatrix}
0 & \cdots & 0 &  z_{k,T-1}\,\xi_{mk}(\vz_{T-1}) & 0 & \cdots & 0
\end{pmatrix}\tran
\, = \, \mathbf{1}_{(m,k)}\, \xi_{mk}(\vz_{T-1})\, \vz_{T-1},
\end{align}
%
%-------------------------------------
~\\
where $\, \xi_{mk}(\vz_{T-1})\, = \,f_{w_{m,k}}^{\prime}\big( \sum_{j=1}^{M} w_{mj}\,z_{j,T-1} \,+\, \sum_{j=1}^{M} b_{mj}\,s_{j,T}+ h_m  %g_m( \vx_T ) 
\big)\, $, and $f_{w_{m,k}}^{\prime}$ stands for the derivative of $\,f \,$ with respect to $w_{m,k}$.\\

Therefore, for standard RNNs, \eqref{eq:tangent-vec} becomes
%--------------------------------
\begin{align}\label{eq-total_d}
\frac{\partial \vz_T}{\partial w_{mk}} 
& \, = \, \mathbf{1}_{(m,k)}\, \xi_{mk}(\vz_{T-1})\, \vz_{T-1} \, + \, \sum_{t=1}^{T-2} \bigg( \prod_{r=0}^{t-1} \mJ_{T-r} \bigg)\mathbf{1}_{(m,k)}\, \xi_{mk}(\vz_{T-t-1})\, \vz_{T-t-1}.
\end{align}
%--------------------------------
\\
%##################################
\subsubsection{Piecewise-linear RNN (PLRNN)}\label{sec:plrnn}
The PLRNN has the generic form \citep{Koppe_nonlin_2019,schmidt_identifying_2021}
%--------------------------
\begin{align}\label{eq-plrnn}
 \vz_{t} \, = \, F(\vz_{t-1}) \, = \,  \mA \, \vz_{t-1}\, + \, \mW \phi(\vz_{t-1}) \, +\, \mC \vs_t \,+ \, \vh + \bm{\varepsilon}_t, 
\end{align}
%--------------------------
~\\
where $\phi(\vz_{t-1})=\max(\vz_{t-1}, 0)$ is the element-wise rectified linear unit (ReLU) function, $\vz_{t} \in \mathbb{R}^{M}$ is the neural state vector, $\mA \in \mathbb{R} ^{M\times M}$ is a diagonal matrix of auto-regression weights, $\mW  \in \mathbb{R} ^{M\times M}$ is a matrix of connection weights, $\vh\in\sR^{M}$ is the bias vector, $\vs_t \in \sR^{K}$ the external input weighted by $\mC\in\sR^{M\times K}$, and $\bm{\varepsilon}_t \,\sim\, \gN(0, \bm{\Sigma})$ a Gaussian noise term with diagonal covariance matrix $\bm{\Sigma}$. 

Equation \eqref{eq-plrnn} can be rewritten as  
%----------------
\begin{align}\label{eq-d-plrnn}
 \vz_t\, =\, (\mA + \mW \mD_{\Omega(t-1)}) \vz_{t-1} \, +\, \mC \vs_t \,+ \, \vh + \bm{\varepsilon}_t \,=:\, \mW_{\Omega(t-1)} \, \vz_{t-1} \, +\, \mC \vs_t \,+ \, \vh + \bm{\varepsilon}_t,
\end{align}
%--------------------------
~\\
where $\mD_{\Omega(t)} \coloneqq \mathrm{diag}(\vd_{\Omega(t)})$ is a diagonal matrix with $\vd_{\Omega(t)} \coloneqq \left(d_1, d_2, \cdots, d_M \right)$ an indicator vector such that $d_m(z_{m,t})=: d_m = 1$ whenever $z_{m,t}>0$, and zeros otherwise. 

For the PLRNN \eqref{eq-d-plrnn} we have 
%--------------------
\begin{align}
\mJ_t \, = \, \frac{\partial \vz_t}{\partial \vz_{t-1}} \, = \, \mW_{\Omega(t-1)} , 
\end{align}
%-------------------
~\\
and $\norm{\mW_{\Omega(t-1)}} \, \leq \, \norm{\mA} \, + \, \norm{\mW}$.\\

Furthermore, the derivatives \eqref{eq:tangent-vec} for the PLRNN \eqref{eq-d-plrnn} are\\
 %------------------------------------
 \begin{align}\label{}
 \frac{\partial \vz_T}{\partial w_{mk}}  \, = \,
 \mathbf{1}_{(m,k)} \mD_{\Omega(T-1)}\, \vz_{T-1} + \sum_{j=2}^{T-1} \, \, \bigg(\prod_{i=1}^{j-1} \mW_{\Omega(T-i)}\bigg) \, \mathbf{1}_{(m,k)} \mD_{\Omega(T-j)} \, \vz_{T-j} .
%\, + \, \prod_{i=1}^{T-2} \mW_{\Omega(T-i)} \, \, \frac{\partial \vz_2}{\partial w_{mk}}.
 %
\end{align} 
%-----------------------------
~\\
%#################################
\subsubsection{Long Short-Term Memory (LSTM)}\label{lstm}
The LSTM is defined by the equations
%-------------------------------
\begin{align}\nonumber
\vi_t&\, =\, \sigma\big(\mW_{ii} \vs_t \,+ \, \mW_{hi} \vh_{t-1}\, + \, \vb_{i}%\, + \, \vb_{hi} 
\big)
\\[1ex]\nonumber
\vf_t&\, =\,\sigma\big(\mW_{if} \vs_t \,+ \, \mW_{hf} \vh_{t-1}\, + \, \vb_{f}%\, + \, \vb_{hf} 
\big)
\\[1ex]\nonumber
\vg_t&\, =\, \tanh{\big(\mW_{ig} \vs_t \,+ \, \mW_{hg} \vh_{t-1}\, + \, \vb_{g}%\, + \, \vb_{hg} 
\big)}
\\[1ex]\nonumber
\vo_t&\, =\,\sigma\big(\mW_{io} \vs_t\,+ \, \mW_{ho} \vh_{t-1} \, + \, \vb_{o}%\, + \, \vb_{ho} 
\big)
\\[1ex]\nonumber
\vc_t&\, =\, \vf_t \,\odot\, \vc_{t-1} + \vi_t \,\odot\, \vg_t
\\[1ex]\label{eq-LSTM}
\vh_t&\, =\, \vo_t \,\odot\, \tanh{(\vc_t)}
\end{align}
%-------------------------------
where $\{\vs_t\}$ is the input sequence, $\mW$ denotes weight matrices, $\vb$ bias terms, $\vi_t, \vf_t, \vg_t, \vo_t$ demonstrate the input, forget, cell, and output gates, $\vh_t$ and $\vc_t$ are the hidden and cell states at time $t$ respectively, $\sigma$ is the sigmoid 
activation function, and $\odot$ represents the element-wise (Hadamard) product (see \citep{hochreiter_long_1997,graves_2016,vlachas_data-driven_2018} for further information on LSTMs). 

Defining $\vz_t \, :=\, (\vh_t, \vc_t)\tran$, the LSTM \eqref{eq-LSTM} can be represented as the first-order recursive map \\
%--------------------------------
\begin{align}\label{map-lstm}
\vz_t\, = \, %\Phi
F_{\boldsymbol{\theta}} (\vz_{t-1}) \, = \, 
\begin{pmatrix}
 \vo_t \,\odot\, \tanh{(\vf_t \,\odot\, \vc_{t-1} + \vi_t \,\odot\, \vg_t)}
\\[1ex]
\vf_t \,\odot\, \vc_{t-1} + \vi_t \,\odot\, \vg_t
\end{pmatrix}.
\end{align}
%-------------------------------------------
~\\
The term $\frac{\partial \mathcal{L}_t}{\partial \theta}$ in \eqref{eq:lo} for some LSTM parameter $\theta$ 
%weights $\mW \, = \, [\mW_i \, \,  \mW_f \,  \, \mW_g]\,$ 
can be written as
%-----------------------------------
\begin{align}\label{lo-lstm-c}
 \frac{\partial \mathcal{L}_t}{\partial \theta} \, = \, \sum_{r=1}^t \frac{\partial \mathcal{L}_t}{\partial \vh_t}\,\frac{\partial \vh_t}{\partial \vz_t}\,
 %\frac{\partial \mathcal{L}_t}{\partial \vz_t}\, 
 \frac{\partial \vz_t}{\partial \vz_r} \, \frac{\partial \vz_r}{\partial \theta}.
\end{align}
%-------------------------------
%~\\

A necessary condition for LSTMs to have a chaotic orbit is given by: \\
%-----------------------
\begin{proposition}\label{thm-lstm}
Let the LSTM given by \eqref{eq-LSTM} have a chaotic attractor $\Gamma^{*}$ with $\mathcal{B}_{\Gamma^{*}}$ as its basin of attraction. Then for every $\vz_1\, = \, (\vh_1, \vc_1)\tran \in \mathcal{B}_{\Gamma^{*}}$
%--------------------
\begin{align}\label{eq-ln2}
\gamma \, := \,\lim_{T\rightarrow\infty}  \sqrt[T]{\norm{  
\begin{pmatrix}
\frac{\partial \vh_T}{\partial \vh_{1}} &  \frac{\partial \vh_T}{\partial \vc_{1}}
\\[2ex]
\frac{\partial \vc_T}{\partial \vh_{1}} &  \frac{\partial \vc_T}{\partial \vc_{1}}
\end{pmatrix}} }\, > \, 1.    
\end{align}
%-------------------
\end{proposition}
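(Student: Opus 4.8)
The plan is to read Proposition \ref{thm-lstm} as the specialization of Theorem \ref{thm-3}(i) to the LSTM, once the joint hidden/cell pair is viewed as a single state variable. First I would set $\vz_t := (\vh_t,\vc_t)\tran \in \mathbb{R}^{2M}$ and recall the representation \eqref{map-lstm}, which exhibits the LSTM as a genuine first-order-Markovian map $\vz_t = F_{\boldsymbol\theta}(\vz_{t-1})$ of the form \eqref{eq-rnn}; since the gate nonlinearities $\sigma$ and $\tanh$ are smooth, the Jacobian $\mJ_t = \partial \vz_t/\partial \vz_{t-1}$ is a well-defined $2M\times 2M$ matrix, whose precise entries are irrelevant for the argument.

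Next I would apply the chain rule across time exactly as in \eqref{eq:prod_jacob}, giving $\partial \vz_T/\partial \vz_1 = \prod_{k=0}^{T-2}\mJ_{T-k} = J_T J_{T-1}\cdots J_2$, and observe that, in the $(\vh,\vc)$ block decomposition, this product is precisely the $2\times 2$ block matrix of partial derivatives of $(\vh_T,\vc_T)$ with respect to $(\vh_1,\vc_1)$ sitting inside the norm in \eqref{eq-ln2}. Therefore $\log\gamma = \lim_{T\to\infty}\frac{1}{T}\log\norm{J_T\cdots J_2}$, which is exactly the definition of the largest Lyapunov exponent $\lambda$ of the orbit $\mathcal{O}_{\vz_1}$, cf. \eqref{eq:lyap} \& \eqref{LE}.

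I would then close the argument as in the proof of Theorem \ref{thm-3}: a chaotic attractor $\Gamma^{*}$ has $\lambda>0$ by definition, and by Oseledec's multiplicative ergodic theorem almost every point of $\mathcal{B}_{\Gamma^{*}}$ carries this same largest exponent; for such $\vz_1$ we conclude $\gamma = e^{\lambda} > 1$. If one insists on the strict ``for every $\vz_1$'' reading of \eqref{eq-ln2}, the limit should be interpreted as a $\limsup$ on the remaining null set, mirroring the ``almost every orbit'' caveat already present in Theorem \ref{thm-3}; in any case only the inequality $\gamma>1$ is needed downstream to force the loss gradients to explode.

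The step that needs the most care is the block-matrix identification in the second paragraph: because the LSTM update feeds both $\vh_{t-1}$ and $\vc_{t-1}$ into each of $\vh_t$ and $\vc_t$ (through the gates $\vi_t,\vf_t,\vo_t$ and through $\vc_t = \vf_t\odot\vc_{t-1}+\vi_t\odot\vg_t$), one must verify that composing the full joint Jacobian $T-1$ times reproduces all four blocks of $\partial(\vh_T,\vc_T)/\partial(\vh_1,\vc_1)$ with none dropped --- this is just the multivariate chain rule, but it is the place where a careless ``per-component'' treatment would go wrong. Everything else (smoothness, existence of the Lyapunov exponent via Oseledec, and positivity from chaoticity) is inherited verbatim from the general theory already established for the class $\mathcal{R}$.
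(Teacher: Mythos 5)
Your proposal is correct and follows essentially the same route as the paper's proof: identify the product of joint $(\vh,\vc)$-Jacobians with the block matrix in \eqref{eq-ln2} via the chain rule (the paper does this by explicit induction in \eqref{eq-tavan}), equate $\log\gamma$ with the largest Lyapunov exponent, and invoke chaoticity plus Oseledec's theorem. Your observation that Oseledec only yields the claim for \emph{almost every} $\vz_1\in\mathcal{B}_{\Gamma^{*}}$, whereas the statement reads ``for every,'' is a fair point that the paper's own proof glosses over in the same way.
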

%-----------------
\begin{proof}\label{p-thm-lstm}
%%
%------------------
The Jacobian matrix of \eqref{map-lstm} for $t>1$ can be written in the block form
%-------------------------
\begin{align}\label{jac-lstm}
\frac{\partial \vz_t}{\partial \vz_{t-1}}\, = \, J_{t}  \, = \,
\begin{pmatrix}
\frac{\partial \vh_t}{\partial \vh_{t-1}} & \frac{\partial \vh_t}{\partial \vc_{t-1}}
\\[2ex]
 \frac{\partial \vc_t}{\partial \vh_{t-1}} & \frac{\partial \vc_t}{\partial \vc_{t-1}}
\end{pmatrix}.
\end{align}
%-------------------------------
~\\
Further, due to the chain rule, we have
%-----------------------------
\begin{align}\nonumber
J_{t}\,J_{t-1} &\, = \, 
\begin{pmatrix}
\frac{\partial \vh_t}{\partial \vh_{t-1}}\frac{\partial \vh_{t-1}}{\partial \vh_{t-2}}\, +\, \frac{\partial \vh_t}{\partial \vc_{t-1}}\frac{\partial \vc_{t-1}}{\partial \vh_{t-2}}& & \frac{\partial \vh_t}{\partial \vh_{t-1}}\frac{\partial \vh_{t-1}}{\partial \vc_{t-2}}\, +\, \frac{\partial \vh_t}{\partial \vc_{t-1}}\frac{\partial \vc_{t-1}}{\partial \vc_{t-2}}
\\[2ex]
\frac{\partial \vc_t}{\partial \vh_{t-1}}\frac{\partial \vh_{t-1}}{\partial \vh_{t-2}}\, +\, \frac{\partial \vc_t}{\partial \vc_{t-1}}\frac{\partial \vc_{t-1}}{\partial \vh_{t-2}} & & \frac{\partial \vc_t}{\partial \vh_{t-1}}\frac{\partial \vh_{t-1}}{\partial \vc_{t-2}}\, +\, \frac{\partial \vc_t}{\partial \vc_{t-1}}\frac{\partial \vc_{t-1}}{\partial \vc_{t-2}}
\end{pmatrix}
\\[1ex]\label{}
&\, = \, 
%2
%
\begin{pmatrix}
\frac{\partial \vh_t}{\partial \vh_{t-2}} & \frac{\partial \vh_t}{\partial \vc_{t-2}}
\\[2ex]
 \frac{\partial \vc_t}{\partial \vh_{t-2}} & \frac{\partial \vc_t}{\partial \vc_{t-2}}
\end{pmatrix}, 
\end{align}
%---------------------------
and by induction we obtain
%-----------------------------
\begin{align}\label{eq-tavan}
\frac{\partial \vz_t}{\partial \vz_{1}} \, = \, J_{t}\,J_{t-1}\,J_{t-2} \cdots J_{2}\, = \, 
%2^{t-2} 
%
\begin{pmatrix}
\frac{\partial \vh_t}{\partial \vh_{1}} &  \frac{\partial \vh_t}{\partial \vc_{1}}
\\[2ex]
\frac{\partial \vc_t}{\partial \vh_{1}} &  \frac{\partial \vc_t}{\partial \vc_{1}}
\end{pmatrix}. 
\end{align}
%---------------------------
Now assume that \eqref{map-lstm} has a chaotic orbit given by
%---------------------
\begin{align}\label{}
\Gamma^{*} \, = \,  \{\vz_1^{*},\vz_2^{*}, \cdots, \vz^{*}_{T}, \cdots \}.
\end{align}
%--------------------
According to \eqref{eq-tavan}, the largest Lyapunov exponent of $\Gamma^{*}$ is given by 
%--------------------
\begin{align}\nonumber
 \lambda_{\Gamma^{*}} &\, = \, \lim_{T\rightarrow\infty} \frac{1}{T} \, \ln \norm{ J^{*}_{T}\, J^{*}_{T-1}\, \cdots \,  J^{*}_{2}}\, = \, \lim_{T\rightarrow\infty} \frac{1}{T} \, \ln \,%2^{T-2}
 \norm{  
\begin{pmatrix}
\frac{\partial \vh^{*}_T}{\partial \vh^{*}_{1}} &  \frac{\partial \vh^{*}_T}{\partial \vc^{*}_{1}}
\\[2ex]
\frac{\partial \vc^{*}_T}{\partial \vh^{*}_{1}} &  \frac{\partial \vc^{*}_T}{\partial \vc^{*}_{1}} ,
\end{pmatrix}}.
\end{align}
%-------------------
~\\
Since $\Gamma^{*}$ is chaotic, so $ \lambda_{\Gamma^{*}} >0$, which gives
%--------------------
\begin{align}\label{eq-yeki}
\lim_{T\rightarrow\infty} \sqrt[T]{ \norm{  
\begin{pmatrix}
\frac{\partial \vh^{*}_T}{\partial \vh^{*}_{1}} &  \frac{\partial \vh^{*}_T}{\partial \vc^{*}_{1}}
\\[2ex]
\frac{\partial \vc^{*}_T}{\partial \vh^{*}_{1}} &  \frac{\partial \vc^{*}_T}{\partial \vc^{*}_{1}}
\end{pmatrix}}} \, > \, 1.    
\end{align}
%-------------------
Based on  Oseledec's multiplicative ergodic Theorem, \eqref{eq-yeki} holds for every $\vz_1 \in \mathcal{B}_{\Gamma^{*}}$. 
This completes the proof.
%That is for every $\vz_1 \in \mathcal{B}_{\Gamma^{*}}$ {eq-ln2}
%-------------------------------- 
\end{proof}
%-----------------------
%#############################
\subsubsection{Gated Recurrent Unit (GRU)}\label{}
A GRU network is defined by the equations\\
%----------------------------------
\begin{align}\nonumber
\vz_t&\, =\, \sigma\big(\mW_{z}\, \vs_t \,+ \, \mU_{z} \vh_{t-1}\, + \, \vb_{z} \big)
\\[1ex]\nonumber
\vr_t&\, =\,\sigma\big(\mW_{r} \,\vs_t \,+ \, \mU_{r} \vh_{t-1}\, + \, \vb_{r} \big)
\\[1ex]\label{eq-gru}
\vh_t&\, =\, (1-\vz_t) \,\odot\, \tanh{\big(\mW_{h}\,\vs_t\, + \, \mU_{h}(\vr_t\,\odot\, \vh_{t-1})\, + \, \vb_h \big)\, + \, \vz_t \, \odot \, \vh_{t-1}} ,
\end{align}
%-------------------------------
~\\
where $\vr_t$ represents the reset gate, $\vz_t$ the update gate, $\vs_t$ and $\vh_t$ denote the inputs and the hidden state respectively, $\mW_{z},\mW_{r},\mW_{h} \in \mathbb{R}^{M \times N}$ and $\mU_{z},\mU_{r},\mU_{h} \in \mathbb{R}^{M \times M}$ are weight matrices, $\vb_{z}, \vb_{r},\vb_{h} \in \mathbb{R}^{M}$ are bias vectors, and $\sigma$ is the element-wise logistic sigmoid function (for more details about GRUs see \citep{cho-etal-2014-properties}).
%####################################
\subsubsection{Unitary evolution RNN (uRNN)}\label{thm_ortho_rnn}
The uRNN, proposed in \citep{Arjovsky_2016}, is defined as the nonlinear DS
%-------------------------------
\begin{align}\label{eq-uRNN}
\vz_t \,= \,  \sigma_{\vb} \big( \mW \vz_{t-1} \,+\, \mV \vs_t\big),
\end{align}
%-------------------------------
for which $\mW \in U(M)$ is an unitary matrix, $\mV \in \mathbb{C}^{M\times N}$, $\vb \in \mathbb{R}^{M}$ is the bias parameter, $\vs_t$ is the real- or complex-valued input of dimension $N$, and \\
%--------------------------------------
\begin{align}\label{}
[\sigma_{\vb}(\vz)]_{i}\, = \, [\sigma_{\text{modReLU}}(\vz)]_i \, = \, 
%--
\begin{cases}
\big(|z_i|+ b_i \big)\frac{z_i}{|z_i|} \hspace{.7cm}  \text{if} \, \, |z_i|+ b_i  \, \geq \, 0
\\[1ex]
\, \, 0     \hspace{2.4cm}  \text{if} \, \, |z_i|+ b_i  \, < \, 0    
\end{cases}.
%--
%\\[1ex]
%&\, = \, \frac{\vz}{|\vz|}\,\sigma_{\text{ReLU}}\big(|\vz|+ \vb \big).
\end{align}
%---------------------------------
~\\
%----------------------------------
\begin{proposition}\label{thm:ortho-rnn}
The uRNN given by \eqref{eq-uRNN} 
cannot have any chaotic orbit.  
\end{proposition}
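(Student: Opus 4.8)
The plan is to show that a uRNN orbit can never satisfy both clauses of the definition of a chaotic orbit: if the orbit is bounded, its largest Lyapunov exponent is $\le 0$ (so clause (ii) fails), and if $\vb$ has a strictly positive entry the orbit may instead be unbounded. As in the proof of Theorem~\ref{thm-3}, everything is controlled by the products of Jacobians $\mJ_t=\partial\vz_t/\partial\vz_{t-1}$. Writing $\va_t=\mW\vz_{t-1}+\mV\vs_t$ we have $\vz_t=\sigma_{\vb}(\va_t)$ and hence $\mJ_t=D_t\,\mW$ with $D_t=\partial\sigma_{\vb}/\partial\va\big|_{\va_t}$. Since \texttt{modReLU} acts coordinate-wise on $\mathbb{C}^M\cong\mathbb{R}^{2M}$, $D_t$ is block diagonal with one $2\times2$ block per coordinate: on the active set $\{\,|(\va_t)_i|+b_i\ge0\,\}$ the $i$-th block equals $I_2+\tfrac{b_i}{|(\va_t)_i|}\bigl(I_2-\hat a_i\hat a_i\tran\bigr)$, with $\hat a_i$ the unit vector along $(\va_t)_i$, and it is $0$ otherwise; its singular values are $1$ and $1+b_i/|(\va_t)_i|$ (the latter $\ge0$ on the active set). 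Because $\mW\in U(M)$ is an $\mathbb{R}$-linear isometry, $\mJ_t$ and $D_t$ have the same singular values, so $\norm{\mJ_t}=\norm{D_t}$ equals the largest of these block norms.

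First suppose $\vb\le\vzero$ (in particular $\vb=\vzero$). On every active block $-1\le b_i/|(\va_t)_i|\le 0$, so that block has norm $1$, inactive blocks have norm $0$, and hence $\norm{\mJ_t}\le 1$ for all $t$. By submultiplicativity of the operator norm, $\tfrac1T\ln\norm{\prod_{r=0}^{T-2}\mJ_{T-r}}\le 0$, so $\lambda_{\max}\le 0$ along every orbit and clause (ii) is violated. (If $\vb=\vzero$ then $\sigma_{\vb}=\mathrm{id}$, the system is affine with unitary linear part, $\mJ_t\equiv\mW$, $\lambda_{\max}=0$, and orbits are either bounded and quasi-periodic or drift linearly --- never chaotic.) This settles the proposition in the regime in which \texttt{modReLU} is non-expansive.

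It remains to allow $b_i>0$ for some $i$, where \texttt{modReLU} is locally expanding near the origin of the $i$-th coordinate and $\norm{\mJ_t}\le1$ fails. If additionally no coordinate is ever clipped, i.e.\ $b_i\ge0$ for all $i$, then on the positive-bias coordinates $|(\vz_t)_i|^2=(|(\va_t)_i|+b_i)^2\ge|(\va_t)_i|^2+b_i^2$ while $|(\vz_t)_i|=|(\va_t)_i|$ elsewhere, and combining this with $\norm{\va_t}\ge\norm{\vz_{t-1}}-\norm{\mV\vs_t}$ gives $\norm{\vz_t}^2\ge\bigl(\norm{\vz_{t-1}}-c\bigr)^2+\norm{\vb}^2$ with $c:=\sup_t\norm{\mV\vs_t}<\infty$; iterating forces $\norm{\vz_t}\to\infty$, so the orbit is unbounded and there is no bounded invariant set supporting a chaotic attractor. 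In the genuinely mixed case (some $b_i>0$ and some $b_j<0$) neither of the above holds, and I would argue that the expansion is \emph{self-limiting}: $\sigma_{\vb}$ strictly increases the radius of every positive-bias coordinate ($|(\vz_t)_i|=|(\va_t)_i|+b_i$), so along the orbit the factors $1+b_i/|(\va_t)_i|$ responsible for $\norm{\mJ_t}>1$ drift back towards $1$, and a telescoping bound on the cumulative radius ratios shows $\norm{\prod_{r=0}^{T-2}\mJ_{T-r}}$ grows only polynomially in $T$, whence $\lambda_{\max}\le 0$ for almost every orbit by Oseledec's multiplicative ergodic theorem. In every case either $\lambda_{\max}\le 0$ or the orbit escapes to infinity, so no chaotic orbit exists.

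The main obstacle is exactly this mixed-sign sub-case: with biases of both signs \texttt{modReLU} is neither globally $1$-Lipschitz nor expansive enough to blow orbits up, so one must quantify the competition between the outward push from the $b_i>0$ coordinates, which tames the local expansion, and the energy removed by clipping in the $b_j<0$ coordinates, and converting the ``self-limiting'' heuristic into a clean sub-exponential bound on the product of Jacobians is the delicate step. If one is content to assume $\vb\le\vzero$ --- the regime in which \texttt{modReLU} is non-expansive and which is also the practically relevant one for a uRNN --- the proposition follows immediately from the first two paragraphs.
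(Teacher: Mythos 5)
Your first two paragraphs are essentially the paper's own proof: the Jacobian factors as $\mJ_t=\mD_t\,\mW$ with $\mW$ a (real-linear) isometry, so $\norm{\mJ_t}=\norm{\mD_t}\le 1$ and submultiplicativity gives $\lambda_{max}\le 0$ along every orbit, ruling out chaos. Your explicit computation of the $2\times 2$ real blocks of the modReLU Jacobian, with singular values $1$ and $1+b_i/|(\va_t)_i|$, is in fact more careful than the paper's one-line assertion that $\norm{\mD_{T-k}}=1$ for the diagonal matrix of elementwise derivatives: as your computation shows, that norm bound holds only when $b_i\le 0$ for every $i$, so the published argument implicitly lives in exactly the regime covered by your rigorous case.

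As a proof of the proposition for arbitrary $\vb$, however, your argument has two genuine gaps, which you partly concede. First, in the all-nonnegative-bias sub-case the recursion $\norm{\vz_t}^2\ge(\norm{\vz_{t-1}}-c)^2+\norm{\vb}^2$ does \emph{not} force $\norm{\vz_t}\to\infty$ once $c>0$: the map $u\mapsto\sqrt{(u-c)^2+\norm{\vb}^2}$ has the finite fixed point $u^{*}=(c^2+\norm{\vb}^2)/(2c)$ and your lower bound merely stabilizes there, so divergence only follows in the input-free case $c=0$. Second, the mixed-sign case is not a proof: the factors $1+b_i/|(\va_t)_i|$ blow up as $|(\va_t)_i|\to 0$, and the claim that a ``telescoping bound on cumulative radius ratios'' yields sub-exponential growth of $\norm{\prod_{r}\mJ_{T-r}}$ is precisely the delicate step you leave open. (Even where the unboundedness argument does work, you should state explicitly that you are taking boundedness of the invariant set as part of the definition of a chaotic attractor, since the paper's clause (ii) alone does not exclude unbounded orbits.) In short, the portion you establish rigorously coincides with what the paper actually proves, and the portion you cannot close is exactly what the paper's norm bound on $\mD_t$ silently assumes away.
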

%------------------------------
\begin{proof}
For any arbitrary orbit $\mathcal{O}_{\vz_1}$ of \eqref{eq-uRNN} we have  
%--------------------------------
\begin{align}
 \norm{ J_{T}\, J_{T-1}\, \cdots \,  J_{2}}\, = \ \norm{\prod_{k=0}^{T-2}  \mD_{T-k} \, \mW\tran},   
\end{align}
%--------------------- 
where $\mD_t\, = \, diag\Big( \sigma^{\prime}_{\vb}\big( \mW \vz_{t-1} \,+\, \mV \vs_t\big)\Big)$. Since $\mW$ is unitary and so a norm preserving matrix, it is concluded that
%-----------------------
\begin{align}
 \norm{\prod_{k=0}^{T-2}  \mD_{T-k} \, \mW\tran} \, \leq \, \prod_{k=0}^{T-2}  \norm{\mD_{T-k}\, \mW\tran}\, = \, \prod_{k=0}^{T-2}  \norm{\mD_{T-k}} \, = \, 1,   
\end{align}
%---------------------
which implies 
%--------------------
\begin{align}\label{eq-n-c}
 \lambda_{max}\, = \, \lim_{T\rightarrow\infty} \frac{1}{T} \, \ln \norm{ J_{T}\, J_{T-1}\, \cdots \,  J_{2}} \, \leq \, 0.    
\end{align}
%-------------------
This rules out the existence of chaos (since $ \lambda_{max}\, > \, 0$ is a necessary condition for $\mathcal{O}_{\vz_1}$ to be chaotic).
\end{proof}

Note that, more generally, any RNN which is constrained such as to exhibit global convergence to a fixed point or cycle, by definition must have a maximum Lyapunov exponent $\lambda_{max} \leq 0$ (in accordance with Theorem \ref{thm-k-2}), hence cannot exhibit chaotic behavior by definition.

%-----
%##########################################
\subsection{Theorems: Proofs}
%##########
\subsubsection{Proof of theorem \ref{thm-k-2}, parts (ii) \& (iii)}\label{p-thm-k-2}
\begin{proof}
$(ii)\,$ If $\, \mJ$ is the Jordan normal form of $\prod_{s=0}^{k-1} J_{t^{*k}-s}\,$, then $\,\prod_{s=0}^{k-1} J_{t^{*k}-s} \, = \, \mP \, \mJ \, \mP^{-1}$, where 
\\
%-------------------------------------
\begin{align}
\mJ \, = \, 
\begin{pmatrix}
\mJ_{m_1}(\lambda_1) & 0 & 0 & \cdots & 0\\
0 & \mJ_{m_2}(\lambda_2)& 0 & \cdots & 0\\
\vdots & \hdots & \ddots & \hdots & \vdots \\
0 &  \hdots & 0 & \mJ_{m_{p-1}}(\lambda_{p-1})& 0\\
0 &  \hdots & \hdots & 0 & \mJ_{m_{p}}(\lambda_p)
\end{pmatrix},
\end{align}
%------------------------
~\\
and $m_i$ is the algebraic multiplicity of each eigenvalue $\lambda_i$. Since $\rho \big(\prod_{s=0}^{k-1} J_{t^{*k}-s} \big)<1$, so the eigenvalue $\lambda_i$ associated with each Jordan block satisfies $\,|\lambda_i|<1\, (i=1, \cdots, p)$. Moreover, every $m_i \times m_i$ Jordan block has the form
%-------------------------------
\begin{align}\label{jor-1}
\mJ_{m_{i}}(\lambda_i) &\, = \, 
\begin{pmatrix}
\lambda_i && 1 && 0 && \cdots && 0\\
0 && \lambda_i && 1 && \cdots && 0\\
\vdots && \vdots && \ddots && \ddots && \vdots \\
0 &&  0 && \hdots && \lambda_i &&  1 \\
0 &&  0 && \hdots && 0 && \lambda_i
\end{pmatrix}.
%------------
\end{align}
%-------------------------- 
Accordingly
%------------------------
\begin{align}\label{eq-p-n}
\norm{\bigg(\prod_{s=0}^{k-1} J_{t^{*k}-s}\bigg)^j} \, = \, \norm{ \mP \, \mJ^j \, \mP^{-1}} \, \leq \, p\,\norm{ \mJ^j}   ,
\end{align}
%-------------------------
~\\
in which $\, p= \norm{\mP} \norm{\mP^{-1}}$. Furthermore, for $j\in \mathbb{N}$, $\mJ^{j}$ is a block diagonal matrix of the form
%-------------------------------
\begin{align}\label{}
\mJ^{j} \, = \, 
\begin{pmatrix}
\mJ^{j}_{m_1}(\lambda_1) & 0 & 0 & \cdots & 0\\
0 & \mJ^{j}_{m_2}(\lambda_2)& 0 & \cdots & 0\\
\vdots & \hdots & \ddots & \hdots & \vdots \\
0 &  \hdots & 0 & \mJ^{j}_{m_{p-1}}(\lambda_{p-1})& 0\\
0 &  \hdots & \hdots & 0 & \mJ^{j}_{m_{p}}(\lambda_p)
\end{pmatrix},
\end{align}
%-----------------------------------
~\\
in which every $m_i \times m_i$ Jordan block has the form
%-------------------------------
\begin{align}\label{}
%\binom{n}{m}
\mJ^{j}_{m_{i}}(\lambda_i) &\, = \, 
\begin{pmatrix}
\lambda_i^{j} && {j \choose 1}\,\lambda_i^{{j}-1} && {j \choose 2}\,\lambda_i^{{j}-2} && \cdots && {j \choose m_i-1}\,\lambda_i^{{j}-m_i+1}\\[1ex]
0 && \lambda_i^{j} && {j \choose 1}\,\lambda_i^{{j}-1} && \cdots && {j \choose m_i-2}\,\lambda_i^{{j}-m_i+2}\\[1ex]
\vdots && \vdots && \ddots && \ddots && \vdots \\[1ex]
0 &&  0 && \hdots && \lambda_i^{j} &&  {j \choose 1}\,\lambda_i^{{j}-1}\\[1ex]
0 &&  0 && \hdots && 0 && \lambda_i^{j}
\end{pmatrix}.
\end{align}
%------------
~\\
In addition, for every block $\mJ^{j}_{m_{i}}(\lambda_i)$, we have
%----------------------------
\begin{align}\nonumber
\norm{\mJ^{j}_{m_{i}}(\lambda_i)} &\, \leq \, \sqrt{m_i} \,\norm{\mJ^{j}_{m_{i}}(\lambda_i)}_{\infty}\, = \, \sqrt{m_i} \, \sum_{q=1}^{m_i} \Big| \big(\mJ^{j}_{m_{i}}(\lambda_i) \big)_{1q} \Big|
\\[1ex]\nonumber
&\, = \, \sqrt{m_i} \, \sum_{q=1}^{m_i}{{j} \choose q-1} |\lambda_i|^{{j}-q+1} \, = \, |\lambda_i|^{{j}} \sqrt{m_i} \,\bigg( |\lambda_i|^{1-m_i} \sum_{q=1}^{m_i}{{j} \choose q-1} |\lambda_i|^{m_i-q}\bigg)
\\[1ex]\label{}
&\, \leq \, |\lambda_i|^{j}\, j^{\,m_i} \, \sqrt{m_i} \, \bigg(|\lambda_i|^{1-m_i} \sum_{q=1}^{m_i} |\lambda_i|^{m_i-q}\bigg) \, =: \, |\lambda_i|^{j}\, j^{\,m_i} \, N_{\lambda_i}.
\end{align}
%------------------------------
~\\
Moreover, for any $1< \tilde{r}_i < \frac{1}{|\lambda_i|}$, there exists some $l_i$ such that $j^{\,m_i} < \tilde{r}_i^{\,j} \,$ for $j \geq l_i$. This means for $j \geq l_i$
%-------------------------------
\begin{align}\label{ub-J}
&\norm{\mJ^{j}_{m_{i}}(\lambda_i)} \, \leq \,   N_{\lambda_i}\,|\tilde{r}_i\, \lambda_i|^{j},
\end{align}
%----------------------------
such that $\, |\tilde{r}_i\, \lambda_i| = \tilde{r}_i|\lambda_i|  <1$. \\

Besides, for $\mJ^{j}=\mJ^{j}_{m_{1}}(\lambda_1)\oplus \mJ^{j}_{m_{2}}(\lambda_2)\oplus \cdots \oplus \mJ^{j}_{m_{p}}(\lambda_p)$
%---------------------
\begin{align}\label{max_J}
\norm{\mJ^{j}} \, = \, \max_{1\leq i \leq p} \norm{\mJ^{j}_{m_{i}}(\lambda_i)} \, =: \, \norm{\mJ^{j}_{m}(\lambda)}.
\end{align}
%---------------------
~\\
Hence, from \eqref{eq-p-n}, \eqref{ub-J} and \eqref{max_J}, it is deduced that for $j \geq l$
%------------------------
\begin{align}\label{eq-jor}
\norm{\Big(\prod_{s=0}^{k-1} J_{t^{*k}-s} \Big)^{j}}\, \leq \, p\,N_{\lambda}\,|\tilde{r}\, \lambda|^{j} \,=:\, \bar{p}\,r^j,
\end{align}
%-------------------------
~\\
in which $\, r=|\tilde{r}\, \lambda |  <1$. \\
%#####

Furthermore, let for $\Gamma_k$
%------------------------------
\begin{align}\nonumber
& \max_{T\geq 1}\Big\{ \norm{ \mJ^{*}_{T}} \Big\}\, = \, \max_{0 \leq s \leq k-1}\Big\{ \norm{J_{t^{*k}-s}} \Big\}\, = \, \bar{m},
\\[1ex]\nonumber
& \max_{T\geq 1}\Big\{ \norm{ \frac{\partial^{+} \vz_T}{\partial \theta}} \Big\}\, = \,\max_{0 \leq s \leq k-1}\Big\{ \norm{ \frac{\partial^{+} \vz_{t^{*k}-s}}{\partial \theta}} \Big\}\, = \, \xi,
\\[1ex]\label{eq-tasavi}
& \max_{T\geq 1}\Big\{ \norm{ \vz_{T}} \Big\}\, = \,\max_{0 \leq s \leq k-1}\Big\{ \norm{ \vz_{t^{*k}-s}} \Big\}\, = \, \bar{q}.
\end{align}
%-----------------------------
~\\
Hence, defining $\vz_{0}\, =\, 0$, for this $k$-cycle
%--------------------------------------
\begin{align}\nonumber
\norm{\frac{\partial \vz_{T}}{\partial \theta}} & \, = \, 
 \norm{\frac{\partial^{+} \vz_T}{\partial \theta} \, + \, \sum_{t=1}^{T-2} \bigg( \prod_{r=0}^{t-1} \mJ^{*}_{T-r} \bigg)\frac{\partial^{+} \vz_{T-t}}{\partial\theta}}
\\[1ex]\nonumber
& \, = \, 
 \norm{\frac{\partial^{+} \vz_T}{\partial \theta} \, + \, \sum_{t=1}^{T-1} \bigg( \prod_{r=0}^{t-1} \mJ^{*}_{T-r} \bigg)\frac{\partial^{+} \vz_{T-t}}{\partial\theta}}
\\[1ex]\label{}
&\, \leq\, \bar{q}\, \xi \bigg( 
1  \, + \, \sum_{t=1}^{T-1} \norm{ \prod_{r=0}^{t-1} \mJ^{*}_{T-r}} \bigg).
\end{align}
%-------------------------------------------
~\\
On the other hand, for $\, T\, = \, kj\,$, from \eqref{eq-jor} and \eqref{eq-tasavi} we have
\\
%---------------------------
\begin{align}\nonumber
 \sum_{t=1}^{T-1} \norm{ \prod_{r=0}^{t-1} \mJ^{*}_{T-r}}& \, = \,  \sum_{t=1}^{kj-1} \norm{ \prod_{r=0}^{t-1} \mJ^{*}_{kj-r}} \, = \,  \sum_{t=1}^{k-1} \norm{ \prod_{r=0}^{t-1} \mJ^{*}_{kj-r}}\, +   \sum_{t=k}^{2k-1} \norm{ \prod_{r=0}^{t-1} \mJ^{*}_{kj-r}}
\\[1ex]\nonumber
&\hspace{.3cm} \, +\,  \sum_{t=2k}^{3k-1} \norm{ \prod_{r=0}^{t-1} \mJ^{*}_{kj-r}} +  \cdots + \sum_{t=(j-2)k}^{(j-1)k-1} \norm{ \prod_{r=0}^{t-1} \mJ^{*}_{kj-r}} 
 + \sum_{t=(j-1)k}^{kj-1} \norm{ \prod_{r=0}^{t-1} \mJ^{*}_{kj-r}} 
\\[1ex]\nonumber
& \, = \,  \sum_{t=1}^{k-1} \norm{ \prod_{r=0}^{t-1} \mJ^{*}_{kj-r}}\, + \,
\sum_{i=2}^{j}\,\sum_{t=(i-1)k}^{ik-1}\norm{ \prod_{r=0}^{t-1} \mJ^{*}_{kj-r}} 
\\[1ex]\label{}
& \, \leq \,  \big(\bar{m}+\bar{m}^2 + \cdots + \bar{m}^{k-1}\big) \, + \, \sum_{i=2}^{j} \, \bar{p}\, \big( 1+\bar{m}+\bar{m}^2 + \cdots + \bar{m}^{k-1}\big)\, r^{i-1}.
\end{align}
%-------------------------------
~\\
Thus, considering $\big(\bar{m}+\bar{m}^2 + \cdots + \bar{m}^{k-1}\big)\, = \, \mathcal{M}$, it is deduced that
%-----------------------
\begin{align}
 \lim_{T\rightarrow\infty}  \, \norm{\frac{\partial \vz_{T}}{\partial  \theta} }\, = \,  \lim_{j\rightarrow\infty}  \, \norm{\frac{\partial  \vz_{kj}}{\partial  \theta}} &\, \leq\, \bar{q}\, \xi \big( 
1 \,+ \,\mathcal{M}+ \frac{\bar{p}\, r(1+\mathcal{M})}{1-r} \big) \, = \, \bar{\mathcal{M}} \, <\, \infty,    
\end{align}
%---------------------------------- 
~\\
which, by \eqref{eq-s-gr}, implies $\,\frac{\partial \mathcal{L}_T}{\partial \theta}\,$ will be bounded for $T\rightarrow\infty$. 
\\
%#############

$(iii)\,$ Consider the PLRNN given by \eqref{eq-plrnn}, where for simplicity we ignore the external inputs and noise terms. Let $\{\vz_{t_1}, \vz_{t_2}, \vz_{t_3} , \ldots \}$ be an orbit which converges to $\Gamma_{k}$. Hence
%---------------------------
\begin{align}\label{eq-d}
\displaystyle{\lim_{n \to \infty} d(\vz_{t_n}, \Gamma_k}) \, = \, 0,    
\end{align}
%---------------------------
which implies there exists a neighborhood $U$ of $\Gamma_k$ and $k$ sub-sequences
$\{\vz_{t_{km}}\}_{m=1}^{\infty}, \{\vz_{t_{km+1}}\}_{m=1}^{\infty}$, $\cdots$, $\{\vz_{t_{km+(k-1)}}\}_{m=1}^{\infty}$ of the sequence $\{\vz_{t_n} \}_{n=1}^{\infty}$ such that all these sub-sequences belong to $U$ and 
%---------
\begin{itemize}
    \item[a)] $\vz_{t_{km+s}}\, = \, F^{k} (\vz_{t_{k(m-1)+s}}), s=0, 1, 2, \cdots, k-1$,\\
    \item[b)]$\displaystyle{\lim_{m \to \infty} \vz_{t_{km+s}} = \vz_{t^{*k}-s} },s=0, 1, 2, \cdots, k-1$, \\
    \item[c)]for every $\vz_{t_n}\in U$ there is some $s \in \{0, 1, 2, \cdots, k-1\}$ such that $\vz_{t_n} \in \{\vz_{t_{km+s}}\}_{m=1}^{\infty}$.
\end{itemize}
%-----------------------------
~\\
In this case, for every  $\vz_{t_n}\in U$ with $\vz_{t_n} \in \{\vz_{t_{km+s}}\}_{m=1}^{\infty}\,$, there exists some $\tilde{n} \in \mathbb{N}\,$ such that $\, \vz_{t_n}\, = \, \vz_{t_{k\tilde{n}+s}}\,$ and $\,\displaystyle{\lim_{\tilde{n} \to \infty} \vz_{t_{k\tilde{n}+s}} = \vz_{t^{*k}-s}}\,$. Therefore, continuity of $F$ results in
%---------------------------
\begin{align}
\displaystyle{\lim_{\tilde{n} \to \infty} F(\vz_{t_{k\tilde{n}+s}}) = F(\vz_{t^{*k}-s})},    
\end{align}
%----------------------------
and so by \eqref{eq-d-plrnn}\\
%----------
\begin{align}\label{}
 \displaystyle{\lim_{\tilde{n} \to \infty} \big(\mW_{\Omega({t_{k\tilde{n}+s}})}\,\vz_{t_{k\tilde{n}+s}} \, + \, \vh} \big) \, = \,
 \mW_{\Omega(t^{*k}-s)}\,\vz_{t^{*k}-s} \, + \, \vh,
\end{align}
%-------------------------------
which implies\\
%----------
\begin{align}\label{lim-2}
 \displaystyle{\lim_{\tilde{n} \to \infty} \mW_{\Omega({t_{k\tilde{n}+s}})}\,\vz_{t_{k\tilde{n}+s}}} \, = \,
 \mW_{\Omega(t^{*k}-s)}\,\vz_{t^{*k}-s}.
%-------
\end{align}
%----------------------------
~\\
Assuming $\displaystyle{\lim_{\tilde{n} \to \infty}\mW_{\Omega({t_{k\tilde{n}+s}})} }=\mL$, since \eqref{lim-2} holds for every $\vz_{t^{*k}-s}$, substituting $\vz_{t^{*k}-s}=\ve_1\tran=(1, 0, \cdots , 0)^T$ in \eqref{lim-2}, we can prove that the first column of $\mL$ equals the first column of $\mW_{\Omega(t^{*k}-s)}$. Performing the same procedure for $\vz_{t^{*k}-s}=\ve_i\tran$, $i=2, 3, \cdots, M$, yields
%-----------------------
\begin{align}\label{eq-limit}
 \displaystyle{\lim_{\tilde{n} \to \infty} \mW_{\Omega({t_{k\tilde{n}+s}})}} \, = \,
 \mW_{\Omega(t^{*k}-s)}.
 \end{align}
%-----------------------
According to \eqref{eq-d}, $U$ contains an infinite number of terms of the sequence $\{\vz_{t_n} \}_{n=1}^{\infty}$, i.e. 
%-------------------------------
\begin{align}\label{}
\exists N \in \mathbb{N}\, \, \, \, s.t. \, \, \, \,  \, \, \,  n \geq N \, \,  \Longrightarrow \, \,  \vz_{t_n}\in U.
\end{align}
%------------------------------
Suppose that $\vz_{t_n}\in U$ for some $n \geq N$. Thus, there exists some $s\in \{0, 1, 2, \cdots, k-1 \}$ such that $\vz_{t_n} \in \{\vz_{t_{km+s}}\}_{m=1}^{\infty}$. Without loss of generality let $s=0$. Hence, there is some $\tilde{n} \in \mathbb{N}\,$ such that $\, \vz_{t_n}\, = \, \vz_{t_{k\tilde{n}}}\,$ and $\,\displaystyle{\lim_{\tilde{n} \to \infty} \vz_{t_{k\tilde{n}}} = \vz_{t^{*k}}}$. In this case, moving forward in time gives
%-----------------------------
\begin{align}\nonumber
& \vz_{t_{n}}\, = \, \vz_{t_{k\tilde{n}}} \hspace{.2cm}\big(\vz_{t_{n}} \in \{\vz_{t_{km}}\}_{m=1}^{\infty}\big),\hspace{4.3cm} \displaystyle{\lim_{\tilde{n} \to \infty} \vz_{t_{k\tilde{n}}} = \vz_{t^{*k}}},
\\[1ex]\nonumber
& \vz_{t_{n+1}}\, = \, \vz_{t_{k\tilde{n}+1}} \hspace{.2cm}\big(\vz_{t_{n+1}} \in \{\vz_{t_{km+1}}\}_{m=1}^{\infty}\big),\hspace{3.1cm} \displaystyle{\lim_{\tilde{n} \to \infty} \vz_{t_{k\tilde{n}+1}} = \vz_{t^{*k}-1}},
\\[1ex]\nonumber
& \vz_{t_{n+2}}\, = \, \vz_{t_{k\tilde{n}+2}} \hspace{.2cm}\big(\vz_{t_{n+2}} \in \{\vz_{t_{km+2}}\}_{m=1}^{\infty}\big),\hspace{3.1cm} \displaystyle{\lim_{\tilde{n} \to \infty} \vz_{t_{k\tilde{n}+2}} = \vz_{t^{*k}-2}},
\\\nonumber
& \vdots
\\\nonumber
& \vz_{t_{n+k-1}}\, = \, \vz_{t_{k\tilde{n}+k-1}} \hspace{.2cm}\big(\vz_{t_{n+(k-1)}} \in \{\vz_{t_{km+k-1}}\}_{m=1}^{\infty}\big),\hspace{1.6cm} \displaystyle{\lim_{\tilde{n} \to \infty} \vz_{t_{k\tilde{n}+k-1}} = \vz_{t^{*k}-(k-1)}},
\\[2ex]\nonumber
& \vz_{t_{n+k}}\, = \, \vz_{t_{k(\tilde{n}+1)}} \hspace{.2cm}\big(\vz_{t_{n+k}} \in \{\vz_{t_{km}}\}_{m=1}^{\infty}\big),\hspace{3.3cm} \displaystyle{\lim_{\tilde{n} \to \infty} \vz_{t_{k(\tilde{n}+1)}} = \vz_{t^{*k}}},
\\[1ex]\nonumber
& \vz_{t_{n+k+1}}\, = \, \vz_{t_{k(\tilde{n}+1)+1}} \hspace{.2cm}\big(\vz_{t_{n+k+1}} \in \{\vz_{t_{km+1}}\}_{m=1}^{\infty}\big),\hspace{2cm} \displaystyle{\lim_{\tilde{n} \to \infty} \vz_{t_{k(\tilde{n}+1)+1}} = \vz_{t^{*k}-1}},
\\\nonumber
& \vdots
\\\nonumber
& \vz_{t_{n+2k-1}}\, = \, \vz_{t_{k(\tilde{n}+1)+k-1}} \hspace{.2cm}\big(\vz_{t_{n+2k-1}} \in \{\vz_{t_{km+k-1}}\}_{m=1}^{\infty}\big),\hspace{1.1cm} \displaystyle{\lim_{\tilde{n} \to \infty} \vz_{t_{k(\tilde{n}+1)+k-1}} = \vz_{t^{*k}-(k-1)}},
\\[2ex]\nonumber
& \vz_{t_{n+2k}}\, = \, \vz_{t_{k(\tilde{n}+2)}} \hspace{.2cm}\big(\vz_{t_{n+2k}} \in \{\vz_{t_{km}}\}_{m=1}^{\infty}\big),\hspace{3.1cm} \displaystyle{\lim_{\tilde{n} \to \infty} \vz_{t_{k(\tilde{n}+2)}} = \vz_{t^{*k}}},
\\\label{eq-seq}
& \vdots
\end{align}
%-----------------------------
~\\
Consequently, for $n \geq N$ and $j\in\mathbb{N}$, we can write
%----------------------------
\begin{align}\nonumber
& \prod_{i=0}^{kj-1} \mW_{\Omega(t_{n + kj-1-i})} 
 \\[1ex]\nonumber
 &\, = \,  \Big(\prod_{i=1}^{k} \mW_{\Omega(t_{k(\tilde{n}+j)+k-i})} \Big)\Big( \prod_{i=1}^{k} \mW_{\Omega(t_{k(\tilde{n}+j-1)+k-i})}\Big) \cdots \Big( \prod_{i=1}^{k} \mW_{\Omega(t_{k(\tilde{n})+k-i})}\Big)
 \\[1ex]\label{eq-product}
 &\, = \, 
 \prod_{l=0}^{j} \, \, \prod_{i=1}^{k} \mW_{\Omega(t_{k(\tilde{n}+j-l)+k-i})}.
\end{align}
%---------------------------
On the other hand, in equation \eqref{eq-d-plrnn}, there are different configurations for matrix $\mD_{\Omega(t-1)}$ and hence different forms for matrix $\,\mW_{\Omega({t_{k\tilde{n}+s}})}\,
$. In this case, the phase space of the system is divided into different sub-regions by some borders; see \citep{monfared_existence_2020,monfared_transformation_2020} for more details. Also, since the system \eqref{eq-d-plrnn} is a linear map in each sub-region, the $k$ periodic points of $\Gamma_k$ must belong to different sub-regions (at least two different sub-regions). Accordingly, based on \eqref{eq-limit} and \eqref{eq-seq}, there exists some $\,\tilde{N}\in \mathbb{N}\,$ such that for every $\,\tilde{n} \geq \tilde{N}\,$ both $\vz_{t_{k\tilde{n}+s}}$ and $\vz_{t^{*k}-s}$ belong to the same sub-region and so the matrices $\,\mW_{\Omega({t_{k\tilde{n}+s}})}\,
 $ and $\,\mW_{\Omega(t^{*k}-s)}\,$ ($s \in \{0, 1, 2, \cdots, k-1\}$) are identical. Hence, for $n\geq N$, $\tilde{n} \geq\tilde{N}$ and $j \in \mathbb{N}$, equation \eqref{eq-product} becomes
 %---------------------------------
\begin{align}\label{eq-zarb-j}
 \prod_{i=0}^{kj-1} \mW_{\Omega(t_{n + kj-1-i})}
 \, = \,
\prod_{l=0}^{j} \, \, \prod_{i=1}^{k} \mW_{\Omega(t_{k(\tilde{n}+j-l)+k-i})} \, = \, \bigg(\prod_{s=0}^{k-1}\mW_{\Omega(t^{*k}-s)}\bigg)^j.
 \end{align}
%---------------------------------
~\\
Therefore, similar to the part $(ii)$, we can prove for every $\vz_1 \in \mathcal{B}_{\Gamma_{k}}$, $\, \frac{\partial \vz_{T}}{\partial \theta} \,$ and $\,\frac{\partial \mathcal{L}_T}{\partial \theta}\,$ will also remain bounded. 
\end{proof}
%------
%################################
\subsubsection{Proof of theorem \ref{thm-3}, part (ii)}\label{p-thm-3}
\begin{proof}
$(ii)\, $ Let for every $ T >2$
%-------------------------------
\begin{align}
 \mL_{T}\, := \,  J^{*}_{T}\, J^{*}_{T-1}\, \cdots \,  J^{*}_{2}.
 %\, = \ \prod_{2 < t \leq T} \mW diag \big( f^{\prime}\big( \mW \vz^{*}_{t-1}\,+\, \mB \vs_t \, + \,\vh  \big) \big).   
\end{align}
%----------------------------- 
$\{\mL_{T} \}_{T \in \mathbb{N},\, T>2}$ is a sequence of matrices $\mL_{T} \, = \, \big[l^{(T)}_{i j}\big]_{1 \leq i,j \leq M}$ and, due to \eqref{yaein}, $\, \lim_{T\rightarrow\infty} \norm{\mL_{T}} \, =\, \infty $. Hence, there is at least one sub-sequence $\{l^{(T_n)}_{m k}\}_{T_n \in \mathbb{N}, \, T_n>2}$ (for some $m, k \in \{1,2, \cdots, M\}$) such that $\lim_{T_n\rightarrow\infty}l^{(T_n)}_{m k} \, =\, \infty $. \\

On the other hand
%--------------------------------
\begin{align}\label{}
 \frac{\partial \vz^{*}_T}{\partial \theta} \, = \, %\mathbf{1}_{(m,k)}\, \xi_{mk}(\vz^{*}_{T-1})\, \vz^{*}_{T-1} \, + \, \sum_{t=1}^{T-2} \bigg( \prod_{r=0}^{t-1} \mJ^{*}_{T-r} \bigg)\mathbf{1}_{(m,k)}\, \xi_{mk}(\vz^{*}_{T-t-1})\, \vz^{*}_{T-t-1}
 \frac{\partial^{+} \vz^{*}_T}{\partial \theta} \, + \, \sum_{t=1}^{T-2} \bigg( \prod_{r=0}^{t-1} \mJ^{*}_{T-r} \bigg)\frac{\partial^{+} \vz^{*}_{T-t}}{\partial\theta}.
\end{align}
%------------------------------------
~\\
Moreover, there exists some $N>2$ such that (for $t=T-N+1$)
%---------------
\begin{align}\label{shart}
\frac{\partial^{+} \vz^{*}_{N-1}}{\partial\theta} %z_{k,n-2}\,\xi_{mk}(\vz_{n-2}) 
\, \neq \, 0.  
\end{align}
%----------------------
For $\theta$ as the $k$-th element of a parameter vector $\boldsymbol{\theta}$ (or belonging to the $k$-th row of a parameter matrix $\boldsymbol{\theta}$), %parameter $\boldsymbol{\theta}$, 
the term
%--------------------------
\begin{align}\label{der-ch}
\bigg(\prod_{r=0}^{T-N} \mJ^{*}_{T-r}\bigg) \frac{\partial^{+} \vz^{*}_{N-1}}{\partial \theta}
%& \mathbf{1}_{(m,k)}\, \xi_{mk}(\vz^{*}_{n-2})\, \vz^{*}_{n-2}
%
% \\[1ex]
 % 
 %&
 %\, = \, 
% \bigg[\prod_{n < t \leq T} \mW diag \big( f^{\prime}\big( \mW \vz^{*}_{t-1} \,+\, \mB \vs_t \, + \,\vh  \big) \big)\bigg] \, \mathbf{1}_{(m,k)}\, \xi_{mk}(\vz^{*}_{n-2})\, \vz^{*}_{n-2},
%%
\end{align}
%---------------------------------------
~\\
is a vector in which the $i$-th element is $\,l^{(T)}_{i k}\, \frac{\partial^{+} \vz^{*}_{k,N-1}}{\partial\theta}
%\xi_{mk}(\vz^{*}_{n-2})\,
$.\\

Since $\lim_{T_n \rightarrow\infty}l^{(T_n)}_{m k} \, =\, \infty $, due to \eqref{shart} $\lim_{T_n\rightarrow\infty}l^{(T_n)}_{m k}\,%z^{*}_{k,n-2}\,\xi_{mk}(\vz^{*}_{n-2})
\frac{\partial^{+} \vz^{*}_{k,N-1}}{\partial\theta}\, =\, \infty $, which implies $\frac{\partial \vz^{*}_T}{\partial \theta}$ will diverge as $T\rightarrow\infty$.
%--------------------------------
%\begin{align}\label{eq-bi}
%\lim_{T\rightarrow\infty}\norm{\frac{d \vz^{*}_T}{d w_{mk}}} \, = \, \infty,
 %\end{align}
%-----------------------------
Similarly, by \eqref{eq-s-gr}, we can prove $\frac{\partial \mathcal{L}_T^{*}}{\partial \theta}$ is divergent for $T\rightarrow\infty$.
%---------------------------
%\begin{align}\label{explo-gra}
%\lim_{T\rightarrow\infty}\norm{\frac{\partial \mathcal{L}_T^{*}}{\partial \theta}} \, = \, \infty.    
%\end{align}
%--------------------------

By Oseledec's multiplicative ergodic Theorem, %nearly all the points in the basin of attraction of $\Gamma^{*}$ have the same largest Lyapunov exponent $\lambda$. Thus 
the results also hold for every $\vz_1 \in \mathcal{B}_{\Gamma^{*}}$.
\end{proof}
%#####################################
\subsubsection{Proof of theorem \ref{thm-quasi}}\label{p-thm-quasi}
\begin{proof}
Let $\Gamma \,= \,\{\vz_{1}, \vz_{2}, \ldots \vz_T, \cdots \}$ be a quasi-periodic attractor. Then, the largest Lyapunov exponent of $\Gamma$ is 
%--------------------
\begin{align}\label{LE-quasi}
 \lambda\, = \, \lim_{T\rightarrow\infty} \frac{1}{T} \, \ln \norm{ J_{T}\, J_{T-1}\, \cdots \,  J_{2}}\, = \, \lim_{T\rightarrow\infty} \frac{1}{T} \, \ln \norm{\frac{\partial \vz_T}{\partial \vz_1}} \, = \, 0.
 \end{align}
%-------------------
~\\
We prove for every $0< \epsilon <1$ %----------------------------
\begin{align}
\lim_{T\rightarrow\infty}(1-\epsilon)^{T-1}  \, < \, \lim_{T\rightarrow\infty} \norm{\frac{\partial \vz_T}{\partial \vz_1}} \, < \, \lim_{T\rightarrow\infty} (1+\epsilon)^{T-1}.    
\end{align}
%-----------------------------
~\\
For this purpose, we show $\forall \,0< \epsilon <1$
%------
\begin{itemize}
    \item[(I)] $\,\lim_{T\rightarrow\infty}(1-\epsilon)^{T-1}  \, < \, \lim_{T\rightarrow\infty} \norm{\frac{\partial \vz_T}{\partial \vz_1}}$, and \\
    \item[(II)] $\,\lim_{T\rightarrow\infty} \norm{\frac{\partial \vz_T}{\partial \vz_1}} \, < \, \lim_{T\rightarrow\infty} (1+\epsilon)^{T-1}$.
\end{itemize}
%-----------------------
Assume for the sake of contradiction that (I) does not hold. Then there exists some $0< \epsilon <1$ such that
%-----------------
\begin{align}
\lim_{T\rightarrow\infty}(1-\epsilon)^{T-1}  \, \geq \, \lim_{T\rightarrow\infty} \norm{\frac{\partial \vz_T}{\partial \vz_1}}.    
\end{align}
%-----------------------
Therefore
%---------------------
\begin{align}
\exists \,T_0>1 \, \, \, s.t.\, \, \, \forall \,T \geq T_0 \, \, \Longrightarrow \, \, (1-\epsilon)^{T-1}  \, \geq \,  \norm{\frac{\partial \vz_T}{\partial \vz_1}},
\end{align}
%-----------------------
and so 
%----------------
\begin{align}
\exists \,T_0>1 \, \, \, s.t.\, \, \, \forall \,T \geq T_0 \, \, \Longrightarrow \, \, \frac{\ln(1-\epsilon)^{T-1}}{T-1}  \, \geq \, \frac{\ln \norm{\frac{\partial \vz_T}{\partial \vz_1}}}{T-1}.
\end{align}
%-----------------------
Consequently, due to \eqref{LE-quasi}, for $T\rightarrow\infty$ we have $\, \ln(1-\epsilon) \,\geq \, 0$. This implies $\epsilon \leq 0$, which is a contradiction. 

Similarly if we assume (II) is not true, then there exists some $0< \epsilon <1$ such that
%-----------------
\begin{align}
\lim_{T\rightarrow\infty} \norm{\frac{\partial \vz_T}{\partial \vz_1}} \, \geq \, \lim_{T\rightarrow\infty} (1+\epsilon)^{T-1}.    
\end{align}
%------------------
Thereby
%----------------
\begin{align}
\exists \,T_0>1 \, \, \, s.t.\, \, \, \forall \,T \geq T_0 \, \, \Longrightarrow \, \,  \norm{\frac{\partial \vz_T}{\partial \vz_1}} \, \geq \,  (1+\epsilon)^{T-1},
\end{align}
%-----------------------
and thus
%----------------
\begin{align}
\exists \,T_0>1 \, \, \, s.t.\, \, \, \forall \,T \geq T_0 \, \, \Longrightarrow \, \, \frac{\ln \norm{\frac{\partial \vz_T}{\partial \vz_1}}}{T-1} \, \geq \, \frac{\ln(1+\epsilon)^{T-1}}{T-1}.
\end{align}
%-----------------------
~\\
This means $\, \ln(1+\epsilon) \,\leq \, 0$ as $T\rightarrow\infty$, i.e. $\epsilon \leq 0$, which is a contradiction. 

Therefore \eqref{re-quasi} holds for $\Gamma$ and also, according to Oseledec's multiplicative ergodic Theorem, for every $\vz_1$ in the basin of attraction of $\Gamma$. 
\end{proof}
\subsection{Additional results on relation between dynamics and gradients}\label{}
%
%#######################
\subsubsection{Further results and remarks related to Theorem \ref{thm-3}}\label{rem-cor-thm-chaos}
%--------------------------------
\begin{remark}\label{rm-unst-orb}
The result of Theorem \ref{thm-3} also holds for unstable orbits $\, \{\vz_1, \vz_2, \vz_3, \cdots \} \,$ with positive largest Lyapunov exponent. Trivially, for such orbits that diverge to infinity (unbounded latent states) gradients of the loss function will explode as $T\rightarrow\infty$. 
\end{remark}
%----------------------------------
%-------------------------------
\begin{remark}
For RNNs with ReLU activation functions 
there are finite compartments %(components) 
in the phase space each with a different functional form. In such a case, to define the largest Lyapunov exponent of $\,\Gamma^{*}$, in the proof of Theorem \ref{thm-3} we assume that $\,\Gamma^{*}$ never maps to the points of the borders.
\end{remark}
%------------------------------
%-------------------------------
Based on Theorem \ref{thm-3}, we can also formulate the necessary conditions for chaos and diverging gradients in standard RNNs with particular activation functions by considering the norms of their recurrence matrix, for which the following Corollary provides the basis:
%----------------------------------------
\begin{corollary}\label{cor-1}
Let for a standard RNN %\eqref{eq-rnn}
%-------------------------
\begin{align}\label{bound-der}
\, \norm{diag \big( f^{\prime}\big( \mW \vz_{t-1} \,+\, \mB \vs_t \, + \,\vh  \big) \big)} \, \leq \, \gamma \, < \, \infty.  
\end{align}
%------------------------------
If the RNN is chaotic, then $\, \norm{\mW}\,\gamma\, >\, 1\,$.
%-----------------------------
\end{corollary}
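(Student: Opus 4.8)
\textbf{Proof proposal for Corollary \ref{cor-1}.}
The plan is to bound the norm of the Jacobian product along any orbit by a geometric factor and then read off the constraint on $\lambda_{max}$ from the chaoticity hypothesis via Theorem \ref{thm-3}. First I would write down the one-step Jacobian of the standard RNN explicitly: since $F_{\boldsymbol\theta}(\vz_{t-1},\vs_t) = f(\mW\vz_{t-1} + \mB\vs_t + \vh)$ with $f$ applied element-wise, the chain rule gives
\begin{align}\label{cor-jac}
\mJ_t \,=\, \frac{\partial \vz_t}{\partial \vz_{t-1}} \,=\, \mathrm{diag}\big(f^{\prime}(\mW\vz_{t-1} + \mB\vs_t + \vh)\big)\,\mW.
\end{align}
By submultiplicativity of the spectral norm and the hypothesis \eqref{bound-der}, this immediately yields $\norm{\mJ_t} \le \norm{\mathrm{diag}(f^{\prime}(\cdots))}\,\norm{\mW} \le \gamma\,\norm{\mW}$ uniformly along the (chaotic) orbit.

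Next I would chain this bound along the trajectory: again using submultiplicativity,
\begin{align}\label{cor-prod}
\norm{\,J^{*}_{T}\, J^{*}_{T-1}\, \cdots \,  J^{*}_{2}\,} \,\le\, \prod_{r=0}^{T-2}\norm{\mJ^{*}_{T-r}} \,\le\, \big(\gamma\,\norm{\mW}\big)^{T-1}.
\end{align}
Taking $\tfrac{1}{T}\ln(\cdot)$ of both sides and letting $T\to\infty$, the definition \eqref{LE} of the largest Lyapunov exponent gives $\lambda_{max} \le \ln\!\big(\gamma\,\norm{\mW}\big)$.

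Finally, invoking Theorem \ref{thm-3} (more precisely, the standard fact recalled there that $\lambda_{max}>0$ is a necessary condition for a chaotic orbit), chaoticity of the RNN forces $\ln(\gamma\,\norm{\mW}) \ge \lambda_{max} > 0$, i.e. $\gamma\,\norm{\mW} > 1$, which is the claim. There is essentially no serious obstacle here — the only point requiring a word of care is that the bound \eqref{bound-der} must hold along the orbit in question (it is stated as a hypothesis, so this is automatic), and that one should use a submultiplicative matrix norm (the spectral norm, or any subordinate norm, as already assumed in \eqref{eq:lyap}); the rest is a routine geometric-mean estimate.
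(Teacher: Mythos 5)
Your proposal is correct and rests on exactly the same estimate as the paper's proof, namely $\norm{\mJ_t}\le\gamma\,\norm{\mW}$ by submultiplicativity and hence $\norm{J^{*}_{T}\cdots J^{*}_{2}}\le(\gamma\norm{\mW})^{T-1}$; the paper merely phrases this as a contradiction with the divergence of the Jacobian product established in Theorem \ref{thm-3}, whereas you take logarithms and argue directly via $\lambda_{max}>0$. The two are contrapositives of one another, so there is nothing substantive to distinguish them.
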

%---------------------
\begin{proof}
Assume for the sake of contradiction that $\, \norm{\mW}\,\gamma\, \leq \, 1\,$. From 
%----------------------------
\begin{align}\nonumber
\norm{\prod_{2 < t \leq T} \mW diag \big( f^{\prime}\big( \mW \vz_{t-1} \,+\, \mB \vs_t \, + \,\vh  \big) \big)} &\, \leq \, \prod_{2 < t \leq T} \norm{\mW diag \big( f^{\prime}\big( \mW \vz_{t-1} \,+\, \mB \vs_t \, + \,\vh \big) \big)}
\\[1ex]\label{}
&\, \leq \, ( \norm{\mW}\,\gamma)^{T-2},
\end{align}
 %--------------------------------
 ~\\
it is concluded that 
$ \,\lim_{T\rightarrow\infty}  \norm{\prod_{2 < t \leq T} \mW diag \big( f^{\prime}\big( \mW \vz_{t-1}\,+\, \mB \vs_t \, + \,\vh  \big) \big)}\, < \, \infty \,$, which contradicts \eqref{yaein}. This means $\, \norm{\mW}\,\gamma\, >\, 1\,$ is a necessary condition for the standard RNN to be chaotic.
\end{proof}
%----------------------
\begin{remark}
For RNN with the \textit{tanh} and \text{sigmoid} activation functions $\gamma=1$ and $\gamma=\frac{1}{4}$, respectively. Thus, by Corollary \ref{cor-1}, the necessary conditions for chaos in these two cases are $\, \norm{\mW}\, >\, 1\,$ and $\, \norm{\mW}\, >\, 4\,$, respectively. 
\end{remark}
%--------------------------------
%#################
%\subsubsection{Further results related to Section \ref{sec-other}}\label{pro-p-other}
%%%%
%-----Einja A.3.2.
%%%%
\subsubsection{Other connections between dynamics and gradients}\label{sec-other}
%\subsection{Other connections between dynamics and gradients}\label{sec-other2}
%I find the * here and below quite confusing, it was nowhere defined or explained -- do we really need this? I think it becomes clear from the proofs anyway and we are not referring to something fundamentally different, so let's use z and L w/o stars if possible?
%As sect. \ref{sec:3} elucidated, there is a direct link between the norms of the Jacobians of the RNN along trajectories and the EVGP. By observing this link, we can formulate some general conditions %on the norms of the Jacobians 
%As the previous sections elucidated, 
There is a direct link between the norms of the Jacobians of the RNN along trajectories and the EVGP. By observing this link, we can formulate some general conditions %on the norms of the Jacobians 
that will have implications for the behavior of the gradients regardless of the limiting behavior of the RNN, as collected in the following theorem:
%In Appx. \ref{sec-other} we have collected further mathematical results on the connection between RNN dynamics and loss gradients that hold regardless of the RNN's limiting behavior.
%--------------------------------------
\begin{theorem}\label{thm-other}
Let $\mathcal{O}_{\vz_{1}}\,= \,\{\vz_{1}, \vz_{2}, \ldots \vz_T, \cdots \}$ be a sequence (orbit) generated by 
%the generic deep RNN \eqref{eq-rnn} 
an %autonomous 
RNN $F_{\boldsymbol\theta} \in \mathcal{R}$ parameterized by $\boldsymbol\theta$, and $\mP_T\, :=\, \mJ_T-\mI, \, \, T=2, 3, \cdots $. 
%-------------------------------------------
\vspace{-.3cm}
%------
\begin{itemize}
\item[(i)] Assume that $\mathcal{O}_{\vz_{1}}$ is an orbit for which $\,\norm{\frac{\partial^{+} \vz_T}{\partial \theta}} \, \leq \, \xi\, \, \,\forall t$. 
%$\mathcal{O}_{\vz_{1}}$ is a bounded sequence and $\,\norm{ \xi_{mk}(\vz_{t})} \, \leq \, \xi\, \, \,\forall t$. %Let further 
If $\, \sum_{T=2}^{\infty} \norm{\mJ_T} \, < \, \infty$, then the Jacobian $\frac{\partial \vz_T}{\partial \vz_1}$, the tangent vector $\frac{\partial \vz_T}{\partial \theta}$ and thus the gradient of the loss function, $\frac{\partial \mathcal{L}_T}{\partial \theta}$, will be bounded for $T\rightarrow\infty$.
\vspace{-.2cm}
\item[(ii)] If $\, \sum_{T=2}^{\infty} \norm{\mP_T} \, < \, \infty$, then the Jacobian $\frac{\partial \vz_T}{\partial \vz_1}$ will neither vanish nor explode as $T \rightarrow \infty$.
%the number $T$ of layers goes to infinity.
%%%%%----------------
\item[(iii)] Let $\norm{ \mJ_T} \, \neq \, 0, \, \, T \geq 2$, and $\, \sum_{T=2}^{\infty} \ln{\norm{\mJ_T}}$ diverge to $\,-\infty$, then the Jacobian $\frac{\partial \vz_T}{\partial \vz_1}$ vanishes as $T$ tends to infinity.  
\end{itemize}
%---------------------------------
\end{theorem}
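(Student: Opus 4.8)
The plan is to handle the three parts in increasing order of difficulty, in each case reducing everything to scalar estimates on the product of Jacobian norms $\prod_{k=2}^{T}\norm{\mJ_k}$, via the identity $\frac{\partial \vz_T}{\partial \vz_1}=\prod_{k=2}^{T}\mJ_k$ from\eqref{eq:prod_jacob} and submultiplicativity of the operator norm. Part (iii) is then immediate: since each $\norm{\mJ_k}>0$, one may write $\norm{\frac{\partial \vz_T}{\partial \vz_1}}\le\prod_{k=2}^{T}\norm{\mJ_k}=\exp\!\big(\sum_{k=2}^{T}\ln\norm{\mJ_k}\big)$, and as $\sum_{k=2}^{\infty}\ln\norm{\mJ_k}=-\infty$ the exponent tends to $-\infty$, so the bound, and hence the Jacobian, vanishes.

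For part (i), I would first observe that $\sum_{T\ge 2}\norm{\mJ_T}<\infty$ forces $\norm{\mJ_T}\to 0$, so there is a $T_0$ with $\norm{\mJ_T}\le\tfrac12$ for $T\ge T_0$; then $\prod_{k=2}^{T}\norm{\mJ_k}$ is eventually multiplied by factors $\le\tfrac12$ and tends to $0$, giving (in fact) decay of $\frac{\partial \vz_T}{\partial \vz_1}$. For the tangent vector I would start from\eqref{eq:tangent-vec}, apply the hypothesis $\norm{\partial^{+}\vz_\cdot/\partial\theta}\le\xi$ and submultiplicativity, and reindex the inner products by their lowest index $j$ to obtain $\norm{\frac{\partial \vz_T}{\partial\theta}}\le\xi\big(1+\sum_{j=3}^{T}\prod_{i=j}^{T}\norm{\mJ_i}\big)$. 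Splitting the outer sum at $T_0$: for $j\ge T_0$ the inner product is bounded by $(\tfrac12)^{T-j+1}$, which sums to a geometric constant $<1$; for the finitely many $j<T_0$ the inner product is at most $C_0\,(\tfrac12)^{T-T_0+1}$ with $C_0:=\prod_{i=2}^{T_0-1}\max(\norm{\mJ_i},1)$ a fixed constant, and this $\to 0$. Hence $\norm{\frac{\partial \vz_T}{\partial\theta}}$ stays bounded, and \eqref{eq-s-gr} together with boundedness of $\norm{\partial\mathcal{L}_T/\partial\vz_T}$ (which holds on a bounded orbit, e.g.\ for the MSE loss) gives boundedness of $\norm{\partial\mathcal{L}_T/\partial\theta}$.

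For part (ii), set $\mJ_k=\mI+\mP_k$ and $\mQ_T:=\prod_{k=2}^{T}\mJ_k$. The upper bound is immediate: $\norm{\mQ_T}\le\prod_{k=2}^{T}(1+\norm{\mP_k})\le\exp\!\big(\sum_{k=2}^{\infty}\norm{\mP_k}\big)<\infty$, so there is no explosion. For the lower bound I would use the telescoping identity $\mQ_T=\mQ_{T-1}+\mQ_{T-1}\mP_T$, whence $\norm{\mQ_T}\ge\norm{\mQ_{T-1}}\bigl(1-\norm{\mP_T}\bigr)$; picking $T_1$ so that $\norm{\mP_k}<1$ for $k>T_1$ and iterating gives $\norm{\mQ_T}\ge\norm{\mQ_{T_1}}\prod_{k>T_1}(1-\norm{\mP_k})$, where the infinite product converges to a strictly positive constant because $\sum\norm{\mP_k}<\infty$. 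Thus $\norm{\mQ_T}$ is bounded below by a positive constant, provided the fixed head product $\mQ_{T_1}$ is nonzero --- the only genuinely degenerate exception, and note that $\mQ_{T_1}=0$ would force $\mQ_T\equiv 0$ thereafter --- so the Jacobian neither vanishes nor explodes.

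The main obstacle I expect is the lower bound in part (ii): matrices $\mJ_k$ for RNNs in $\mathcal{R}$ (for instance the ReLU-based ones) can be singular, so one cannot argue via invertibility of the limiting matrix product; the telescoping estimate $\norm{\mQ_T}\ge\norm{\mQ_{T-1}}(1-\norm{\mP_T})$ is precisely what sidesteps this, but one must be careful that it only controls the operator norm and that the statement implicitly excludes the degenerate case of an identically vanishing Jacobian product. The reindexing-and-splitting bookkeeping in part (i) is the second most delicate point, but it is routine once the threshold $T_0$ is fixed; parts (iii) and the upper bounds are essentially one line each.
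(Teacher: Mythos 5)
Your proof is correct, and for parts (i) and (ii) it takes a more elementary, self-contained route than the paper. The paper bounds the reindexed sum $1+\sum_{j}\prod_{i=j}^{T}\norm{\mJ_i}$ by the full expansion $\prod_{t=2}^{T}\bigl(1+\norm{\mJ_t}\bigr)$ and then invokes the classical convergence theorem for infinite products (Wedderburn) to conclude both (i) and (ii); in (ii) it simply asserts that $\sum_T\norm{\mP_T}<\infty$ makes $\prod_T(\mI+\mP_T)$ converge to a matrix $\mK\neq\mO$. Your threshold-plus-geometric-tail argument in (i) and your telescoping lower bound $\norm{\mQ_T}\ge\norm{\mQ_{T-1}}\bigl(1-\norm{\mP_T}\bigr)$ in (ii) avoid citing that theorem entirely, at the cost of slightly more bookkeeping; they also yield the stronger conclusion in (i) that $\frac{\partial \vz_T}{\partial \vz_1}\to 0$. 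More importantly, your lower-bound argument correctly exposes a degenerate case that the paper's appeal to Wedderburn glosses over: the cited result guarantees a \emph{nonsingular} (hence nonzero) limit only when every factor $\mI+\mP_T$ is invertible, whereas summability of $\norm{\mP_T}$ only forces invertibility of the tail factors; if a finite head product $\mQ_{T_1}$ vanishes (possible, e.g., for ReLU-based Jacobians), the full product is identically zero thereafter and claim (ii) fails as literally stated. Your explicit flagging of this exception is the honest reading of the theorem. Two cosmetic points: in the telescoping step the new factor multiplies on the left, so the identity is $\mQ_T=\mQ_{T-1}+\mP_T\mQ_{T-1}$ (the norm estimate is unaffected), and the hypothesis bound on $\norm{\partial\mathcal{L}_T/\partial\vz_T}$ that you add for the loss gradient is indeed needed and is left implicit in the paper.
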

%---------------------------------
%\vspace{-.6cm}
%-----------------------------------
%\begin{proof}
%See Appx. \ref{p-thm-other}.
%\end{proof}
%---------------------------
%\vspace{-.5cm}
Part $(i)$ of Theorem \ref{thm-other} relaxes some of the conditions required in 
%relaxes the implicit conditions on the Jacobians in 
Theorem \ref{thm-k-2} for bounded gradients by imposing a Lipschitz condition on the immediate derivatives. Part $(ii)$ generalizes conditions satisfied, for instance, in orthogonal (unitary) RNNs \citep{Arjovsky_2016,henaff_recurrent_2016} or fully regularized PLRNNs \citep{schmidt_identifying_2021}.
%%%%%%%
%%%%%%%%%%%%%%%%%%%%%%%%%%%%%%%
%----------------------------
%\begin{proposition}\label{Pro-3}
%Let $\mathcal{O}_{\vz_{1}}\,= \,\{\vz_{1}, \vz_{2}, \ldots \vz_T, \cdots \}$ be an orbit generated by 
%%the generic deep RNN \eqref{eq-rnn} 
%an RNN $F_{\boldsymbol\theta} \in \mathcal{R}$ (parameterized by $\boldsymbol\theta$), and $\norm{ \mJ_T} \, \neq \, 0, \, \, T \geq 2$. If $\, \sum_{T=2}^{\infty} \ln{\norm{\mJ_T}}$ diverges to $\,-\infty$, then the Jacobian $\frac{\partial \vz_T}{\partial \vz_1}$ vanishes as $T$ tends to infinity.  
%---------------------------------
%\end{proposition}
%----------------------------------
\begin{proof}
Let $\norm{.}$ be any matrix norm satisfying $\norm{\mA_1\mA_2}  \, \leq \, \norm{\mA_1} \norm{\mA_2} $.

$(i)\,$ By boundedness of 
$\,\frac{\partial^{+} \vz_T}{\partial \theta}$ we have
%--------------------------------------
\begin{align}\nonumber
\norm{\frac{\partial \vz_{T}}{\partial \theta}} & 
\, = \, \norm{\frac{\partial^{+} \vz_T}{\partial \theta}\, + \, \sum_{t=1}^{T-2} \bigg( \prod_{r=0}^{t-1} \mJ_{T-r} \bigg)\frac{\partial^{+} \vz_{T-t}}{\partial\theta}}
\\[1ex]\label{eq-i}
&\, \leq\, %\bar{q}\, 
\xi \bigg( 
1  \, + \, \sum_{t=1}^{T-2} \norm{ \prod_{r=0}^{t-1} \mJ_{T-r}} \bigg)\, \leq\, %\bar{q}\, 
\xi \bigg( 
1  \, + \, \sum_{t=1}^{T-2}\,  \prod_{r=0}^{t-1} \norm{\mJ_{T-r}} \bigg).
\end{align}
%-------------------------------------------
Moreover, 
%-------------------------------------
\begin{align}\nonumber
1  \, + \,\sum_{t=1}^{T-2} \, \prod_{r=0}^{t-1} \norm{\mJ_{T-r}} &\, \leq \,  1  \, + \, \sum_{p} \norm{\mJ_{p}} \, + \, \sum_{p<q} \norm{\mJ_{p}}\norm{\mJ_{q}} \, + \, \sum_{p<q<r} \norm{\mJ_{p}}\norm{\mJ_{q}} \norm{\mJ_{r}} \,+\, \cdots 
\\[1ex]\label{eq-i-2}
&\, =\,\big(1+\norm{\mJ_{T}}\big)\big(1+\norm{\mJ_{T-1}}\big)\cdots \big(1+\norm{\mJ_{2}}\big)\, =:\,\prod_{t=2}^{T} \big(1+\norm{\mJ_{t}}\big).
\end{align}
%------------------------------------
Since $\, \sum_{T=2}^{\infty} \norm{\mJ_T}$ converges, according to \citep{Wedderburn_1964}, the infinite products $\prod_{T=2}^{\infty} \big(1+\norm{\mJ_{T}}\big)$ in \eqref{eq-i-2} converge to a finite number $\tilde{\mathcal{K}} \neq 0$. Consequently, by \eqref{eq-i} and \eqref{eq-i-2} 
%-------------------------
\begin{align}
\lim_{T\rightarrow\infty}\norm{\frac{\partial \vz_{T}}{\partial \theta}} \, \leq \, \tilde{\mathcal{K}} \,< \, \infty,
\end{align}
%-----------------------
which implies $\frac{\partial \mathcal{L}_T}{\partial \theta}$ will be bounded for $T\rightarrow\infty$.\\

Furthermore 
%------------------------
\begin{align}
\lim_{T\rightarrow\infty}\norm{\frac{\partial \vz_T}{\partial \vz_1}} \, \leq \, \prod_{T=2}^{\infty} \norm{\mJ_{T}}  \,: = \, \lim_{T\rightarrow\infty} \bigg(\norm{\mJ_{T}}\, \norm{\mJ_{T-1}}\, \cdots \,  \norm{\mJ_{2}}\bigg) \, \leq \, \prod_{T=2}^{\infty} \big(1+\norm{\mJ_{T}}\big)\, \leq \, \tilde{\mathcal{K}},
\end{align}
%----------------------------
which completes the proof.
%%%%
\\

$(ii)$ Since $\, \sum_{T=1}^{\infty} \norm{\mP_T}  \, < \, \infty$, due to \citep{Wedderburn_1964} the infinite product 
%------------------
\begin{align}
 \prod_{T=2}^{\infty} \big(\mI+\mP_{T}\big)\,=\, \prod_{T=2}^{\infty} \mJ_T \,:= \, \lim_{T\rightarrow\infty} \mJ_{T}\, \mJ_{T-1}\, \cdots \,  \mJ_{2},
 \end{align}
%-------------------
converges to a matrix $\mK \neq \mO$, which implies 
%----------------
\begin{align}
0\, < \, \lim_{T\rightarrow\infty} \norm{\frac{\partial \vz_T}{\partial \vz_1}} = \norm{\mK} \, < \, \infty.    
\end{align}
%----------------
$(iii)$ For $\norm{ \mJ_T} \, \neq \, 0, \, \, T \geq 2$, we have
%---------------------------
\begin{align}\nonumber
& 0 \,  \leq \, \norm{\frac{\partial \vz_T}{\partial \vz_1}}  \, \leq \, \norm{\mJ_{T}}\, \norm{\mJ_{T-1}}\, \cdots \,  \norm{\mJ_{2}} 
\\[1ex]
& =  e^{\ln{\norm{\mJ_{T}}}} e^{\ln{\norm{\mJ_{T-1}}}} \cdots e^{\ln{\norm{\mJ_{2}}}} =  e^{\sum_{t=2}^{T} \ln{\norm{\mJ_t}}}. 
\end{align}
%---------------------------
Hence if $\,\sum_{T=2}^{\infty} \ln{\norm{\mJ_T}}\rightarrow -\infty$, then
%--------------------------
\begin{align}
\lim_{T\rightarrow\infty}\frac{\partial \vz_T}{\partial \vz_1} \, = \,  \mO.  
\end{align}
%----------------------------
\end{proof}
%######################################
\subsection{Empirical evaluation: Datasets}

\paragraph{Lorenz attractor}\label{supp:lorenz}
The Lorenz system \citep{lorenz_deterministic_1963} is a simplified model for atmospheric convection, given by 
\begin{align}\label{eq-Lorenz} \nonumber
\frac{\mathrm{d} x}{\mathrm{~d} t}&=\sigma(y-x) ,\\ 
\frac{\mathrm{d} y}{\mathrm{~d} t}&=x(\rho-z)-y, \\ \nonumber
\frac{\mathrm{d} z}{\mathrm{~d} t}&=x y-\beta z.
\end{align}
%-------------------------------
The system is of particular interest for its chaotic regime and was studied here for $\sigma = 16$, $\rho = 45.92$ and $\beta = 4$. For these parameters the Lorenz system is known to have a maximal Lyapunov exponent $\lambda_{\max}=1.5$ \citep{rosenstein_practical_1993}. To generate a time series, the ODEs were integrated with a step size $\Delta t=0.01$ using \texttt{scipy.integrate}. 
Accordingly, the prediction time is $\tau_{pred}=\frac{\ln(2)}{\Delta t\  \lambda_{max}}= 46.2$.

\paragraph{Duffing oscillator} The Duffing oscillator \citep{Duffing1918} is an example of a periodically forced oscillator with nonlinear elasticity
\begin{align}\label{supp:Duffing}
    \ddot x+\delta \dot x+\beta x+\alpha x^3=\gamma \cos(\omega t).
\end{align}
Note that this system is non-autonomous, that is externally forced due to the r.h.s. of eqn. \ref{supp:Duffing}. The following parameters were chosen to arrive at a chaotically forced oscillator: $\alpha = 1.0$, $\beta=-1.0$, $\delta = 0.1$, $\gamma=0.35$, and $\omega= 1.4$. For these parameters the Duffing oscillator has a maximum Lyapunov exponent of $\lambda_{max} = 0.0995$. The dataset used here was created with the code from \citep{gilpin2021chaos} as a three dimensional embedding with step size $\Delta t = 0.17$. The prediction time is $\tau_{pred}=39.28$.

\paragraph{Rössler system}\label{supp:roessler}
Another prime textbook example for a chaotic system is the Rössler system \citep{rossler_equation_1976} given by:
\begin{align}\nonumber
\frac{d x}{d t}&=-y-z ,\\ 
\frac{d y}{d t}&=x+a y, \\\nonumber
\frac{d z}{d t}&=b+z(x-c).
\end{align}

For the parameters $a=0.15$, $b=0.2$ and $c=10$, the maximal Lyapunov exponent is $\lambda_{\max}=0.09$ \citep{rosenstein_practical_1993}. To arrive at a time series, a step size of $\Delta t= 0.1$ was chosen for integration. This gives us a prediction time of $\tau_{pred}= 77.0$ for this system. 

\paragraph{Mackey-Glass equation}
The Mackey-Glass equation \citep{glass_pathological_1979} is a nonlinear time delay differential equation 
\begin{align}
\dot x = \beta \frac{x_{\rho}}{1+x_{\rho}^n}-\gamma x\, \, \, \, \text{ with }\beta, \gamma, \rho >0.
\end{align}
Here $x_{\rho}$ represents the value of the variable $x$ at time $t - \rho$ (note that strictly, mathematically, this makes the system infinite-dimensional). Choosing the parameters to be  $\beta=2$, $\gamma=1.0$, $n= 9.65$, and $\rho= 2.0$, leads to chaotic behavior with a maximum Lyapunov exponent of $\lambda_{max} = 0.21$. The dataset was created as a 10-dimensional embedding with the code from \citep{gilpin2021chaos} using $\Delta t = 0.04$. This yields a prediction time of $\tau_{pred} = 82.2$.

\paragraph{Empirical temperature time series}\label{supp:emp-time-series}
This time series was recorded at the \href{https://www.bgc-jena.mpg.de/wetter/}{Weather Station at the Max Planck Institute for Biogeochemistry in Jena, Germany}, spanning the time period between 2009 and 2016, and reassembled by François Chollet for the book \textit{Deep Learning with Python}. The data set can be accessed at \href{https://www.kaggle.com/pankrzysiu/weather-archive-jena}{https://www.kaggle.com/pankrzysiu/weather-archive-jena}. \\
To expose the underlying chaotic dynamics of the time series, trends and yearly cycles were removed, and nonlinear noise-reduction was performed (using \texttt{ghkss} from \textit{TISEAN}, see also  \citep{kantz_nonlinear_1993}). Fig. \ref{fig:emp-data} (a) shows a snippet of the temperature data in comparison with the de-noised time-series. High-frequency noise was further reduced through Gaussian kernel smoothing ($\sigma = 200$), and the resulting time series was sub-sampled (every $5^{th}$ data point was retained). Fig. \ref{fig:emp-data} (b) clearly reveals a fractional dimension of $D_{eff}=2.8$ for the de-noised and smoothed time-series. This strongly suggests that the  dynamics governing the time series are chaotic.
We created a time delay embedding \citep{kantz_schreiber_2003} with $m=5$ (estimated by the false nearest neighbor technique, see \citep{Kennel_fnn_1992}) and delay $\Delta t = 500$ (obtained as the first minimum of the mutual information). 
The first three embedding dimensions are shown in Fig. \ref{fig:emp-data}(c). The maximal Lyapunov exponent of this time series was determined with \texttt{lyap\_r} from \textit{TISEAN} \citep{hegger_practical_1999} to be $\lambda_{\max}=0.016$, see Fig. \ref{fig:tau-sweep-emp-data}(a). This value is in close agreement with the literature \citep{millan_nonlinear_2010}. The predictability time of this system is estimated to be $\tau_{pred}=43.3$.
\begin{figure}[hbt!]
\centering    
\subfigure{\label{fig:a:1}\includegraphics[width=44mm]{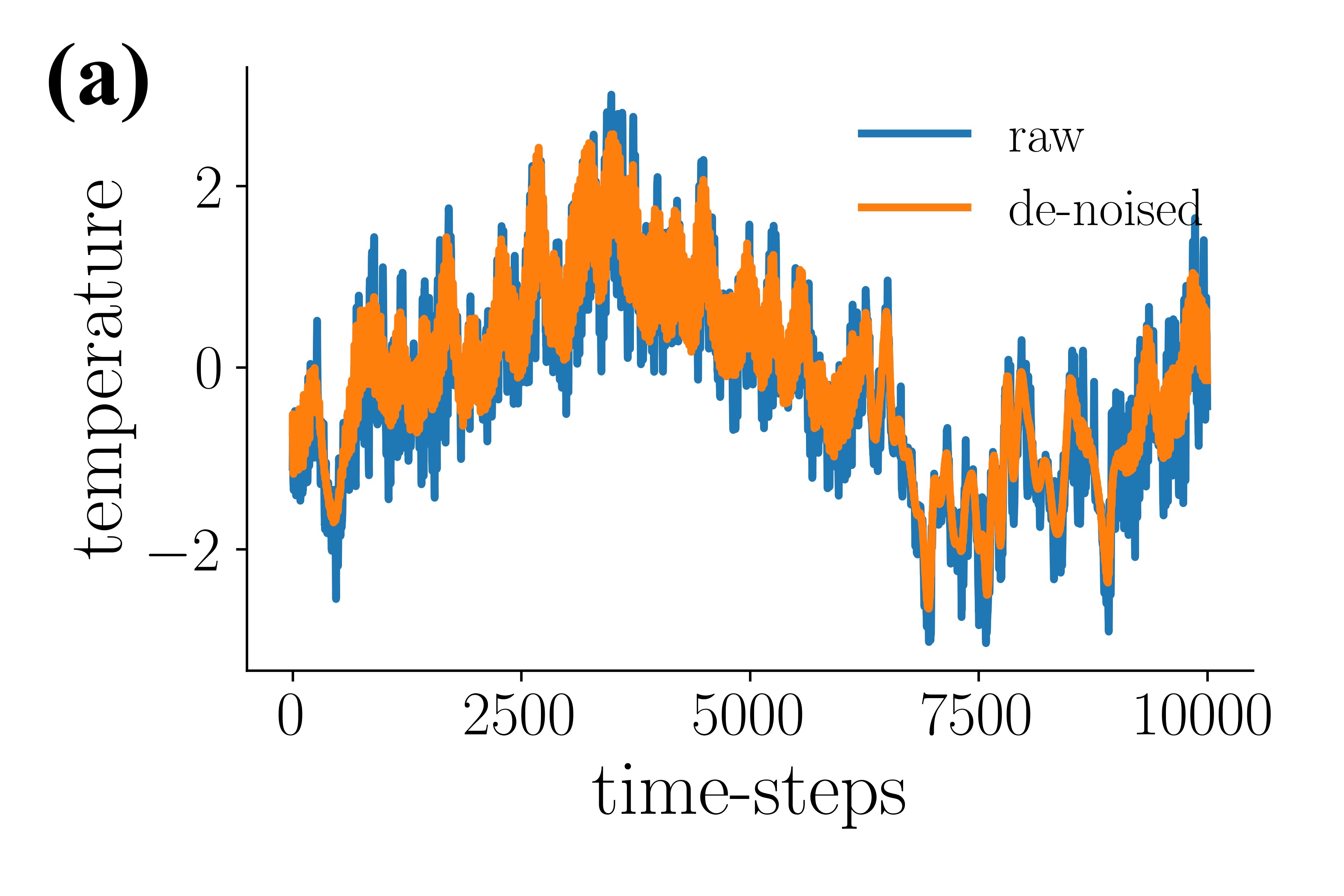}}
\subfigure{\label{fig:b:1}\includegraphics[width=44mm]{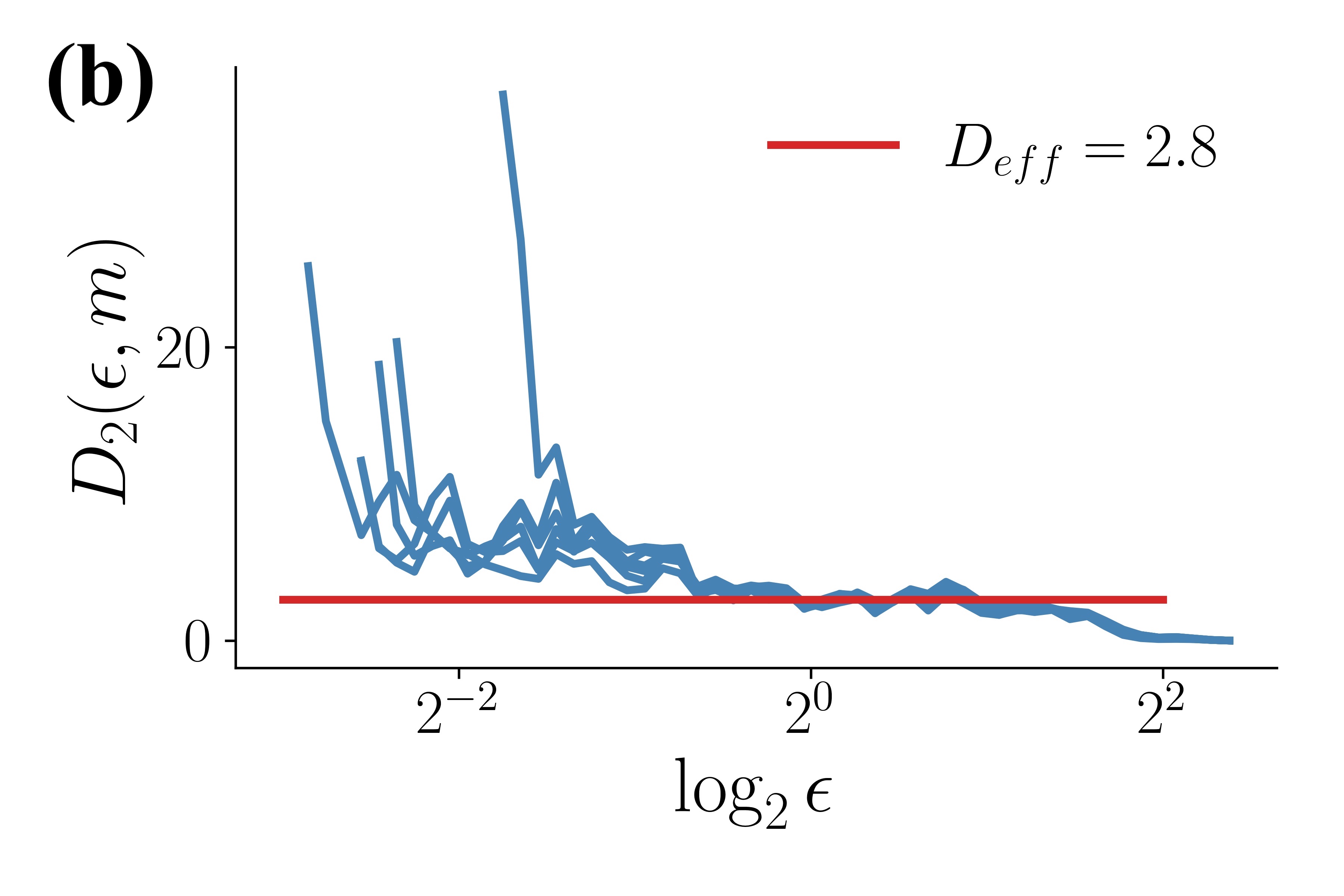}}
\subfigure{\label{fig:b:1}\includegraphics[width=44mm]{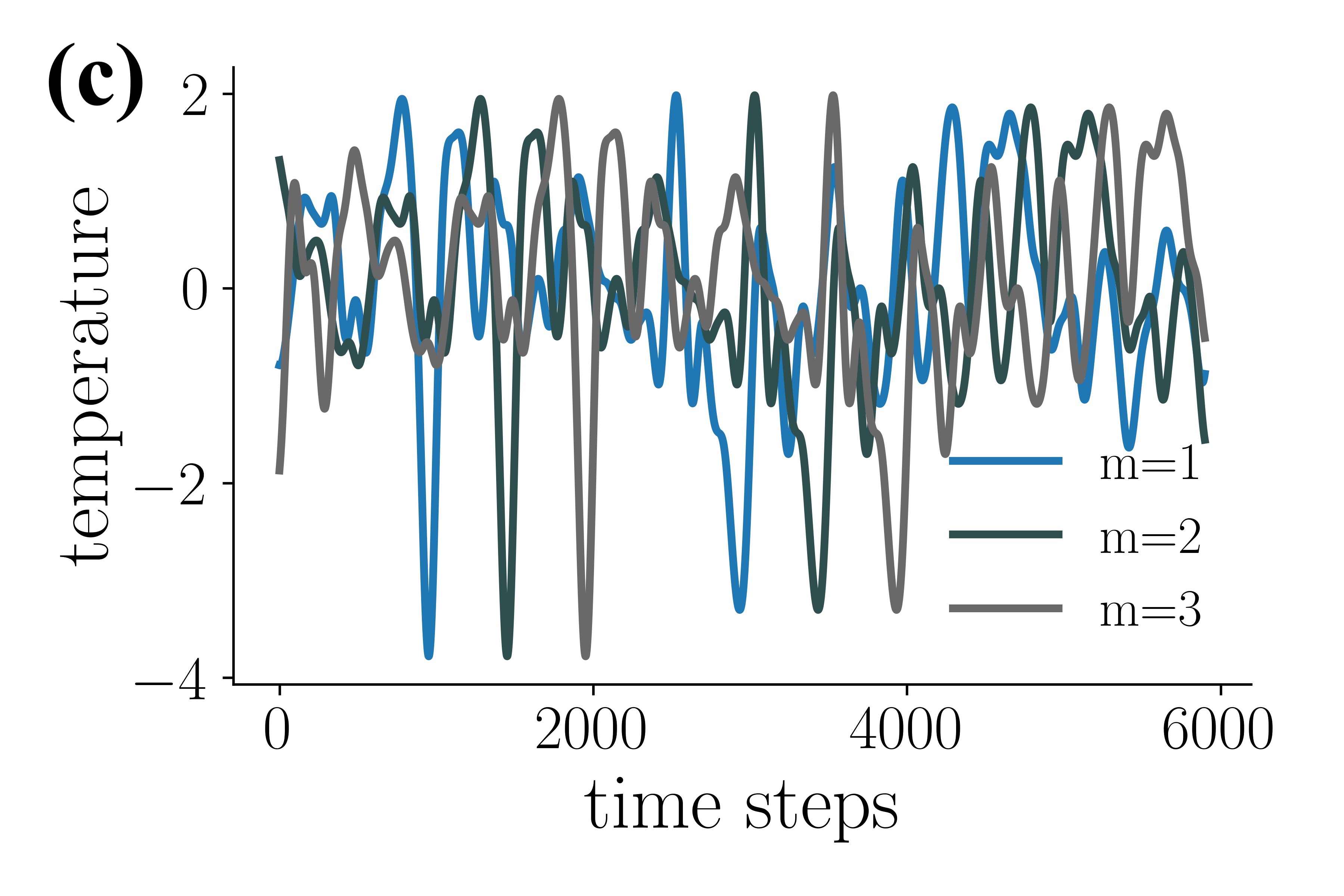}}
%[width=68mm]
\caption{\small(a) Snippet of the original temperature data and de-noised time series. (b) Blue lines show the local slopes of the correlation sums for embedding dimensions $m \in \{5, \dots, 10\}$. The convergence of these estimates in $m$ reveals a fractional dimension indicated by the plateau. (c) First three dimensions of the time-delay embedding series as used for training.}\label{fig:emp-data}
\normalsize
\end{figure}
~\\[4ex]
All datasets used were standardized (i.e., centered with unit variance) prior to training.

\subsection{Empirical evaluation: measures of reconstruction quality}\label{supp:metrics}

\paragraph{Attractor overlap}\label{supp-klx}
To asses the geometrical similarity of the chaotic attractor produced by the RNN to the one underlying the observations, we calculate the Kullback-Leibler divergence of the ground truth distribution $p_{\text{true}}(\vx)$ and the distribution $p_{\text{gen}}(\vx|\vz)$ generated by RNN simulation. To do so in practice, we employ a binning approximation (see \citep{Koppe_nonlin_2019})
\begin{align} \nonumber
D_{\mathrm{stsp}}\left(p_{\mathrm {true }}(\mathbf{x}), p_{\mathrm {gen }}(\mathbf{x} \mid \mathbf{z})\right) \approx \sum_{k=1}^{K} \hat{p}_{\mathrm {true }}^{(k)}(\mathbf{x}) \log \left(\frac{\hat{p}_{\mathrm {true }}^{(k)}(\mathbf{x})}{\hat{p}_{\mathrm {gen }}^{(k)}(\mathbf{x} \mid \mathbf{z})}\right),
\end{align}
where $K$ is the total number of bins, and $\hat{p}_{\text {true }}^{(k)}(\mathbf{x})$ and $\hat{p}_{\text {gen }}^{(k)}(\mathbf{x} \mid \mathbf{z})$ are estimates obtained as relative frequencies through sampling trajectories from the observed time-series and the trained RNN, respectively.

\paragraph{Hellinger distance between power spectra}\label{supp-psc}
Since in DS reconstruction we mainly aim to capture invariant and time-independent properties of the underlying system, besides the geometrical agreement, we compare the similarity in true and RNN-reconstructed power spectra. To do so, we generate a time series of length $100,000$ from the RNN and calculate %its power spectrum 
its dimension-wise power spectra $S(\omega)$
using the fast Fourier transform (\texttt{scipy.fft}).
By standardizing all trajectories prior to Fourier transforming them, we have $\int_{-\infty}^\infty S(\omega) = 1$ due to the Plancherel theorem. This allows us to compare two power spectra, $S(\omega)$ and $P(\omega)$, with the Hellinger distance
\begin{align}
    H(S(\omega),P(\omega))= \sqrt{1-\int_{-\infty}^{\infty}\sqrt{S(\omega)P(\omega)}\ d\omega} \ \ \in \ [0,1].
\end{align}
%For better comparability we standardize all trajectories prior to Fourier transforming them; the information lost is only geometrical and thus covered by our state-space measure.
To reduce the influence of noise we apply Gaussian kernel smoothing. The Hellinger distances between observed and generated spectra for all dimensions are then averaged to give the reported overall distance $D_H$.
\newpage
\subsection{Further empirical evaluations}
\label{sec-supp-emp}
%\newpage
\subsubsection{Reconstruction: Rössler System}
\label{sec:roessler}
%\newpage
%The quality of the reconstructed dynamical system strongly depends on the learning interval $\tau$. Fig. \ref{fig:roessler-reconstruction} shows examples of the Rössler system reconstructed by an LSTM trained with different learning intervals.
\begin{figure}[!h]  %scale=0.37
\centering    
\subfigure{\label{fig:a:1}\includegraphics[width=0.4\linewidth]{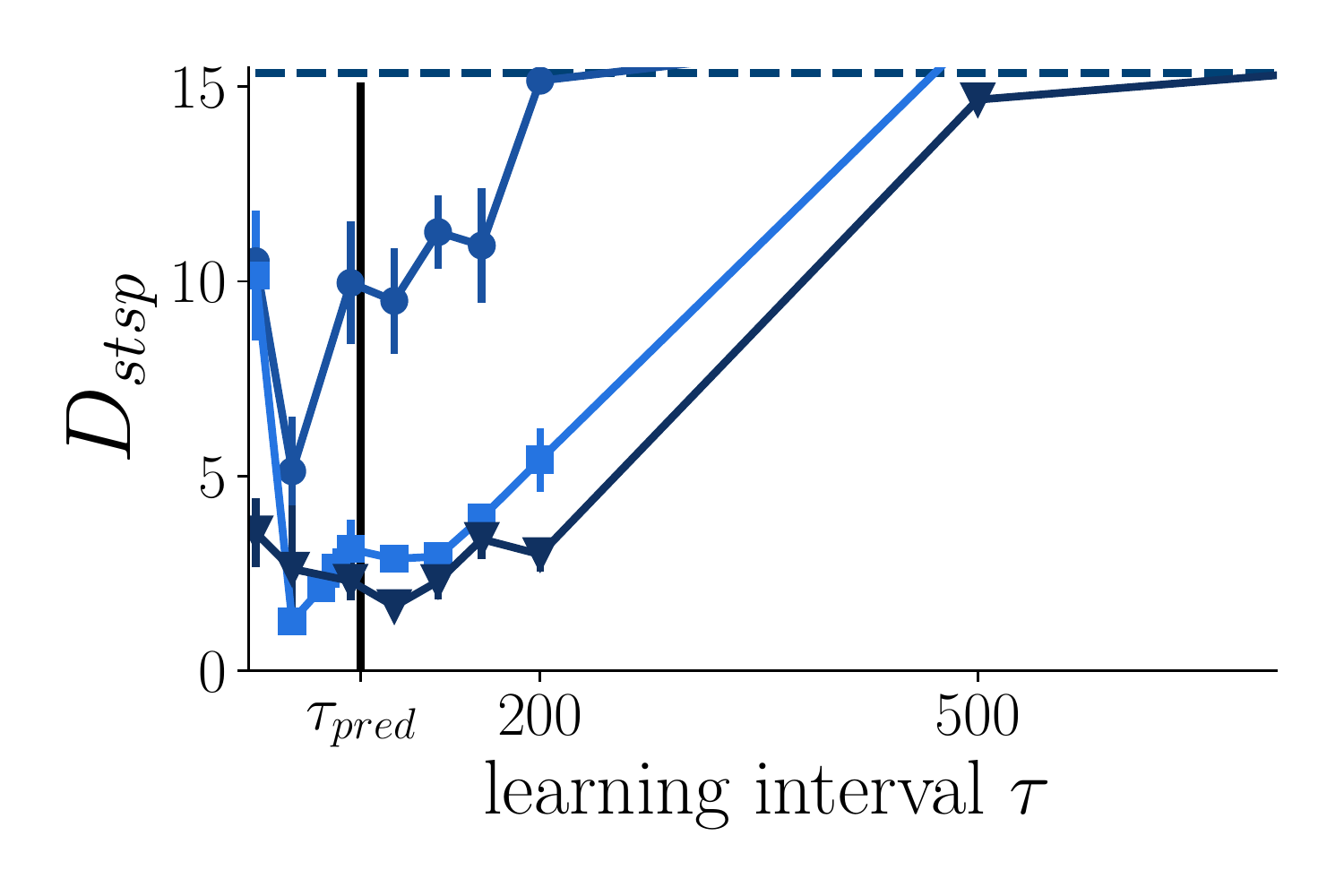}}
\hspace{0.5cm}
\subfigure{\label{fig:b:1}\includegraphics[width=0.4\linewidth]{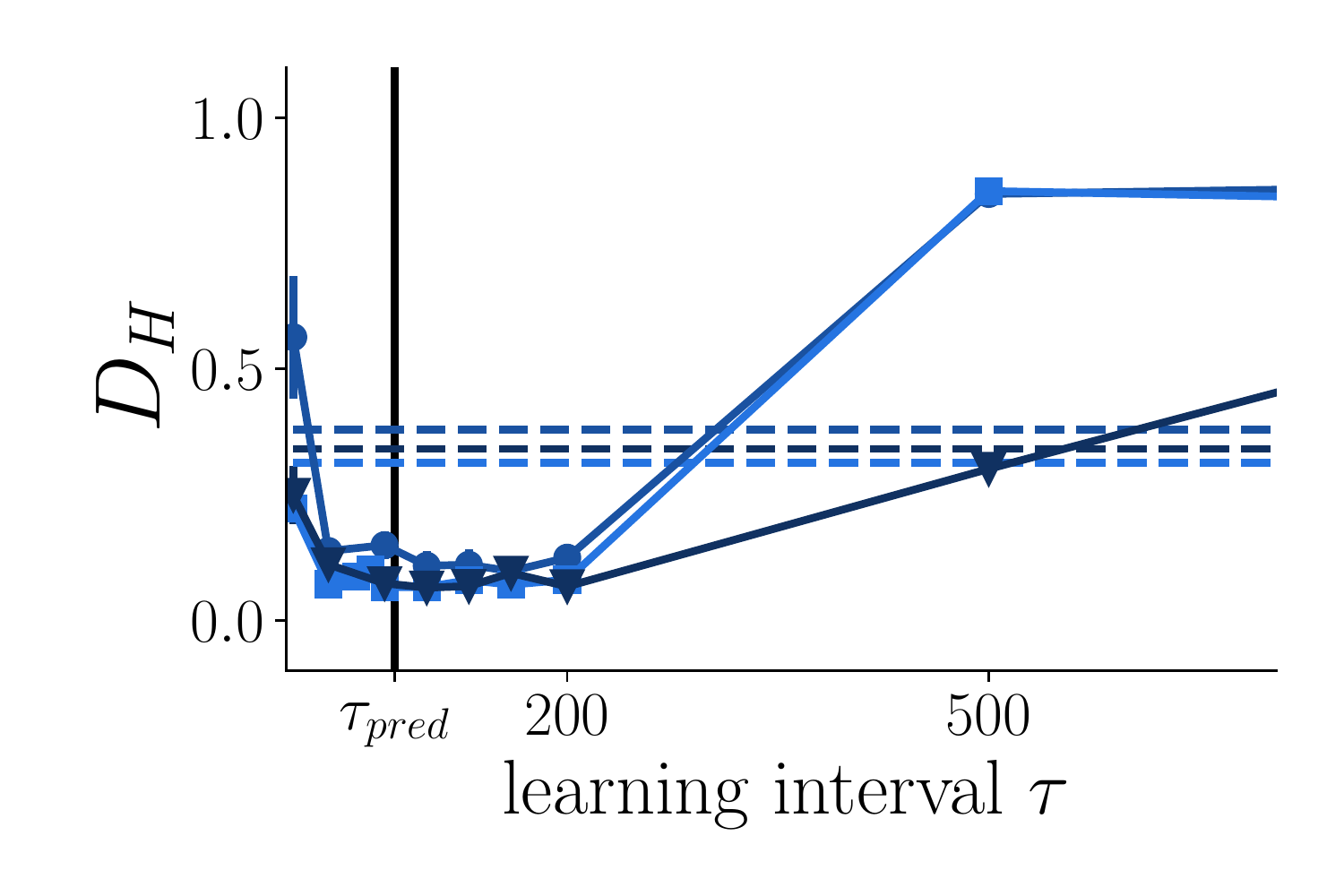}}
\vspace{-0.3cm}
\caption{\small Overlap in attractor geometry ($D_{stsp}$, lower = better) and dimension-wise comparison of power-spectra  ($D_H$, lower = better) against learning interval $\tau$ for the Rössler attractor. Continuous lines = %show performance for 
sparsely forced BPTT. %For both systems, Lorenz and Rössler, a optimal region coinciding with the prediction time is observed.  
Dashed lines = %indicate performance for training with 
classical BPTT with gradient clipping. Prediction time indicated vertically in black.}\label{fig:tau-sweep-roessler}
\end{figure}
%-----
\normalsize
%------
\begin{figure}[!h] 
\centering    
\subfigure{\label{fig:a:1}\includegraphics[scale=0.06]{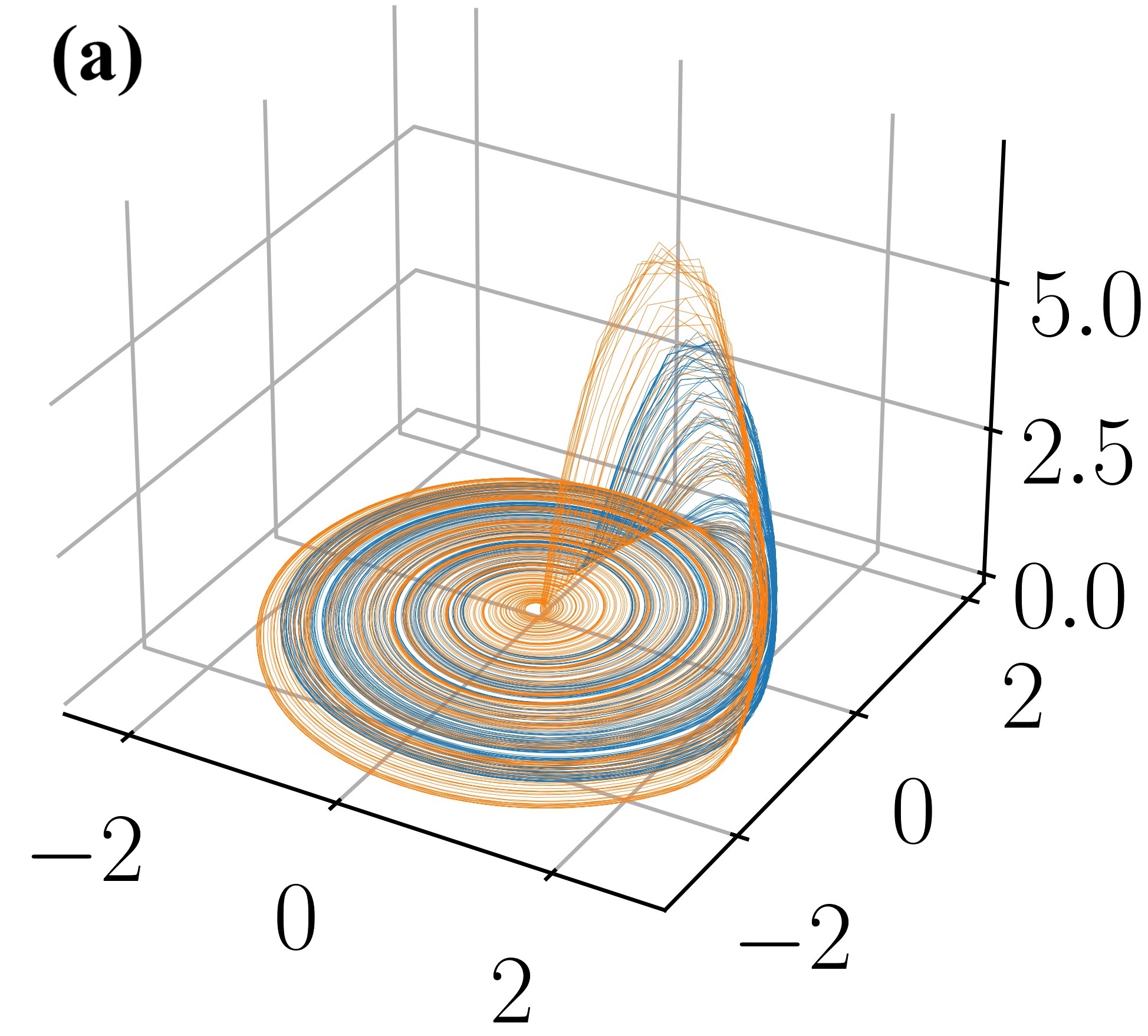}}
\hspace{0.1cm}
\subfigure{\includegraphics[scale=0.06]{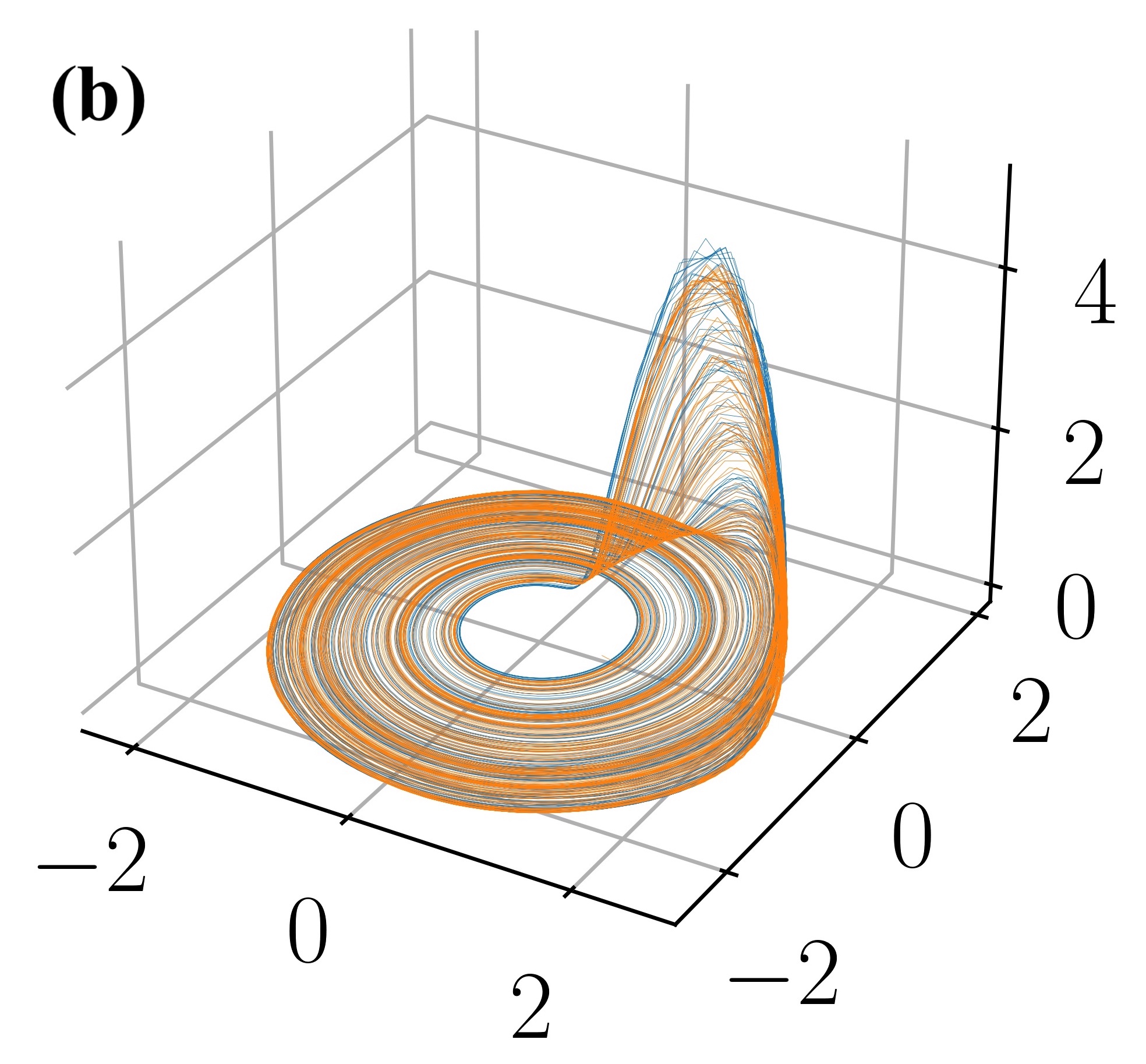}}
\hspace{0.1cm}
\subfigure{\includegraphics[scale=0.06]{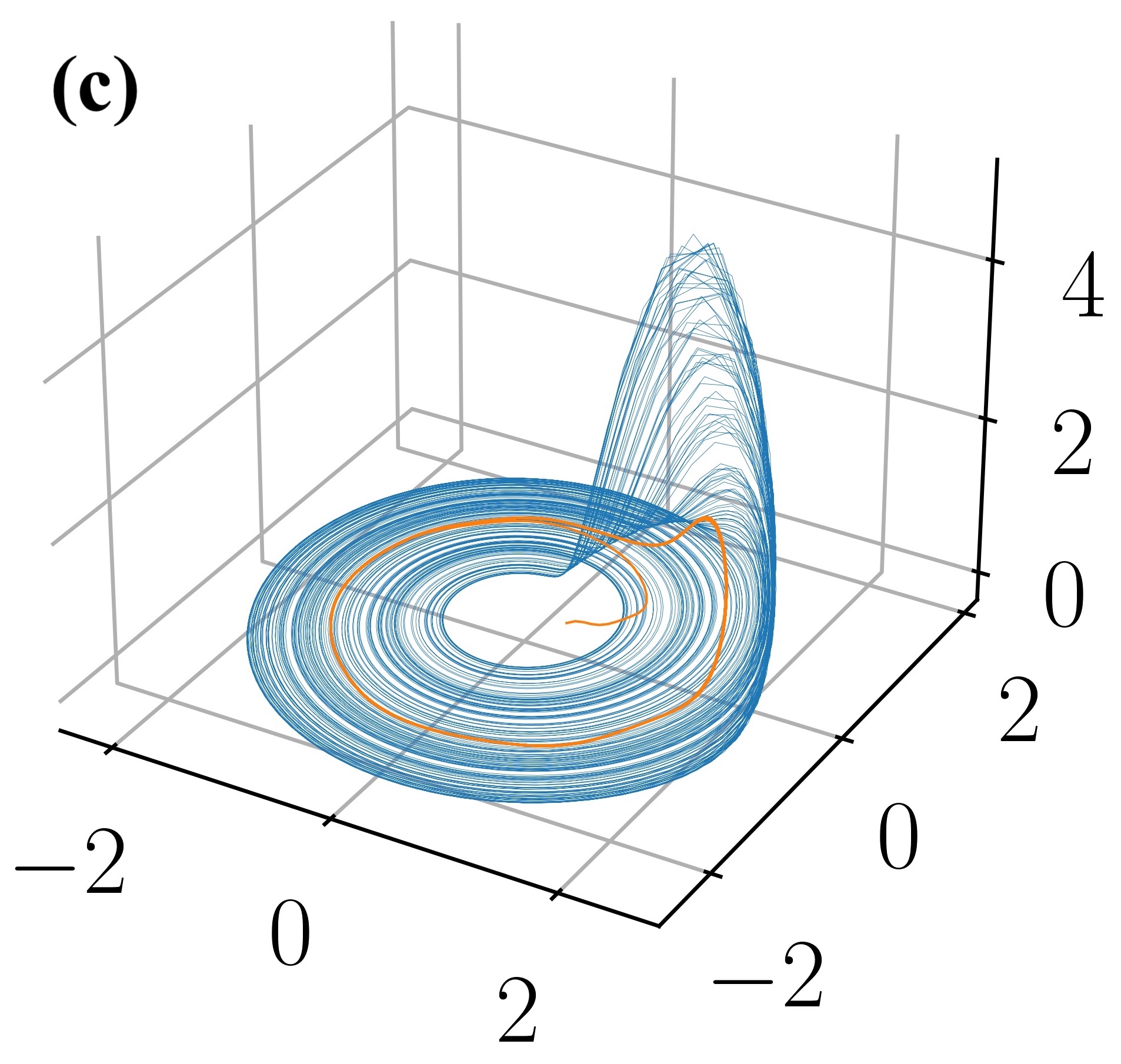}}
%[width=68mm]
\caption{\small The Rössler attractor (blue) and  reconstruction by a LSTM (orange) trained with a learning interval (a) chosen too small ($\tau = 5$), (b) chosen optimally ($\tau = 30$), and (c) chosen too large ($\tau=200$). 
%Note that for too small $\tau$ a chaotic attractor but with wrong geometry is produced, while for too large $\tau$ merely a cycle is generated.
}\label{fig:roessler-reconstruction}
\normalsize
\end{figure}
\subsubsection{Reconstruction: High-dimensional Mackey-Glass system}
\label{sec:MackeyGlass}
\begin{figure}[!h]   %scale=0.37
\centering    
\subfigure{\label{fig:a:1}\includegraphics[width=0.4\linewidth]{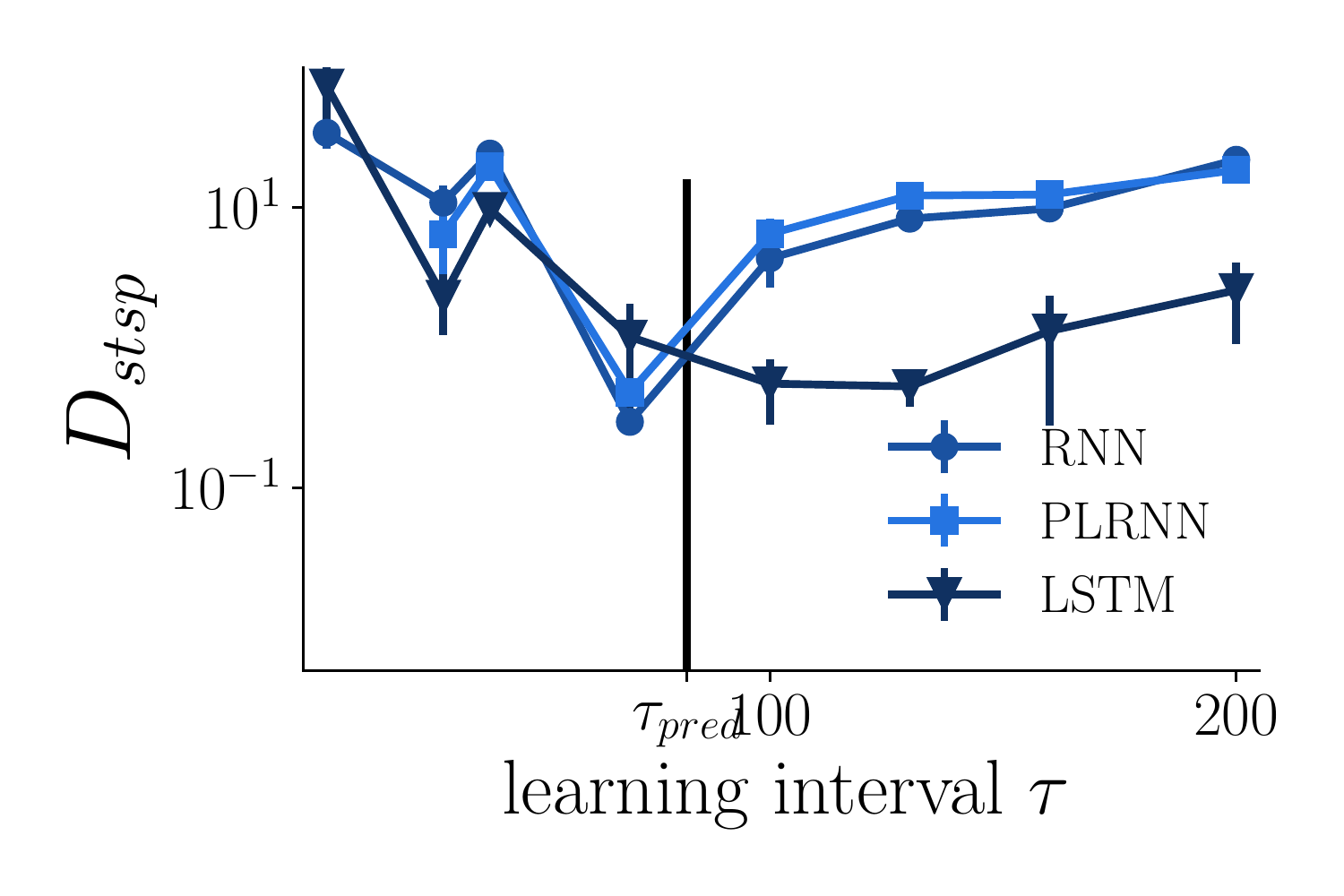}}
\hspace{0.5cm}
\subfigure{\label{fig:b:1}\includegraphics[width=0.4\linewidth]{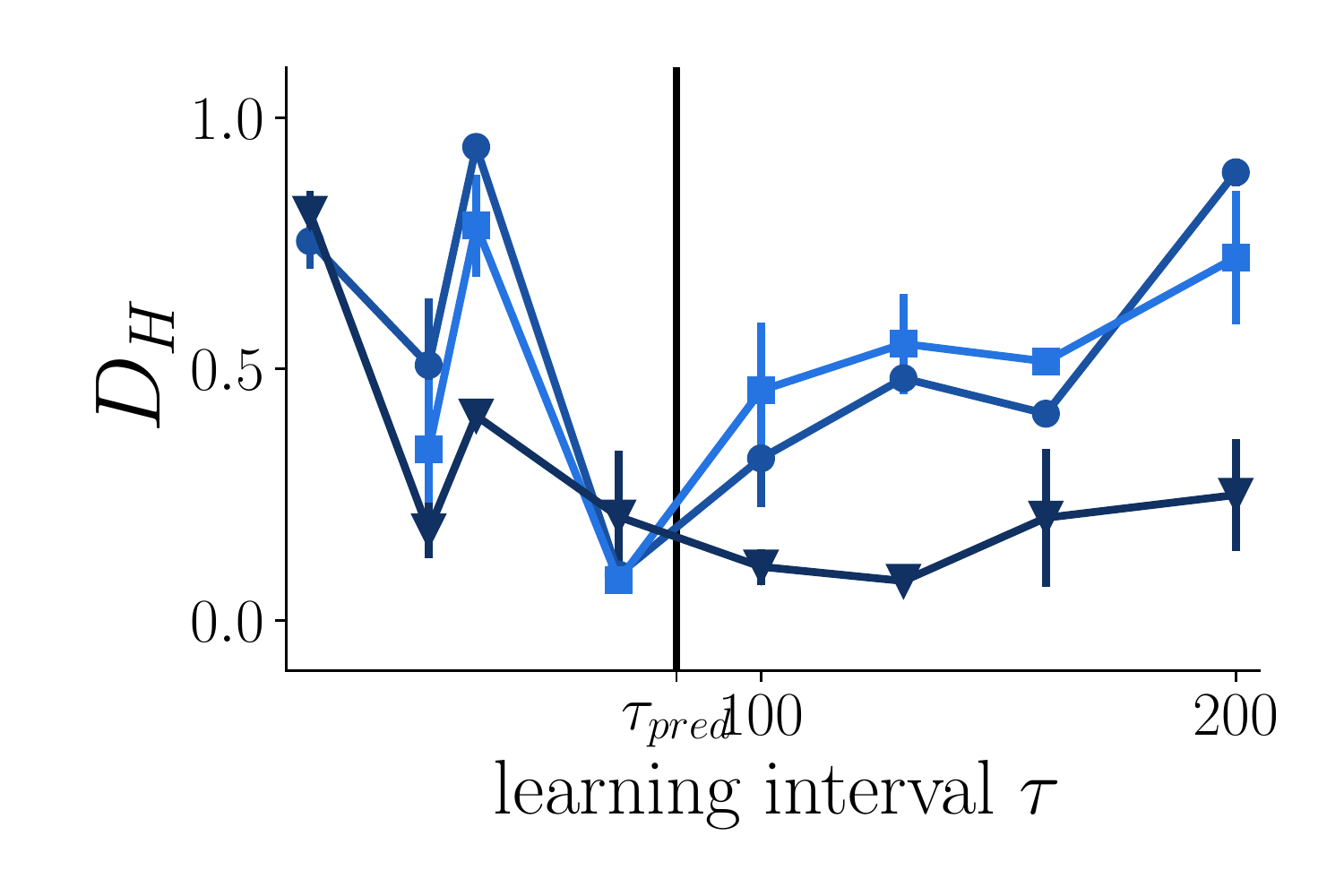}}
\vspace{-0.3cm}
\caption{\small Overlap in attractor geometry ($D_{stsp}$, lower = better) and dimension-wise comparison of power-spectra  ($D_H$, lower = better) against learning interval $\tau$ for the 10d Mackey-Glass system. Continuous lines = %show performance for 
sparsely forced BPTT. %For both systems, Lorenz and Rössler, a optimal region coinciding with the prediction time is observed.  
%Dashed lines = %indicate performance for training with 
%classical BPTT with gradient clipping. 
Prediction time indicated vertically in black.}\label{fig:tau-sweep-MackeyGlass}
%------
\end{figure}
%------------------
\normalsize
%------------------
%\pagebreak
\subsubsection{Reconstruction: Partially observed Lorenz System}
\label{sec:partObs}
For this evaluation we trained models only on the variables $\{y,z\}$ of the Lorenz system, eqn. \ref{eq-Lorenz}. In order to compute the attractor overlap ($D_{stsp}$) in the true state space, however, after training the observation matrix $\mB$ was recomputed by linearly regressing the first 10 latent states onto the first 10 observations from all three Lorenz variables in eqn. \ref{eq-Lorenz}.
\begin{figure} [H]  %scale=0.37
\centering    
\subfigure{\label{fig:a:1}\includegraphics[width=0.4\linewidth]{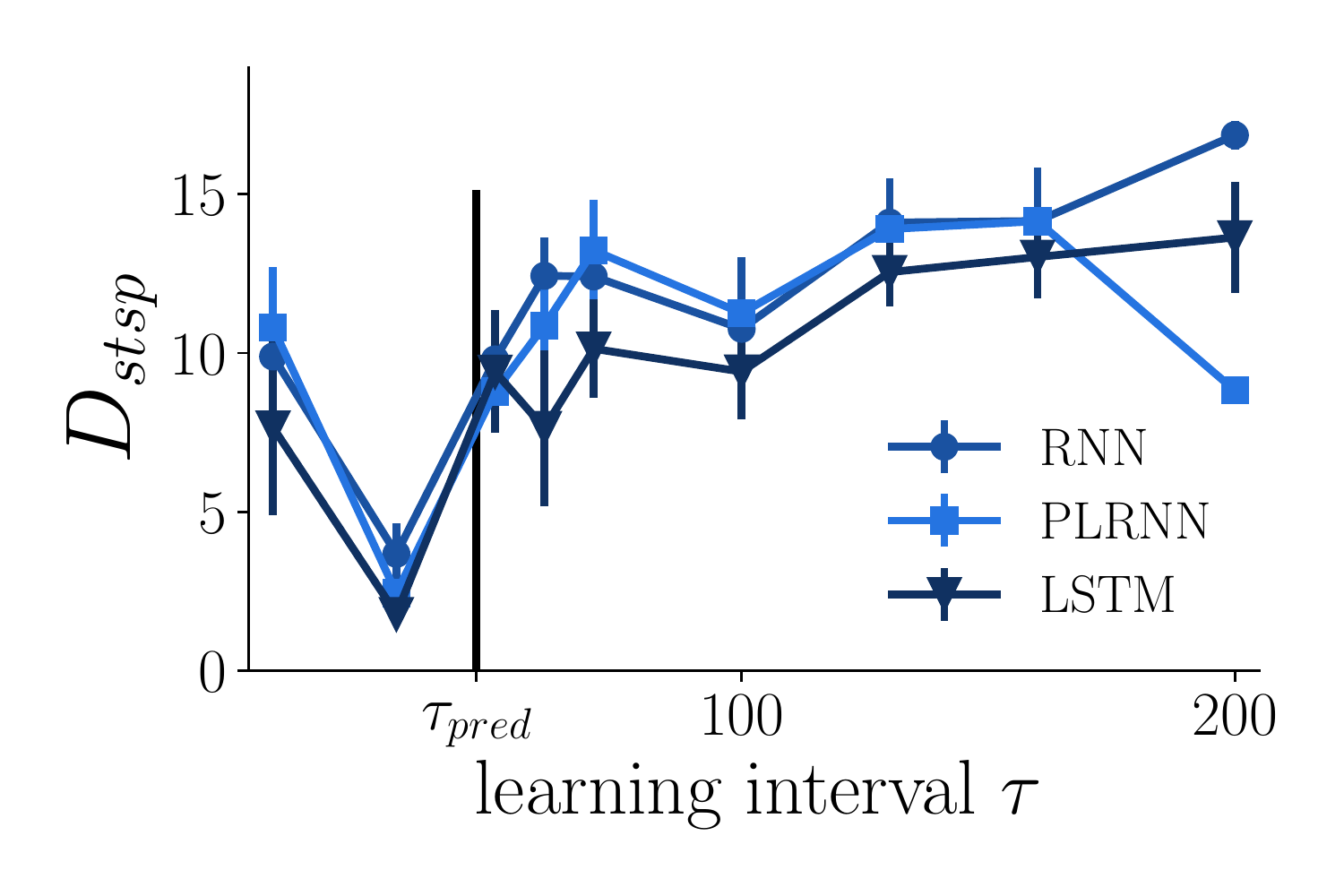}}
\hspace{0.5cm}
\subfigure{\label{fig:b:1}\includegraphics[width=0.4\linewidth]{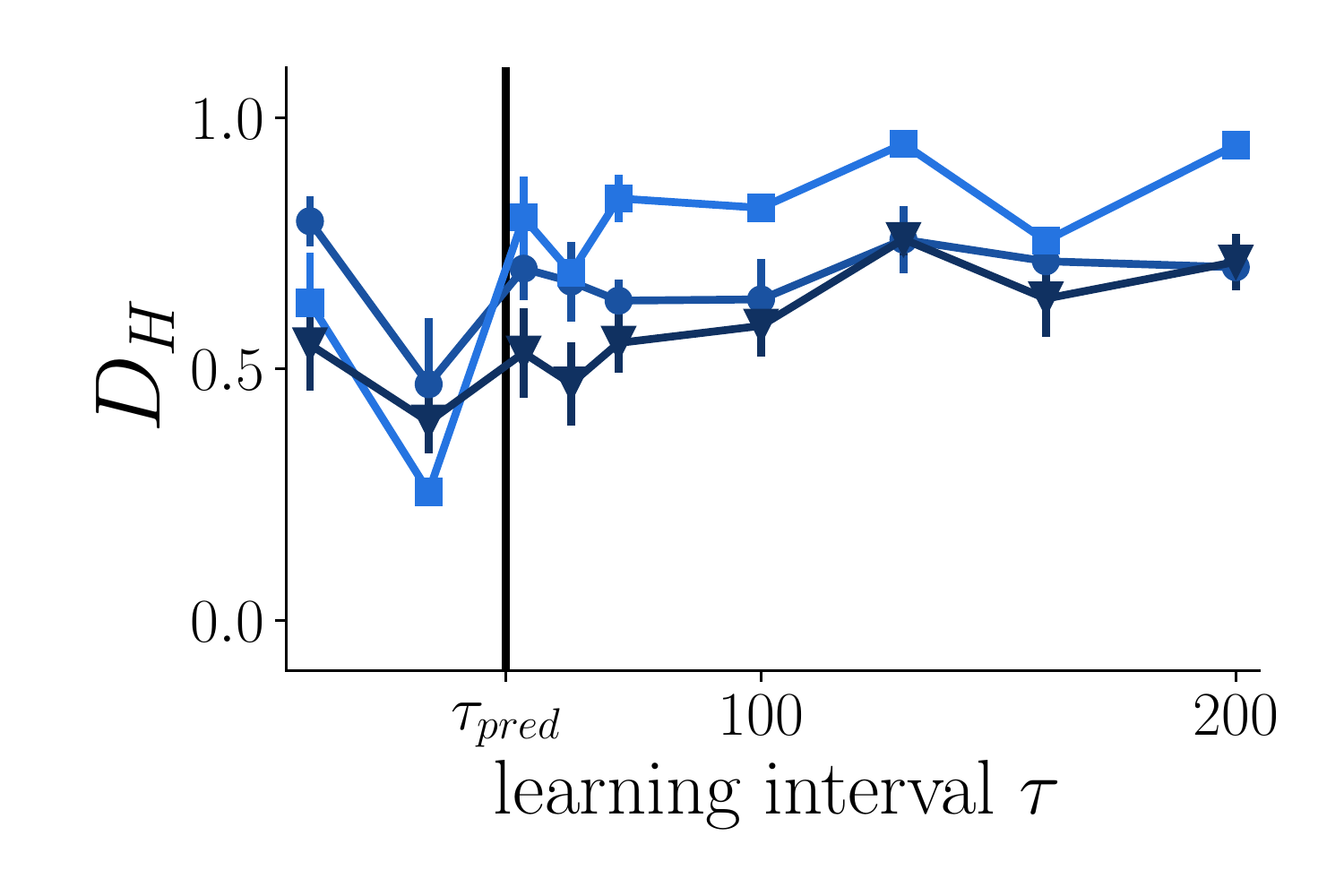}}
%\vspace{-0.3cm}
\caption{\small Overlap in attractor geometry ($D_{stsp}$, lower = better) and dimension-wise comparison of power-spectra  ($D_H$, lower = better) against learning interval $\tau$ for the partially observed Lorenz system. Continuous lines = %show performance for 
sparsely forced BPTT. %For both systems, Lorenz and Rössler, a optimal region coinciding with the prediction time is observed.  
%Dashed lines = %indicate performance for training with 
%classical BPTT with gradient clipping. 
Prediction time indicated vertically in black.}\label{fig:tau-sweep-partObs}
\end{figure}
%-----
\normalsize
%------
%\subsubsection{Other initialization procedures: Standard batching with zero resetting}
%\vspace{.1cm}
%--------------------------
\subsubsection{Other initialization procedures: 
Truncated BPTT with zero resetting or forward-iterated states}
\label{sec:windowing}
A common procedure in training RNNs is partitioning the time series into chunks of length $\tau$ (as we did based on the Lyapunov spectrum), but then simply resetting the hidden states %\textcolor{blue}{
$\vz_{1(k)}$ at the beginning of each chunk (window) $k$ to $\mathbf{0}$, or forward-iterating them from the previous chunk $k-1$, i.e. $\vz_{1(k)}=F_{\boldsymbol\theta}(\vz_{\tau(k-1)})$. Formally this would mean that we do not force the trajectory back on track as in our approach, but instead may %\textcolor{blue}{
either kick it off track (zero resetting) or just let it freely evolve whilst still truncating the gradients (forward-iterating). To illustrate this, here we trained an LSTM on chunks (windows) with a length given by the optimal $\tau$ ($\tau_{opt}=30$ for the Lorenz system), but then initialized the hidden states to %\textcolor{blue}{
either $0$ or to the forward-iterated last state at the beginning of each window. The performance obtained %\textcolor{blue}{
with zero-resetting is indicated by the %green
dashed line in Fig. \ref{fig:windowing}a below, %\textcolor{blue}{
while the performance with forward-iterated states is shown in Fig. \ref{fig:windowing}c. As another control, we also checked dependence on window length (without forcing) in Fig. \ref{fig:windowing}b.
\begin{figure}[H]  %scale=0.37
\centering    
\subfigure{\label{fig:a:1}\includegraphics[width=0.4\linewidth]{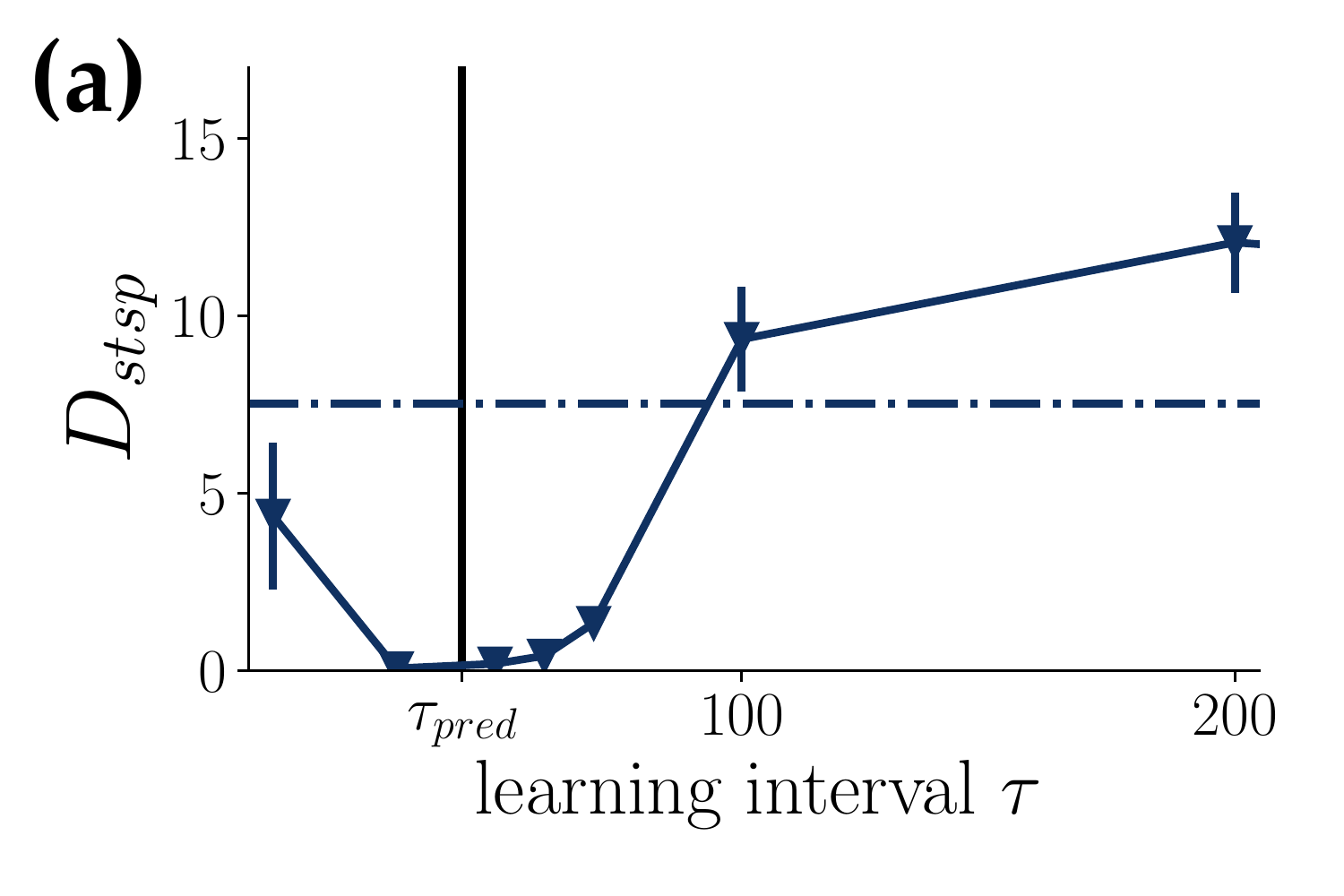}}
\hspace{0.5cm}
\subfigure{\label{fig:b:1}\includegraphics[width=0.4\linewidth]{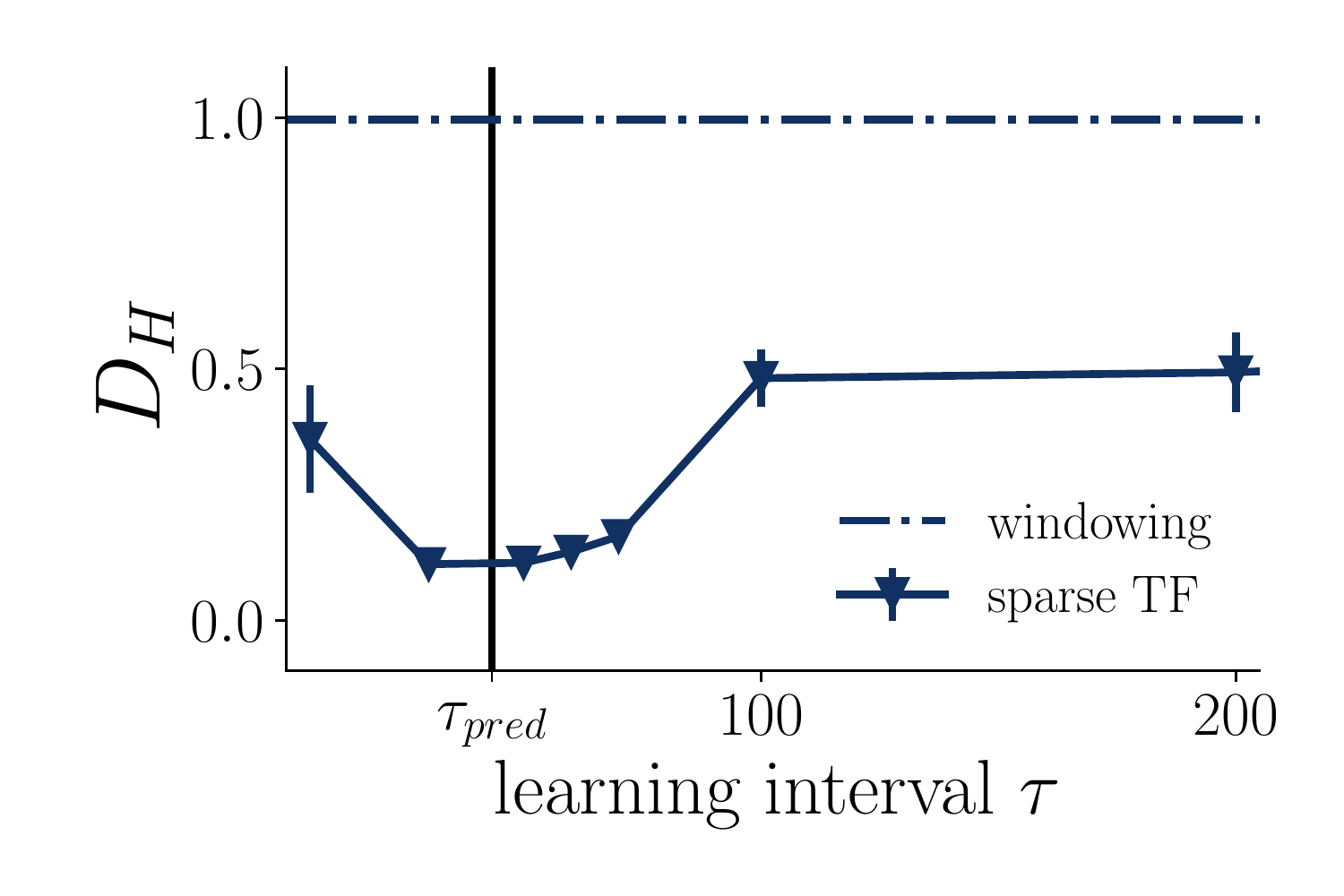}}
\subfigure{\label{fig:a:1}\includegraphics[width=0.4\linewidth]{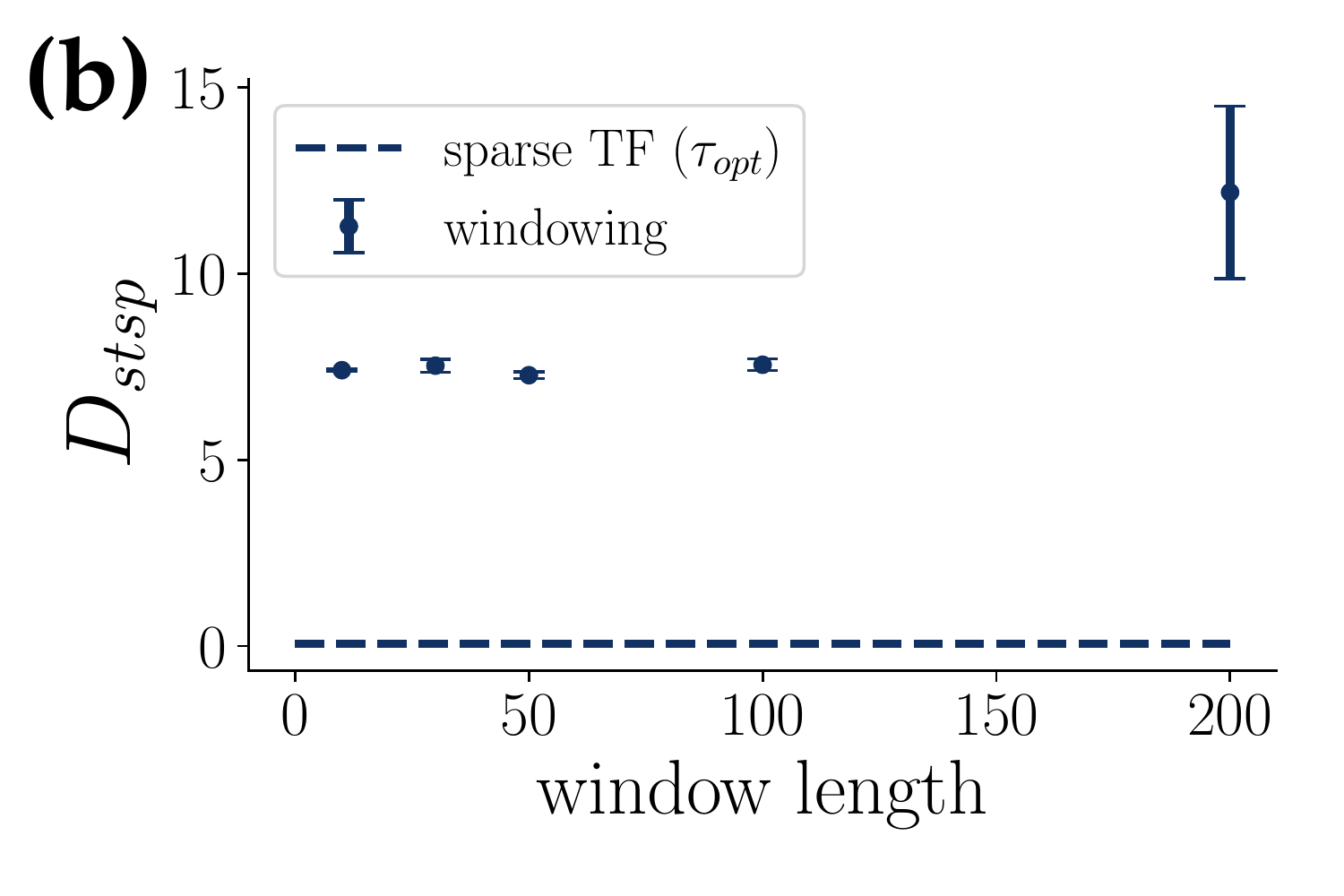}}
\hspace{0.5cm}
\subfigure{\label{fig:a:1}\includegraphics[width=0.4\linewidth]{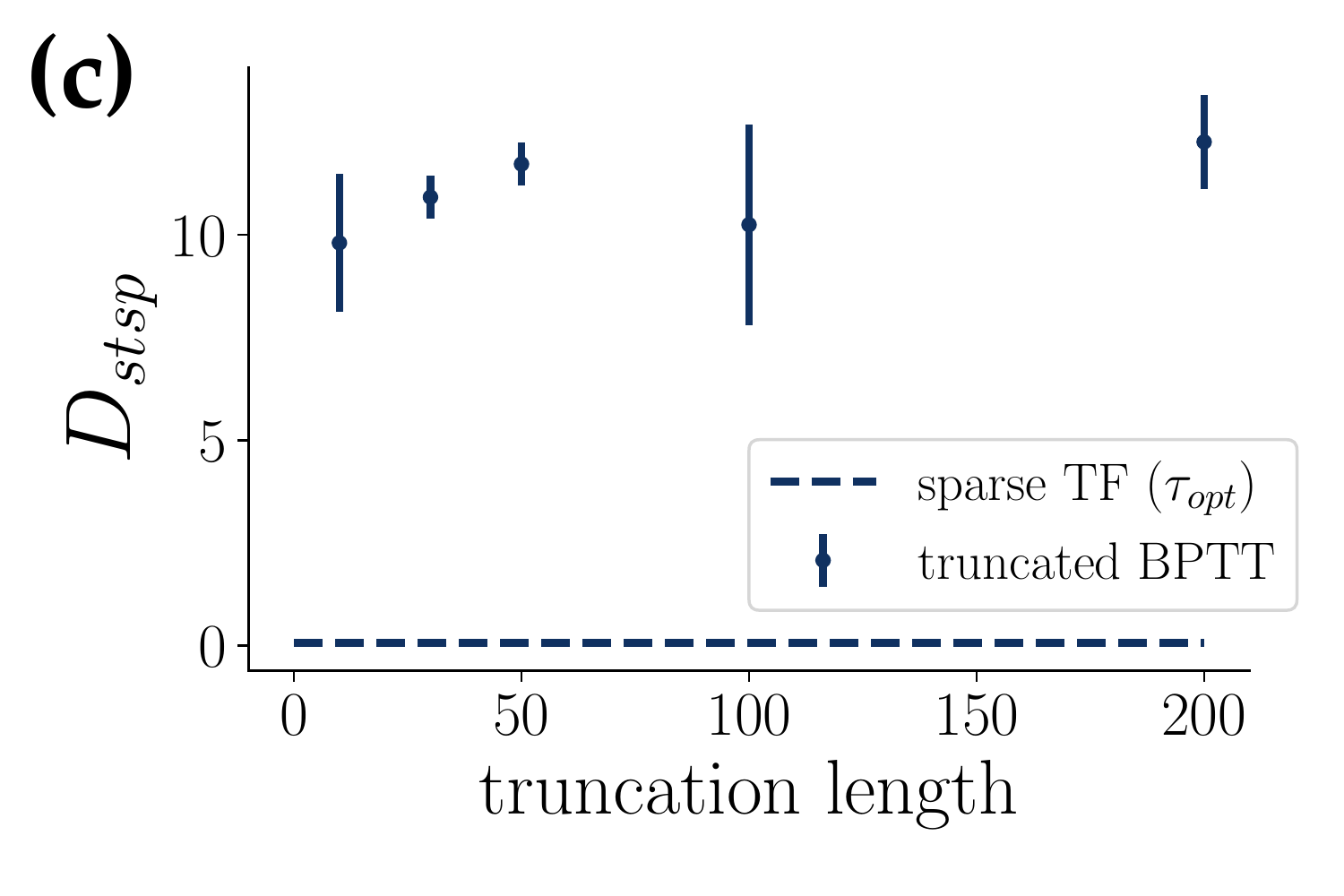}}
\vspace{-0.3cm}
\caption{\small (a) Overlap in attractor geometry ($D_{stsp}$, lower = better) and dimension-wise comparison of power-spectra  ($D_H$, lower = better) against learning interval $\tau$ for the Lorenz system. Continuous lines = %show performance for 
sparsely forced BPTT. %For both systems, Lorenz and Rössler, a optimal region coinciding with the prediction time is observed.  
Dashed-dotted lines = %indicate performance for training with 
windowing without forcing (choosing windows according to the optimal prediction time, but resetting hidden states to zero rather than its TF control value). Prediction time indicated vertically in black. %\textcolor{blue}{
(b) Dependence of geometrical reconstruction quality on window length. Without forcing, the window length hardly has any bearing on reconstruction quality. (c) Same as (b) but with initial states of each window $k$ forward-iterated from the previous window's state, $\vz_{1(k)}=F_{\boldsymbol\theta}(\vz_{\tau(k-1)})$, instead of zero resetting.}\label{fig:windowing}
%-----
\normalsize
%------
\end{figure}
%\pagebreak
%------------------------
\subsubsection{Electroencephalogram (EEG) data}
\label{Sec:EEG-analysis}
We used EEG data recorded by \citet{schalk_bci2000} and provided on PhysioNet \citep{PhysioNet}, from which we took the baseline recording of the first patient for our analysis. Preprocessing was performed as outlined above for the temperature time-series, i.e. we applied nonlinear noise-reduction (see Fig.\ref{fig:emp-data-EEG} (a)) and Gaussian kernel smoothing ($\sigma = 5$). Fig. \ref{fig:emp-data-EEG} (b) indicates a fractional dimension $D_{eff}=2.5$ for the de-noised and smoothed times series. We created a time delay embedding with an embedding dimension of $m=10$ and a delay time of $\Delta t = 40$. The maximal Lyapunov exponent for this time series was determined to be $\lambda_{max} = 0.017$, see Fig. \ref{fig:tau-sweep-emp-data-EEG} (a). With this, we obtain a predictability time $\tau_{pred} = 40.77$.
%$\tau_{pred} = \frac{\ln(2)}{3\, \lambda_{max}}= 33.01$.
%%\\
%Full reconstruction of the dynamics from complex and noisy EEG signals is a very ambitious. %(and so far to our knowledge unattained) goal. 
%Here we therefore focused on the system's short-term behavior and combined multiple shorter trajectory bits (of length $T=70$) in the calculation of our measure for geometrical agreement in state space.
%-----------------------------
\begin{figure}[H]
\centering    
\subfigure[]{\label{fig:a:1}\includegraphics[width=44mm]{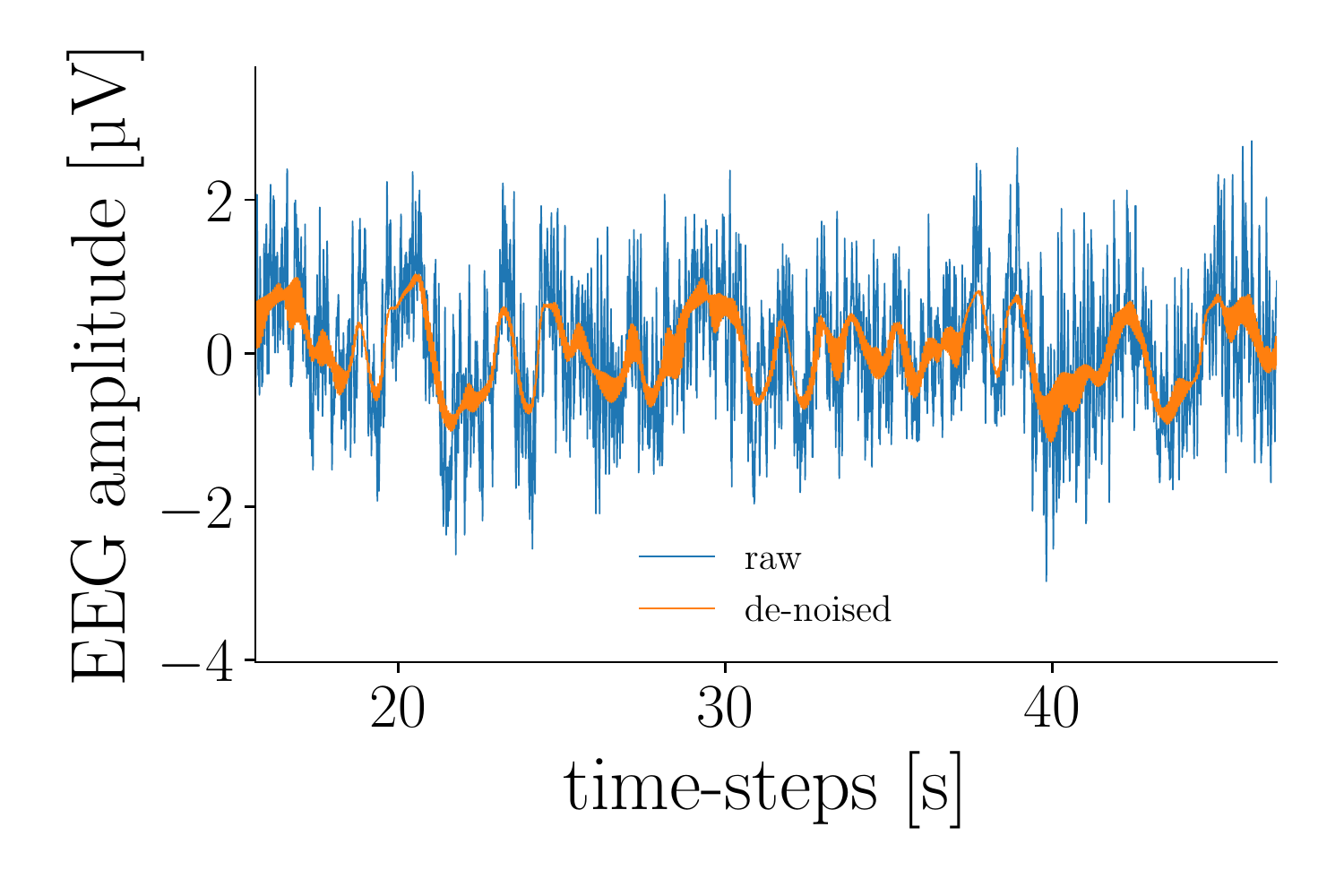}}
\subfigure[]{\label{fig:b:1}\includegraphics[width=44mm]{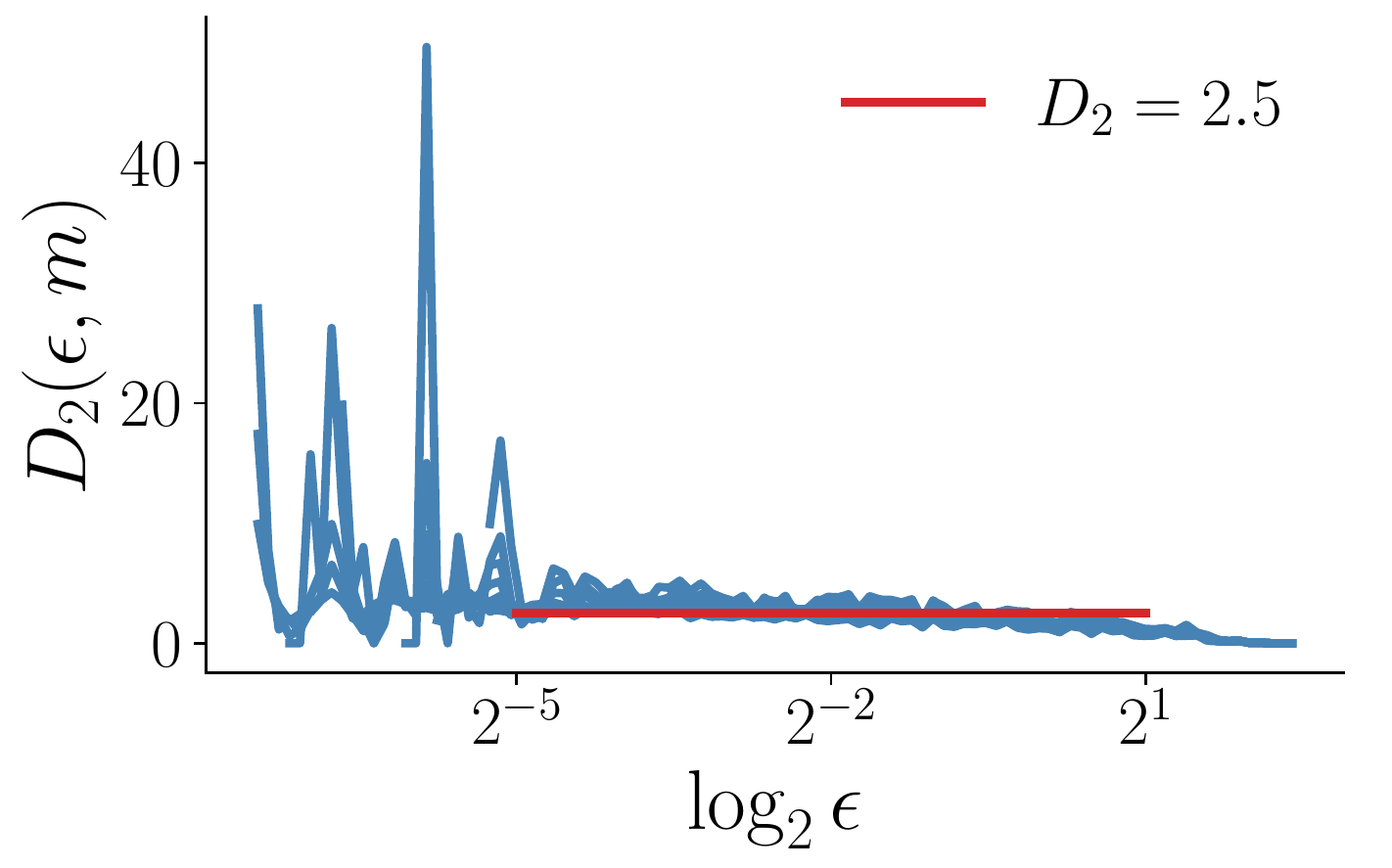}}
\subfigure[]{\label{fig:b:1}\includegraphics[width=38mm]{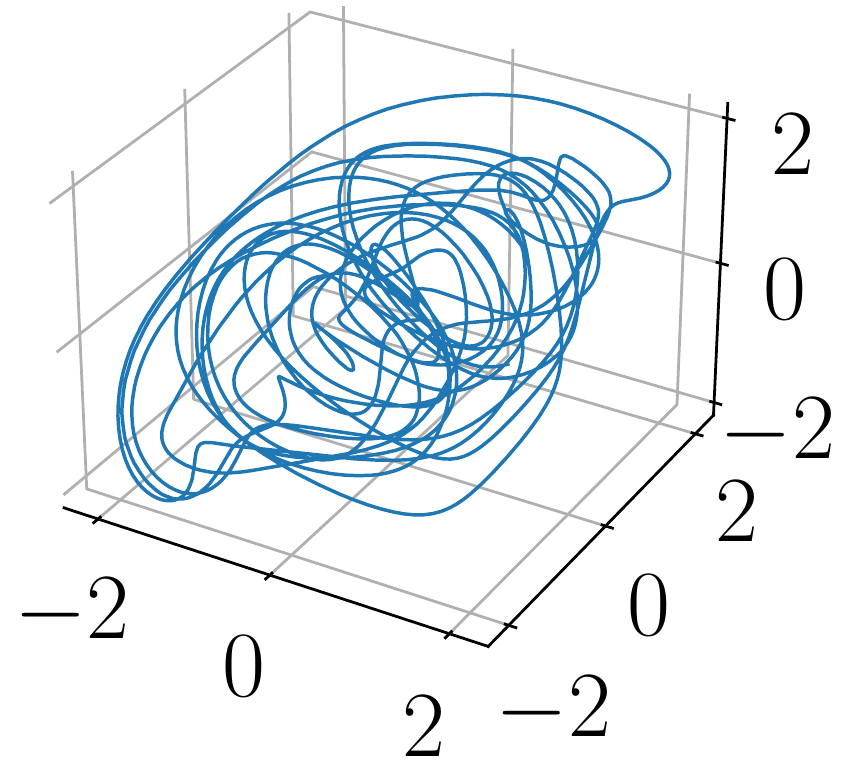}}
%[width=68mm]
\caption{\small(a) Snippet of the original EEG data and de-noised time series. (b) Blue lines show the local slopes of the correlation sums for embedding dimensions $m \in \{5, \dots, 15\}$. The convergence of these estimates in $m$ reveals a fractional dimension indicated by the plateau. (c) First three dimensions of the time-delay embedding series as used for training.}\label{fig:emp-data-EEG}
\normalsize
\end{figure}
%------------------
\begin{figure}[H]
\centering
\subfigure{\label{fig:a:1}\includegraphics[width=0.32\linewidth]{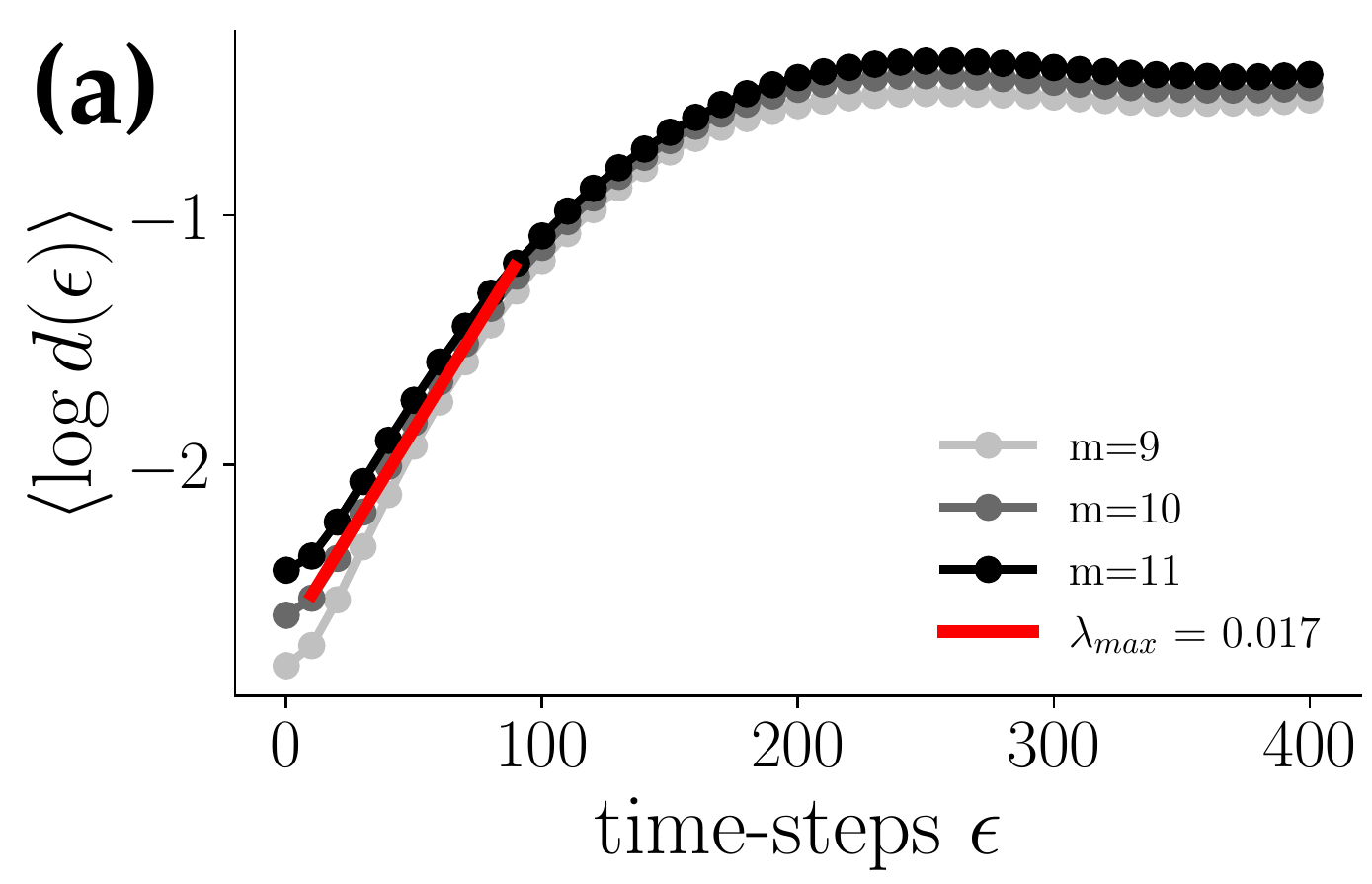}}
\subfigure{\label{fig:b:1}\includegraphics[width=0.32\linewidth]{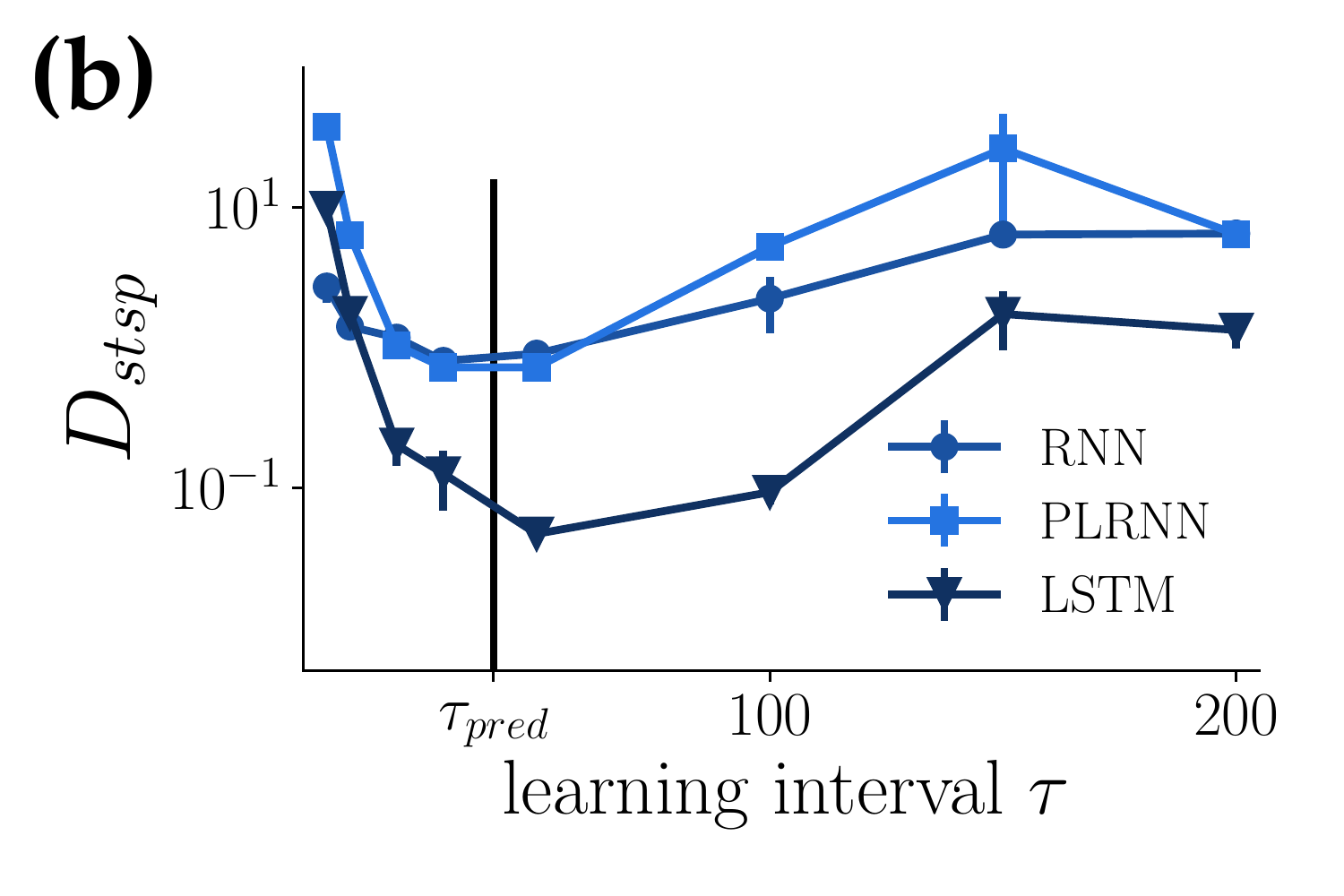}}
\subfigure{\label{fig:a}\includegraphics[width=0.32\linewidth]{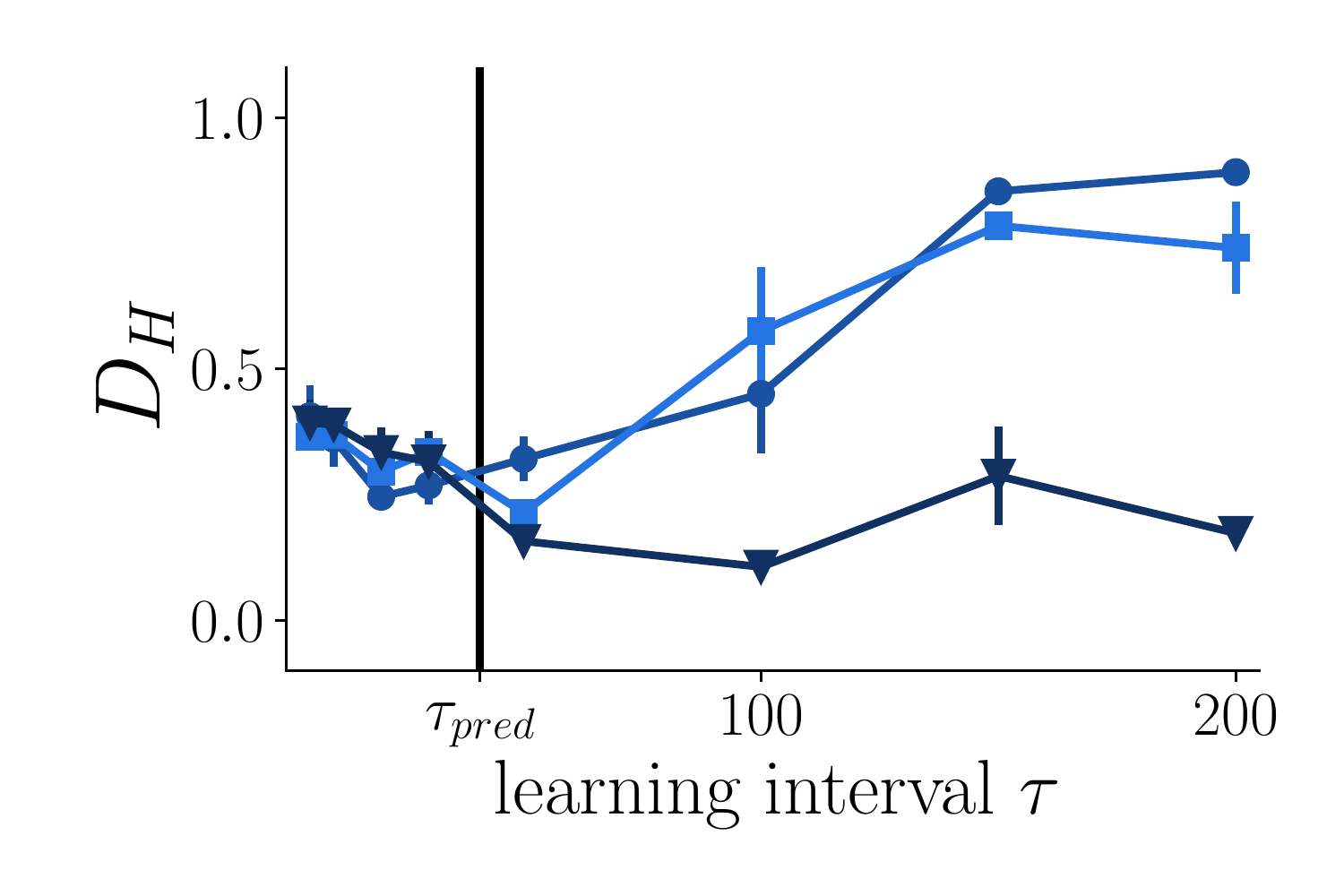}}
%[width=68mm]
\vspace{-0.1cm}
\caption{\small (a) The maximal Lyapunov exponent was determined as the slope of the average log-divergence of nearest neighbors in embedding space ($m=$ embedding dimension). (b) Reconstruction quality assessed by attractor overlap (lower = better) and dimension-wise comparison of power-spectra  ($D_H$, lower = better). Black vertical lines = $\tau_{\text{pred}}$. }\label{fig:tau-sweep-emp-data-EEG}
\end{figure}
%----
\normalsize
%------
% NEW ------------------------
%\textcolor{blue}{\subsubsection{Miscellanea: Hyper-parameter dependence of controls, random forcing intervals, vanilla RNNs with optimal forcing}
%\pagebreak
%---------------
%\textcolor{blue}{
{\subsubsection{ Miscellaneous additional results}}
\begin{figure}[h]  %scale=0.37
\centering    
\subfigure{\label{fig:a:1}\includegraphics[width=0.4\linewidth]{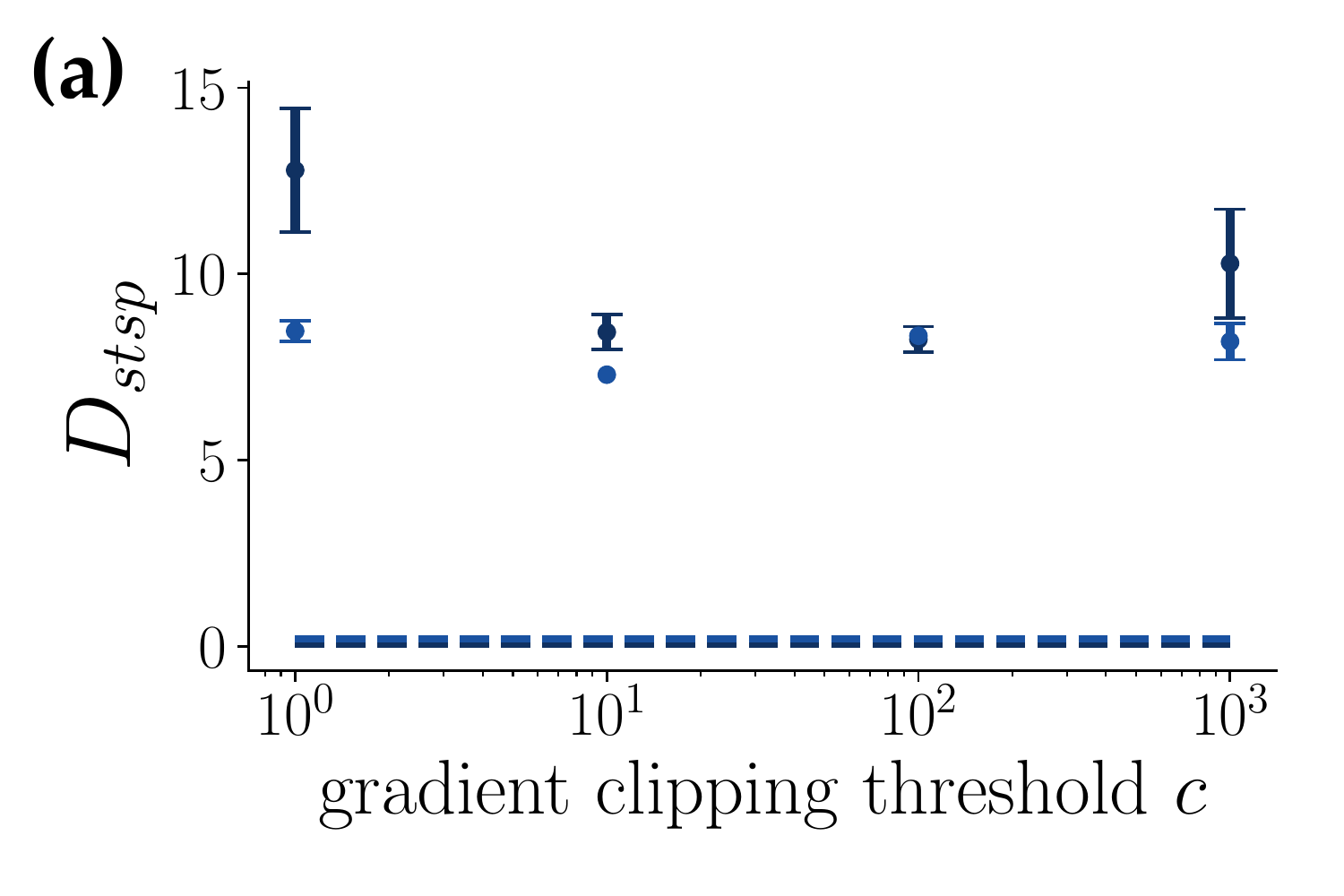}}
\hspace{0.5cm}
\subfigure{\label{fig:b:1}\includegraphics[width=0.4\linewidth]{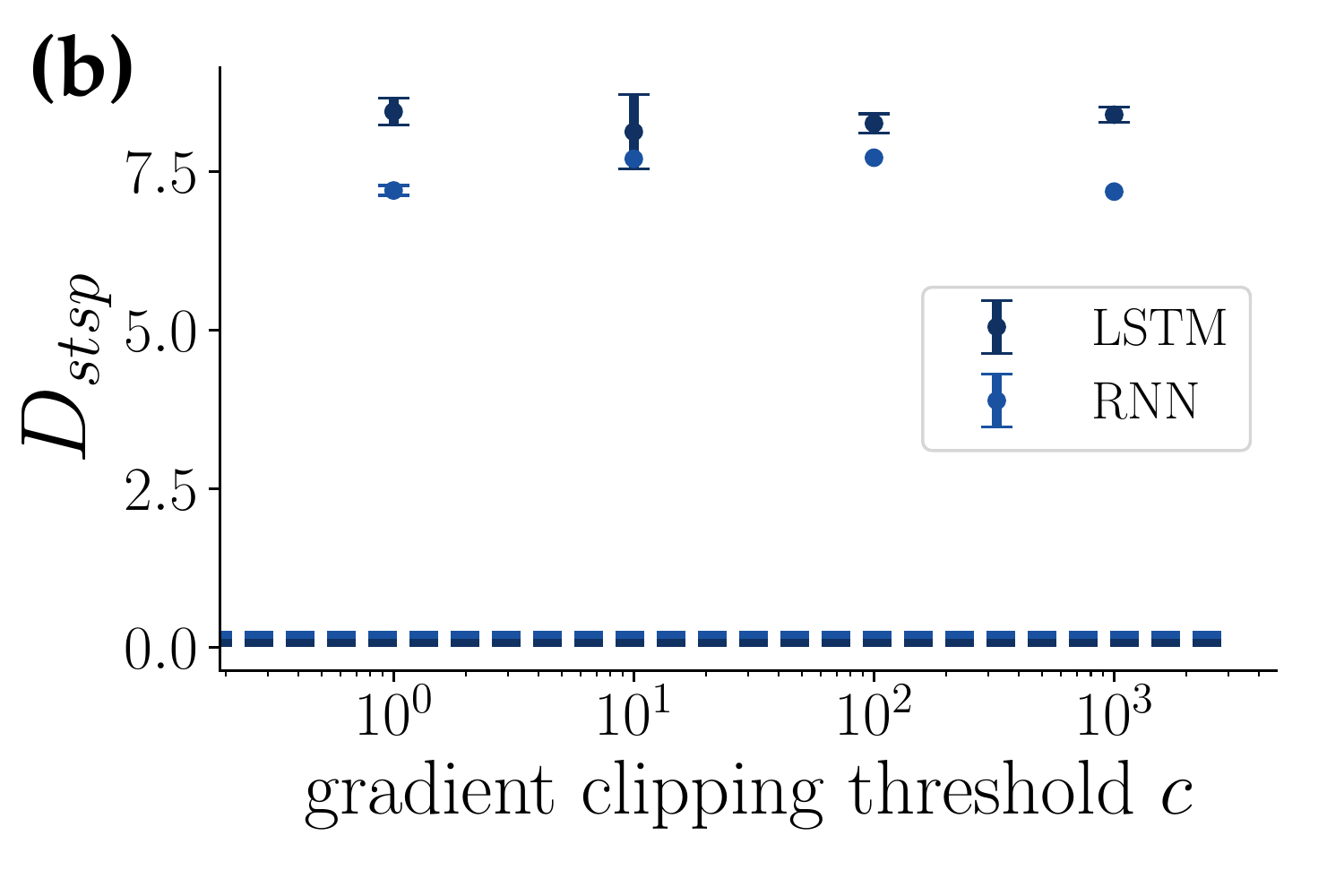}}
\vspace{-0.3cm}
\caption{\small %\textcolor{blue}{
Dependence of geometrical reconstruction quality ($D_{stsp}$) on the Lorenz system for various clipping thresholds %hyper-parameter settings 
in classical BPTT. (a) Gradient clipping by constraining the Euclidean norm to $c$. (b) Gradient clipping by constraining the max (infinity) norm to $c$. For comparison, in both graphs the values obtained for sparse teacher forcing with optimal forcing interval $\tau_{pred}$ are shown as dashed lines. %Roughly, $D_{stsp}>1$ indicates bad reconstructions where the chaotic nature of the underlying system was not correctly captured, highlighting that essentially none of the hyper-parameter settings for classical BPTT yields satisfying solutions.
%}
}\label{fig:gradClipping}
%-----
\normalsize
%------
\end{figure}
\begin{figure}[h] 
\centering    
\subfigure[learning interval $\tau$ too small]{\label{fig:a:1}\includegraphics[width=0.32\linewidth]{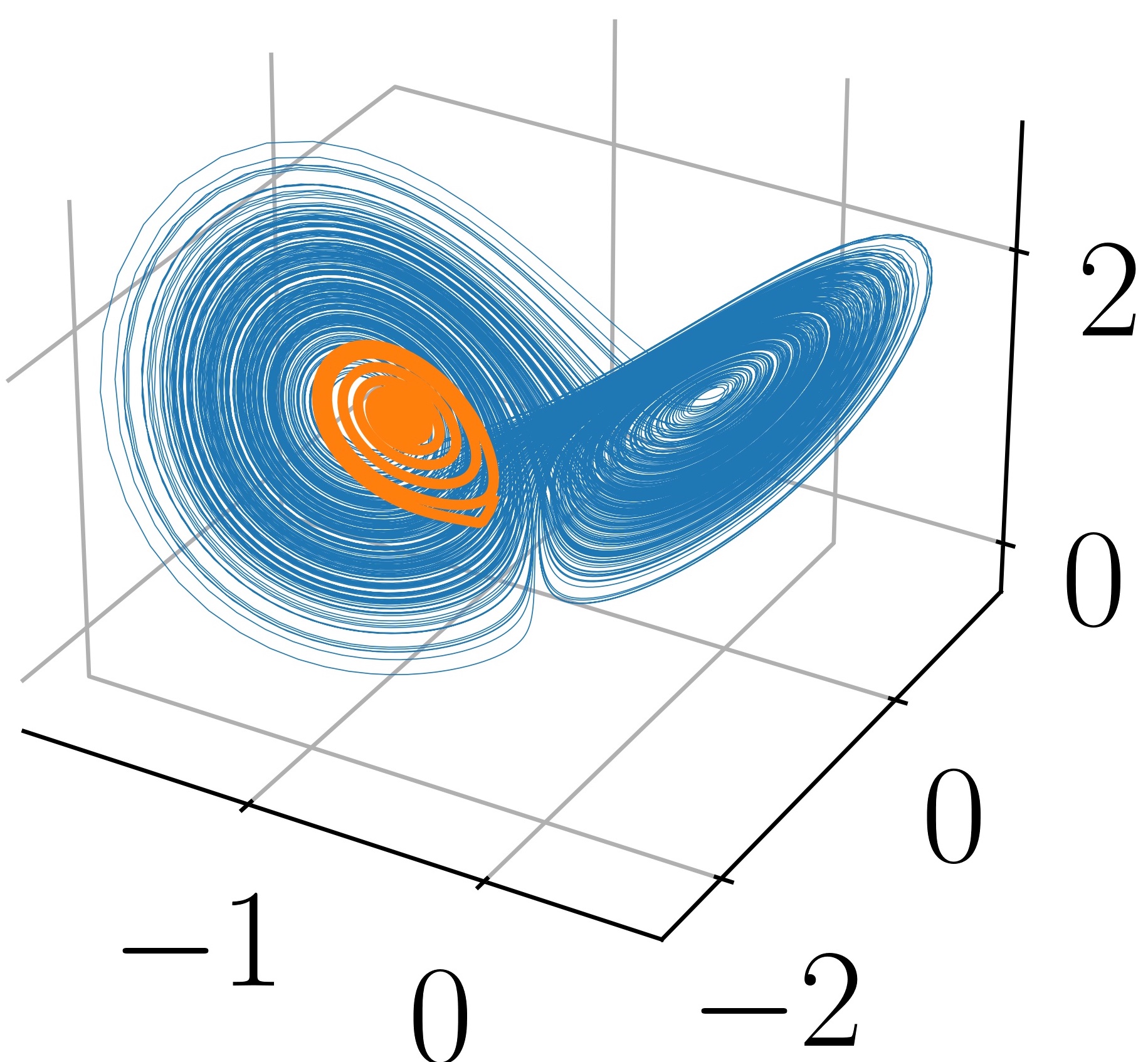}}
\hspace{0.1cm}
\subfigure[learning interval $\tau$ optimal]{\includegraphics[width=0.32\linewidth]{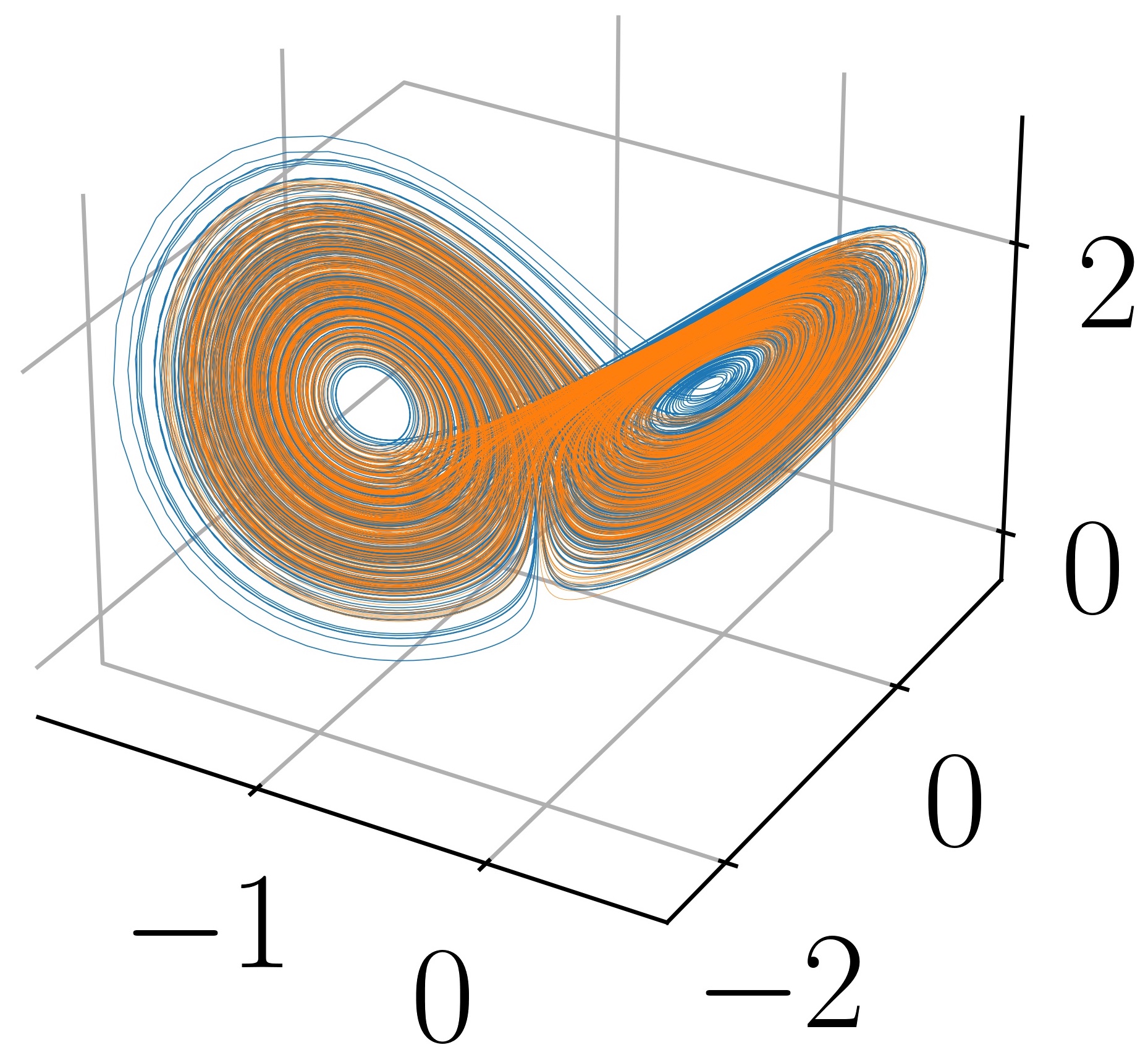}}
\hspace{0.1cm}
\subfigure[learning interval $\tau$ too large]{\includegraphics[width=0.32\linewidth]{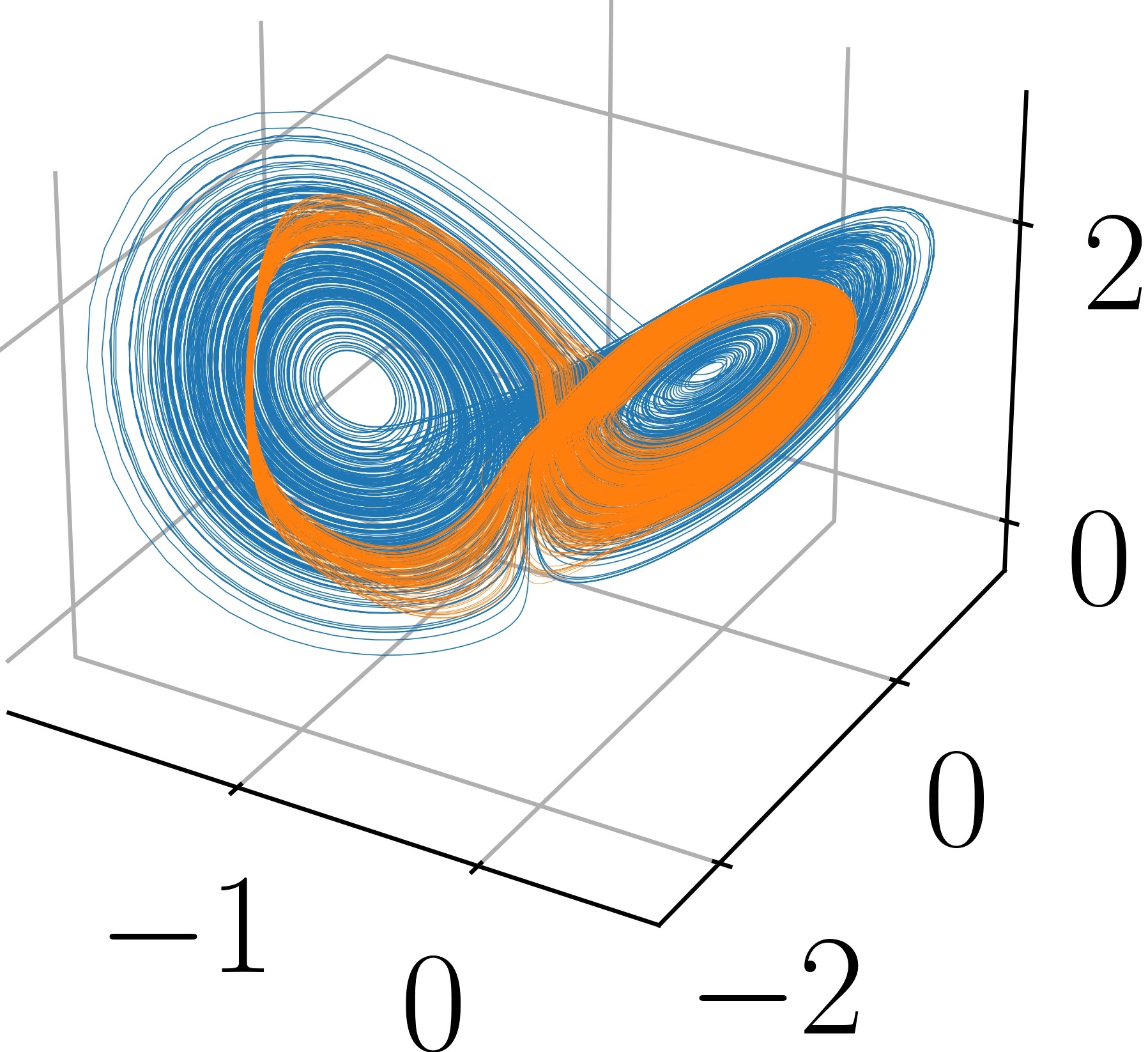}}
%[width=68mm]
\caption{\small
%\textcolor{blue}{
Same as Fig. \ref{fig:lorenz-reconstruction} for vanilla RNNs. Although, as this graph confirms, with sparsely forced BPTT training of vanilla RNNs on chaotic systems becomes feasible, generally they were somewhat harder to train than the other RNN architectures (likely due to their known problems with long-range dependencies).
%}
\label{fig:RNN-reconstruction}}
\normalsize
\end{figure}
\begin{figure}[h]  %scale=0.37
\centering    
\subfigure{\label{fig:b:1}\includegraphics[width=0.6\linewidth]{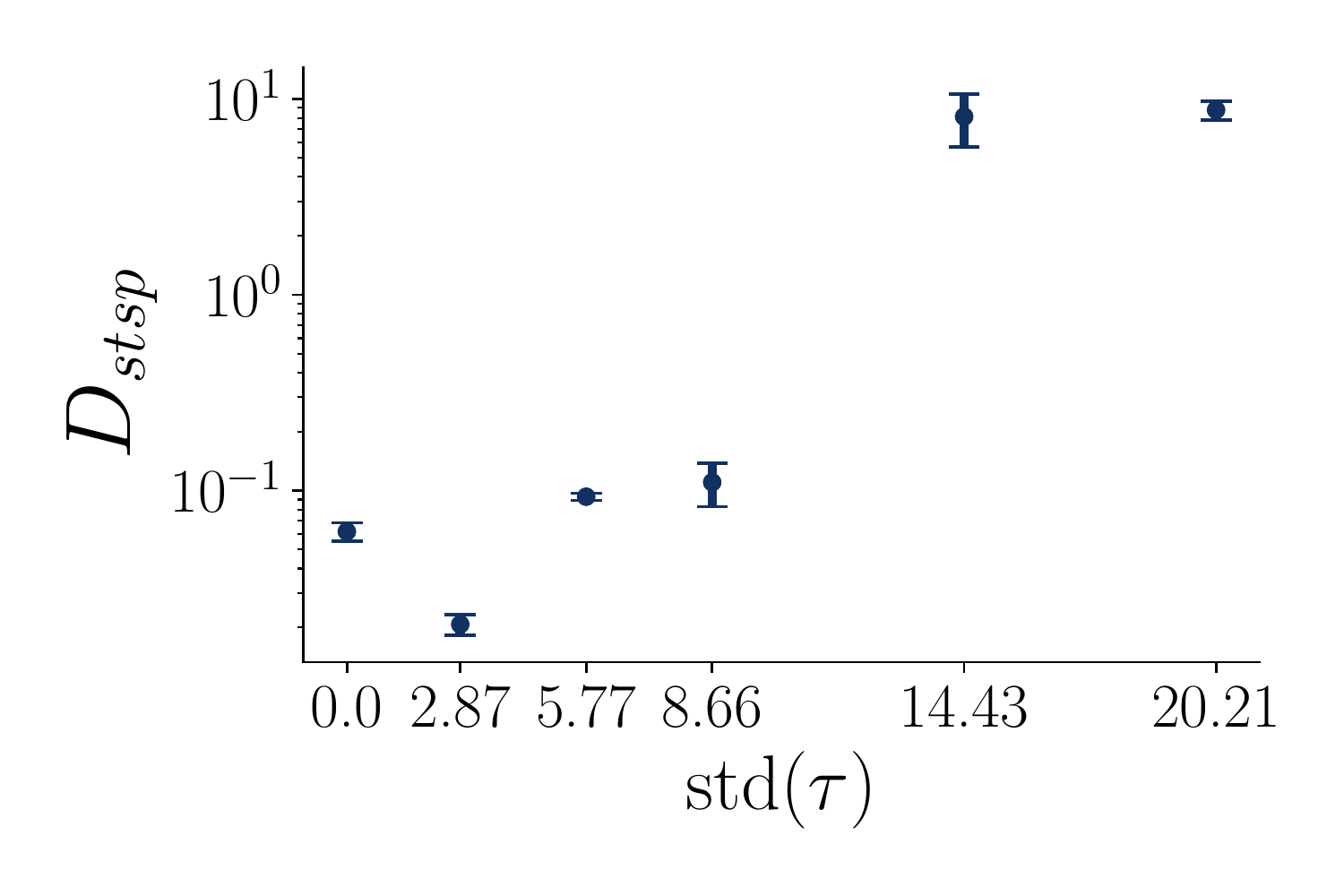}}
\vspace{-0.3cm}
\caption{\small 
%\textcolor{blue}{
%Teacher forcing with intervals randomly chosen about the optimal mean, $\tau_{pred}$ (for $\sigma_\tau>***$ the interval becomes asymmetric, however, due to the lower bound at $\tau=1$).
Teacher forcing for LSTM with the learning interval $\tau$ drawn uniformly random around the optimal value ($\tau_{opt}=30$) with standard deviation std$(\tau)$ (for std$(\tau) >8.66$ the interval becomes asymmetric, however, due to the lower bound at $\tau=1$). As %$\sigma_\tau$
std$(\tau)$ is increased, performance generally degrades. A little jittering around the optimal interval $\tau_{pred}$ may potentially help, however (as more commonly observed in various machine learning procedures). 
%}
}\label{fig:random-sparseTF}
%-----
\normalsize
%------
\end{figure}
\pagebreak

\subsection{Sparsely forced BPTT}
\paragraph{Loss truncation} \label{supp:loss_trunc}
One implicit consequence of the teacher forcing, eqn. \eqref{eq:forcing}, is the interruption of the hidden-to-hidden connections at these time points. More specifically, if the system is forced at time $t  \in \mathcal{T}$, then there is no connection between $\vz_{t}$ and $\vz_{t+1}$, that is
\begin{align}\label{eq:supp-van-jac}
    \mJ_{t+1} = \frac{\partial \vz_{t+1}}{\partial \vz_{t}} = \frac{\partial RNN(\tilde{\vz}_{t})}{\partial \vz_{t}}=0.
\end{align}
To see how these vanishing Jacobians truncate the loss gradients w.r.t to some parameter $\theta$, let us focus on the loss gradients immediately after the forcing, 
\begin{align}\nonumber
    \frac{\partial \mathcal{L}_{t+1}}{\partial \theta} 
    &= \frac{\partial \mathcal{L}_{t+1}}{\partial \vz_{t+1}} \sum_{k=1}^{t+1} \frac{\partial \vz_{t+1}}{\partial \vz_k}\frac{\partial^{+} \vz_k}{\partial \theta}\\\nonumber
    &= \frac{\partial \mathcal{L}_{t+1}}{\partial \vz_{t+1}}(\frac{\partial^{+} \vz_{t+1}}{\partial\theta} +\sum_{k=1}^{t}  \underbrace{\frac{\partial\vz_{t+1}}{\partial \vz_k}}_{=0\, \text{, because of \eqref{eq:supp-van-jac}}}\frac{\partial^{+} \vz_k}{\partial\theta}) \\ \label{eq:supp-tr-loss-1}
    &=\frac{\partial \mathcal{L}_{t+1}}{\partial \vz_{t+1}}\frac{\partial^{+} \vz_{t+1}}{\partial \theta}. 
\end{align}
Eqn.\eqref{eq:supp-tr-loss-1} shows that sparsely forced BPTT implicitly truncates the loss gradients because it interrupts the hidden-to-hidden connection from $\vz_{t}$ to $\vz_{t+1}$ for $t \in \mathcal{T}$.  More generally, defining $\widetilde{t} := \max \{t^{\prime} \in \mathcal{T}: t^{\prime} \leq t \}$, the overall loss gradients are truncated to  
 \newcommand\myeq{\stackrel{\mathclap{\normalfont\mbox{tr.}}}{=}}
\begin{align}\nonumber
\frac{\partial \mathcal{L}}{\partial \theta} &=
\sum_{t=1}^T \frac{\partial \mathcal{L}_t}{\partial \vz_t} \sum_{k=1}^t \frac{\partial \vz_t}{\partial \vz_k}\frac{\partial^{+} \vz_k}{\partial \theta}
\\
&\myeq \sum_{t=1}^T \frac{\partial \mathcal{L}_t}{\partial \vz_t} \sum_{k=\widetilde{t}}^t \frac{\partial \vz_t}{\partial \vz_k}\frac{\partial^{+} \vz_k}{\partial \theta}.
\end{align}

%%%%%%%%%%%%%%%%%%%%%%%%%%%%%%%%%%%%%%%%%%%%%%%%%%%%%%%%%%%%%%%%%%%%%%%%%%
%%%%%%%%%%%%%%%%%%%%%%%%%%%%%%%%%%%%%%%%%%%%%%%%%%%%%%%%%%%%%%%%%%%%%%%%%%%%%%%

\end{document}